\definecolor{darkred}{RGB}{150,50,50}
\definecolor{brown}{RGB}{250,100,100}
\definecolor{green}{RGB}{000,150,100}
\definecolor{purple}{RGB}{250,000,180}
\def\red{\color{black}}
\def\Ninv{N^{-1}}
\def\sumiN{\sum_{i=1}^N}
\def\Psc{\bar{\Psc}}
\def\nhalf{n^{1/2}}
\DeclareMathOperator*{\argmin}{arg\,min}
\DeclareMathAlphabet{\mathpzc}{OT1}{pzc}{m}{it}
\def\Usc{\mathcal{U}}
\def\bzero{{\bf 0}}
\def\bone{{\bf 1}}
\def\ba{{\mbox{\boldmath$a$}}}
\def\bb{{\bf b}}
\def\be{{\bf e}}
\def\br{{\bf r}}
\def\bu{{\bf u}}
\def\bv{{\bf v}}
\def\bw{{\bf w}}
\def\bx{{\bf x}}
\def\bz{{\bf z}}
\def\bA{{\bf A}}
\def\bB{{\bf B}}
\def\bC{{\bf C}}
\def\bD{{\bf D}}
\def\bF{{\bf F}}
\def\bG{{\bf G}}
\def\bH{{\bf H}}
\def\bI{{\bf I}}
\def\bJ{{\bf J}}
\def\bK{{\bf K}}
\def\bP{{\bf P}}
\def\bQ{{\bf Q}}
\def\bR{{\bf R}}
\def\bU{{\bf U}}
\def\bW{{\bf W}}
\def\bX{{\bf X}}
\def\bZ{{\bf Z}}
\def\thick#1{\hbox{\rlap{$#1$}\kern0.25pt\rlap{$#1$}\kern0.25pt$#1$}}
\def\balpha{\boldsymbol{\alpha}}
\def\bbeta{\boldsymbol{\beta}}
\def\bgamma{\boldsymbol{\gamma}}
\def\btheta{\boldsymbol{\theta}}
\def\bmu{\boldsymbol{\mu}}
\def\bnu{\boldsymbol{\nu}}
\def\bxi{\boldsymbol{\xi}}
\def\bDelta{\boldsymbol{\Delta}}
\def\bTheta{\boldsymbol{\Theta}}
\def\bXi{\boldsymbol{\Xi}}
\def\bSigma{\boldsymbol{\Sigma}}
\def\bPhi{\boldsymbol{\Phi}}
\def\bPsi{\boldsymbol{\Psi}}
\def\smbalpha{\boldsymbol{{\scriptstyle{\alpha}}}}
\def\what{{\widehat w}}
\def\Dhat{{\widehat D}}
\def\What{{\widehat W}}
\def\mtilde{{\widetilde m}}
\def\bAhat{{\widehat \bA}}
\def\bUhat{{\widehat \bU}}
\def\bWhat{{\widehat \bW}}
\def\bQtilde{{\widetilde \bQ}}
\def\thetahat{{\widehat\theta}}
\def\pihat{{\widehat\pi}}
\def\rhohat{{\widehat\rho}}
\def\thetatilde{{\widetilde\theta}}
\def\bthetahat{{\widehat\btheta}}
\def\bSigmahat{{\widehat\bSigma}}
\def\smbalpha{\widehat{\smbalpha}}
\def\balphatilde{{\widetilde\balpha}}
\def\bgammatilde{{\widetilde\bgamma}}
\def\bthetatilde{{\widetilde\btheta}}
\def\bnutilde{{\widetilde\bnu}}
\def\hbar{\bar{ h}}
\def\Dbar{\bar{ D}}
\def\Bsc{{\cal B}}
\def\Dsc{{\cal D}}
\def\Gsc{{\cal G}}
\def\Isc{{\cal I}}
\def\Msc{{\cal M}}
\def\Psc{{\cal P}}
\def\Ssc{{\cal S}}
\def\Tsc{{\cal T}}
\def\Usc{{\cal U}}
\def\Vsc{{\cal V}}
\def\Wsc{{\cal W}}
\def\Ysc{{\cal Y}}
\def\Zsc{{\cal Z}}
\def\Dschat{\widehat{{\cal D}}}
\def\Tschat{\widehat{{\cal T}}}
\def\Wschat{\widehat{{\cal W}}}
\def\Isctilde{\widetilde{{\cal I}}}
\def\Msctilde{\widetilde{{\cal M}}}
\def\Tsctilde{\widetilde{{\cal T}}}
\def\transpose{{\sf \scriptscriptstyle{T}}}
\def\half{\frac{1}{2}}
\def\ninv{n^{-1}}
\def\nhalf{n^{\half}}
\def\E{\mbox{E}}
\def\var{\mbox{var}}
\def\trans{^{\transpose}}
\def\argmindum{\mathop{\mbox{argmin}}}
\def\argmin#1{\argmindum_{#1}}
\def\cov{\mbox{cov}}
\def\var{\mbox{var}}
\def\mybox#1{\vskip1mm \begin{center}
        \hspace{.0\textwidth}\vbox{\hrule\hbox{\vrule\kern6pt
\parbox{.9\textwidth}{\kern6pt#1\vskip6pt}\kern6pt\vrule}\hrule}
        \end{center} \vskip-5mm}
\def\lboxit#1{\vbox{\hrule\hbox{\vrule\kern6pt
      \vbox{\kern6pt#1\vskip6pt}\kern6pt\vrule}\hrule}}
\def\thickboxit#1{\vbox{{\hrule height 1mm}\hbox{{\vrule width 1mm}\kern6pt
          \vbox{\kern6pt#1\kern6pt}\kern6pt{\vrule width 1mm}}
               {\hrule height 1mm}}}
\def\Cbb{\mathbb{C}}
\def\Sbb{\mathbb{S}}
\def\Ubb{\mathbb{U}}
\def\fat#1{\hbox{\rlap{$#1$}\kern0.25pt\rlap{$#1$}\kern0.25pt$#1$}}
\def\sumiN{\sum_{i=1}^{N}}
\def\Ninv{N^{-1}}
\def\nsinv{n_s^{-1}}
\def\nsnhalf{n_s^{-\half}}
\def\sumsS{\sum_{s=1}^S}
\def\Dscr{\mathscr{D}}
\def\Fscr{\mathscr{F}}
\def\Lscr{\mathscr{L}}
\def\Uscr{\mathscr{U}}
\def\yscr{ {\mathcal Y}}
\def\yscrl{ {\mathcal Y}}
\def\supone{^{(1)}}
\def\suptwo{^{(2)}}
\def\supthree{^{(3)}}
\def\suponetrans{^{(1) \transpose}}
\def\suptwotrans{^{(2) \transpose}}
\def\var{\text{var}}
\def\bthetatilde{\widetilde{\btheta}}
\def\supcv{^{\mbox{\tiny cv}}}
\def\bthetabar{\overline{\btheta}}
\def\bgammabar{\overline{\bgamma}}
\def\bnubar{\overline{\bnu}}
\newtheorem{theorem}{Theorem}
\def\sI{\mbox{\tiny I}}
\def\sII{\mbox{\tiny II}}
\newtheorem{lemma}{Lemma}
\newtheorem{prop}{Proposition}
\newtheorem{cond}{Condition}
\newtheorem{remark}{Remark}
\newcommand*{\addFileDependency}[1]{% argument=file name and extension
  \typeout{(#1)}
  \@addtofilelist{#1}
  \IfFileExists{#1}{}{\typeout{No file #1.}}
}
\newcommand*{\myexternaldocument}[1]{%
    \externaldocument{#1}%
    \addFileDependency{#1.tex}%
    \addFileDependency{#1.aux}%
}
\def\SL{\mbox{\tiny SL}}
\def\SSL{\mbox{\tiny SSL}}
\def\DR{\mbox{\tiny DR}}
\def\intri{\mbox{\tiny intri}}
\def\subSL{_{\SL}}
\def\subSSL{_{\SSL}}
\def\subDR{_{\DR}}
\def\subintri{_{\intri}}
\def\subplain{_{\SSL}}
\def\Yscbar{\bar{\Ysc}}
\def\Ysc{\mathcal{Y}}
\def\Tsctilde{\widetilde{\Tsc}}
\def\e{\boldsymbol{e}}
\def\bXi{\boldsymbol{\Xi}}
\def\bDelta{\boldsymbol{\Delta}}
\def\r{\boldsymbol{r}}
\def\balpha{\boldsymbol{\alpha}}
\def\balphatilde{\widetilde{\boldsymbol{\alpha}}}
\def\proj{{\rm Proj}}
\def\bb{\boldsymbol{b}}
\def\ba{\boldsymbol{a}}
\def\supvert{^{\perp}}
\def\bxi{\boldsymbol{\xi}}
\def\bz{\mathbf{z}}
\def\bvarphi{\boldsymbol{\varphi}}
\def\bbf{\mathbf{f}}
\def\P{{\rm P}}
\begin{document}

\footnotetext[1]{Jessica Gronsbell is an Assistant Professor in the Department of Statistical Sciences, University of Toronto, Toronto, ON  M5S 3G3, CA (Email: j.gronsbell@utoronto.ca). Molei Liu is a Ph.D. student in the Department of Biostatistics, Harvard University, Boston, MA 02115, US. Lu Tian is an Associate Professor, Department of Biomedical Data Science, Stanford University, Palo Alto, California 94305, US. Tianxi Cai is a Professor, Department of Biostatistics, Harvard University, Boston, MA 02115, CA.  This research was supported by grants F31-GM119263, T32-NS048005, and R01HL089778  from the National Institutes of Health and RGPIN-2021-03734 from the Natural Sciences and Engineering Research Council.} %//*Corresponding author}
\footnotetext[2]{The first two authors are equal contributors to this work.}

    \title{Efficient Evaluation of Prediction Rules in Semi-Supervised Settings under Stratified Sampling}
    \date{}
     \author{Jessica Gronsbell, Molei Liu, Lu Tian and Tianxi Cai$^{1,2}$ }
    \maketitle
    \thispagestyle{empty}

\begin{abstract}
\vspace{0.2in}
\noindent
In many contemporary applications, large amounts of unlabeled data are readily available while labeled examples are limited. There has been substantial interest in semi-supervised learning (SSL) which aims to leverage unlabeled data to improve estimation or prediction. However, current SSL literature focuses primarily on settings where labeled data is selected {\red{uniformly at random}} from the population of interest. {\red{Stratified sampling}}, while posing additional analytical challenges, is highly applicable to many real world problems. Moreover, no SSL methods currently exist for estimating the prediction performance of a fitted model when the labeled data is not selected {\red{uniformly at random}}. In this paper, we propose a two-step SSL procedure for evaluating a prediction rule derived from a working binary regression model based on the Brier score and overall misclassification rate under stratified sampling. In step I, we impute the missing labels via weighted regression with nonlinear basis functions to account for {\red{stratified sampling}} and to improve efficiency. In step II, we augment the initial imputations to ensure the consistency of the resulting estimators regardless of the specification of the prediction model or the imputation model. The final estimator is then obtained with the augmented imputations. We provide asymptotic theory and numerical studies illustrating that our proposals outperform their supervised counterparts in terms of efficiency gain. Our methods are motivated by electronic health record (EHR) research and validated with a real data analysis of an EHR-based study of diabetic neuropathy.

\vspace{0.1in}
\noindent{\bf{Keywords:}} Semi-Supervised Learning; Stratified Sampling;  Model Evaluation; Risk Prediction. 
\end{abstract}
\noindent

%%%%%%%%%%%%%%%%%%%%%%%%%%%%%%%%%%%
%%%%%%%%%%% INTRODUCTION %%%%%%%%%%%%%%%
%%%%%%%%%%%%%%%%%%%%%%%%%%%%%%%%%%%

\newpage
\doublespace
\allowdisplaybreaks

\section{Introduction}
Semi-supervised learning (SSL) has emerged as a powerful learning paradigm to address big data problems where the outcome is cumbersome to obtain and the predictors are readily available  \citep{chapelle2009semi}. Formally, the SSL problem is characterized by two sources of data:  (i) a relatively small  sized labeled dataset $\Lscr$ with $n$ observations on the  outcome $y$ and the predictors $\bx$ and (ii) a much larger unlabeled dataset $\Uscr$ with $N\gg n$ observations on only $\bx$.  A promising application of SSL and the motivation for this work is in electronic health record (EHR) research. EHRs have immense potential to serve as a major data source for biomedical research as they have generated extensive information repositories on representative patient populations \citep{murphy2009instrumenting, kohane2011using, wilke2011emerging}.  Nonetheless, a primary bottleneck in recycling EHR data for secondary use is to accurately and efficiently extract patient level disease phenotype information  \citep{sinnott2014improving, liao2015development}. Frequently, true phenotype status is not well characterized by disease-specific billing codes.  For example, at Partner's Healthcare, only 56\% of patients with at least 3 International Classification of Diseases, Ninth Revision (ICD9) codes for rheumatoid arthritis (RA) have confirmed RA after manual chart review \citep{liao2010electronic}.  More accurate EHR phenotyping has been achieved by training a prediction model based on a number of features including billing codes, lab results and mentions of clinical terms in narrative notes extracted via natural language processing (NLP) \citep[e.g.]{liao2013autoantibodies,  xia2013modeling, ananthakrishnan2013improving}.  The model is traditionally trained and evaluated using a small amount of labeled data obtained from manual medical chart review by domain experts.  SSL methods are particularly attractive for developing such models as they leverage unlabeled data to achieve higher estimation efficiency than their supervised counterparts.  In practice, this increase in efficiency can be directly translated into requiring fewer chart reviews without a loss in estimation precision.

%SSL procedures are particularly attractive as they leverage large amounts of unlabeled data to yield higher estimation efficiency than their supervised counterparts for a fixed amount of labeled data. In practice, this result can be translated into requiring  fewer  labeled examples without the cost of increased variability.   

%It is of high practical utility to construct SSL procedures for developing such models that leverage unlabeled data to achieve higher estimation efficiency than their supervised counterparts and in turn require fewer labeled examples without the cost of increased variability.
%can directly translate into a need for a smaller number of labels.

In the EHR phenotyping setting, it is often infeasible to select the labeled examples {\red{uniformly at random}}, either due to practical constraints or due to the nature of the application.  For example, it may be necessary to oversample individuals for labeling with a particular billing code or diagnostic procedure for a rare disease to ensure that an adequate number of cases are available for model estimation.  In some settings, training data may consist of a subset of individuals selected {\red{uniformly at random}} together with a set of registry patients whose phenotype status is confirmed through routine collection.  Stratified sampling is also an effective strategy when interest lies in simultaneously characterizing multiple phenotypes. One may oversample patients with at least one billing code for several rare phenotypes and then perform chart review on the selected patients for all phenotypes of interest.  Though it is critical to account for the aforementioned sampling mechanisms to make valid statistical inference, it is non-trivial in the context of SSL since the fraction of subjects being sampled for labeling is near zero. The challenge is further amplified when the fitted prediction models are potentially misspecified. 

Existing SSL literature primarily concerns the setting in which $\Lscr$ is a {\red{uniform random sample}} from the underlying pool of data and thus the missing labels in $\Uscr$ are missing completely at random (MCAR) \citep{wasserman2008statistical}.  In this setting, a variety of methods for classification have been proposed including generative modeling \citep{castelli1996relative, jaakkola1999exploiting}, manifold regularization \citep{belkin2006manifold, niyogi2008manifold} and graph-based regularization  \citep{belkin2004semi}.  While making use of both $\Uscr$  and $\Lscr$  can improve estimation, in many cases SSL is outperformed by supervised learning (SL) using only $\Lscr$ when the assumed models are incorrectly specified \citep{castelli1996relative, jaakkola1999exploiting, corduneanu2001stable, cozman2002unlabeled, cozman2003semi}.  As model misspecification is nearly inevitable in practice, recent work has called for `safe' SSL methods that are always at least as efficient as the SL counterparts.  For example, several authors have considered safe SSL methods for discriminative models based on density ratio weighted maximum likelihood \citep{sokolovska2008asymptotics,kawakita2013semi,kawakita2014safe}.  Though the true density ratio is 1 in the MCAR setting, the efficiency gain is achieved through estimation of density ratio weight, a statistical paradox previously observed in the missing data literature \citep{robins1992estimating, robins1994estimation}.  More recently, \cite{krijthe2016projected} introduced a SSL method for least squares classification that is guaranteed to outperform SL. \cite{chakrabortty2018efficient} proposed an adaptive imputation-based SSL approach for linear regression that also outperforms supervised least squares estimation. It is unclear, however, whether these methods can be extended to accommodate additional loss functions.  Moreover, none of the aforementioned methods are applicable to settings where the labeled data is not a {\red{uniform random sample}} from the underlying data such as the stratified sampling design.  Additionally, the focus of existing work has been on the estimation of prediction models, rather than the estimation of model performance metrics.  \cite{gronsSSL2017} recently proposed a semi-supervised procedure for estimating the receiver operating characteristic parameters, but this method is similarly limited to the standard MCAR setting.    

%In the related setting of positive unlabeled learning, \cite{claesen2015assessing} considered model evaluation with unlabeled data and labeled examples from a single class. These procedures are similarly limited. %{\red Unlike traditional imputation methods handling missing data, we don't require that the regression model used for imputing missing labels is correctly specified} \citep{white2011multiple}. 
%Some the existing generative model based SSL approaches \citep{cozman2003semi,fujino2005hybrid} can be viewed as imputation but they rely on assumptions on the generative mechanism of the covariates $\bX$ and can perform poorly when the assumed model fails to hold. Also, there has been theoretically justified imputation based idea for SSL when the sampling is assumed to be completely random \citep{chakrabortty2018efficient,gronsSSL2017}.  

This paper addresses these limitations through the development of an efficient SS estimation method for model performance metrics that is robust to model misspecification in the presence of stratified sampling. Specifically, we develop an imputation based procedure to evaluate the prediction performance of a potentially misspecified binary regression model.  To the best of our knowledge, the proposed method is the first SSL procedure that provides efficient and robust estimation of prediction performance measures under stratified sampling. We focus on two commonly used error measurements, the overall misclassification rate (OMR) and the Brier score. The proposed method involves two steps of estimation.  In step I, the missing labels are imputed with a weighted regression with nonlinear basis functions to account for stratified sampling and to improve efficiency. In step II, the initial imputations are augmented to ensure the consistency of the resulting estimators regardless of the specification of the prediction model or the imputation model.  Through theoretical results and numerical studies, we demonstrate that the SS estimators of prediction performance are (i) robust to the misspecification of the prediction or imputation model and (ii) substantially more efficient than their SL counterparts.  We also develop an ensemble cross-validation (CV) procedure to adjust for overfitting and a perturbation resampling procedure for variance estimation.  

The remainder of this paper is organized as follows. In Section \ref{sec-prelim}, we specify the data structure and problem set-up. We then develop the estimation and bias correction procedure for the accuracy measures in Sections \ref{sec:est-acu} and \ref{sec-cv}. Section \ref{sec:asym} outlines the asymptotic properties of the estimators and section \ref{sec-resample} introduces the perturbation resampling procedure for making inference.  Our proposals are then validated through a simulation study in Section \ref{sec-sim} and a real data analysis of an EHR-based study of diabetic neuropathy is presented in Section \ref{sec-data}. We conclude with additional discussions in Section \ref{sec-conc}.  
% In the first step, the missing outcomes are imputed using a flexible training of $\Lscr$ that accounts for the nonrandom sampling mechanism and that is robust to model misspecification.   proceed with estimation using the imputations and $\Uscr$.

%%%%%%%%%%%%%%%%%%%%%%%%%%%%%%%%%%%%%
%%%%%%%%%%% DATA STRUCTURE %%%%%%%%%%%%%%%
%%%%%%%%%%%%%%%%%%%%%%%%%%%%%%%%%%%%%

\section{Preliminaries}\label{sec-prelim}
\subsection{Data Structure}\label{sec-data-struc}
Our interest lies in evaluating a prediction model for a binary phenotype $y$ based on a predictor vector  $\bx =(1, x_1, \dots, x_p) \trans$ for some fixed $p$. The underlying full data consists of $N=\sumsS N_s$ independent and identically distributed random vectors 
$$\Fscr = \{ \bF_i = (y_i, \bu_i\trans)\trans \}_{i = 1}^{N}$$
where $\bu_i  = (\bx_i \trans, \Ssc_i)\trans$, $\Ssc_i \in \{1,2, \dots, S\}$ is a discrete stratification variable that defines a fixed number of strata $S$ for sampling, and $N_s=\sumiN I(\Ssc_i=s)$ is the sample size of stratum $s$.  Throughout, we let $\bF_0 = (y_0, {\bx_0}\trans, \Ssc_0)\trans$ be a future realization of $\bF$.  

Due to the difficulty in ascertaining $y$, {\red{a small uniform random sample is obtained from each stratum and labeled with outcome information}}. The observable data therefore consists of 
$$\Dscr = \{ \bD_i =  (y_iV_i, \bu_i \trans, V_i )\trans\}_{i = 1}^{N}$$ 
where $V_i \in \{0,1\}$ indicates whether $y_i$ is ascertained.  We let 
$$P(V_i = 1 \mid \Fscr) = \pihat_{\Ssc_i}, \quad \pihat_s = n_{s}/N_{s} \quad \mbox{and} \quad n_s = \sum_{j= {1}}^{N}I(\Ssc_j = s)V_j.$$ 
Without loss of generality, we suppose that the first $n= \sumsS n_s$ subjects are labeled and $\{n_s, s=1,..., S\}$ are specified by design. We assume that $$\rhohat_{1s} = n_s /n \overset{p}\to \rho_{1s} \in (0, 1)  \quad \mbox{and} \quad \rhohat_s = N_s/N \overset{p}\to \rho_s \in (0,1)$$ 
as $n$ and $N\to \infty$ respectively.  This ensures that 
$$\hat\pi_s/\hat \pi_t \overset{p}\to  (\rho_{1s}\rho_t)/(\rho_s\rho_{1t}) \in (0, 1)$$ 
as $n\rightarrow \infty$ for any pair of $s$ and $t$ {\red{\citep{mirakhmedov2014edgeworth}}}. As in the standard SS setting, we further assume  $\max_s \pihat_s\overset{p}\to 0$ as $n \to \infty$ {\red{\citep{chakrabortty2018efficient, zhang2019semi}}}. {\red{This assumption distinguishes the current setting from (i) the familiar missing data setting where $\pihat_s$ is bounded above 0 and (ii) standard SSL under uniform random sampling as $y$ is MCAR conditional on $\Ssc$ (i.e.\ $V \perp (y, \bx) | \Ssc$)  under stratified sampling}}.

\subsection{Problem Set-Up}
To predict $y_0$ based on $\bx_0$, we fit a {\em working} regression model 
\begin{equation}
{\rm P} (y  = 1 \mid \bx) = g(\btheta\trans\bx)  \label{model}
\end{equation}
where $\btheta = (\theta_0, \theta_1, \dots, \theta_{p})\trans$ is an unknown vector of regression parameters and $g(\cdot): (-\infty,\infty) \to (0,1)$ is a specified, smooth monotone function such as the expit function. The target model parameter, $\bthetabar$, is the solution to the estimating equation
\[
\bU( \btheta) = {\rm E}\left[ \bx \{ y -g(\btheta \trans\bx) \} \right] = \bzero .
\]
We let the predicted value for $y_0$ be $\Ysc(\bthetabar \trans \bx_0)$ for some function $\Ysc$.  In this paper, we aim to obtain SS estimators of the prediction performance of $\Ysc(\bthetabar \trans \bx_0)$ quantified by the Brier score
$$
\Dbar_1 = {\rm E} [\{y_0 - \yscr_1(\bthetabar  \trans \bx_0) \} ^2] \quad \mbox{with} \quad  \yscr_1(x) =   g(x), $$
and the overall misclassification rate (OMR) 
 $$\Dbar_2={\rm E} [\{y_0 - \yscr_2(\bthetabar\trans \bx_0) \}^2]  \quad \mbox{with}\quad \yscr_2(x) =  I\{ g(x) > c\} \quad \mbox{for some specified constant $c$.}$$
 %specified to balance the false positive and false negative rates. %(i) develop an SSL estimator for $\bthetabar$ and (ii) 
{\red{We focus on these two metrics as the convey distinct information about the performance of the prediction model.  The OMR summarizes the overall discrimination capacity of the model while the Brier score summarizes the calibration of the model. More complete discussions regarding assessment of model performance can be found in \cite{hand1997construction, hand2001measuring, gneiting2007strictly} and \cite{gerds2008performance}.}}

To simplify presentation, we generically write 
$$\Dbar = D(\bthetabar) \quad \mbox{with} \quad D(\btheta) =  {\rm E} [ d\{y_0, \yscr(\btheta  \trans \bx_0) \} ] ,$$
where $d(y,z) = (y - z)^2$, $\yscr(\cdot) = \yscr_1(\cdot)$ for Brier score, and $\yscr(\cdot) = \Ysc_2(\cdot)$ for the OMR.  We will construct a SS estimator of $D(\bthetabar)$ to improve the statistical efficiency of its SL counterpart, $\Dhat\subSL(\bthetahat\subSL)$, where 
\begin{align*}
\mbox{$\bthetahat\subSL$ solves \ } \bU_n( \btheta)= \frac{1}{N}\sum_{i = 1}^N \what_i \bx_i \{ y_i -g(\btheta \trans\bx_i) \} = \bzero ,\quad\Dhat\subSL( \btheta) = \frac{1}{N}\sum_{i = 1}^N \what_i d\{y_i, \yscr(\btheta\trans \bx_i)\} ,
\end{align*}
and the weights $\what_i = V_i/\pihat_{\Ssc_i}$ account for the stratified sampling with $\sumiN \what_i = N$.  {\red{Since $\what_i \overset{p}\to \infty$ for those with $V_i=1$, standard M-estimation theory cannot be directly applied to establish the asymptotic behavior of the SL estimators.}} We show in Appendix \ref{sec: asym-thetaSL} that $\bthetahat\subSL$ is a root-$n$ consistent estimator for $\bthetabar$ and derive the asymptotic properties of $\Dhat\subSL( \bthetahat\subSL)$ in Appendix \ref{sec: asym-DSL}.  {\red{We also note that throughout the article we use the subscripts $n$ or $N$ to index estimating equations to clarify if they are computed with the labeled or full data, respectively.}}

%%%%%%%%%%%%%%%%%%%%%%%%%%%%%%%%%%%%%%
%%%%%%%%%%% ESTIMATION %%%%%%%%%%%%%%%
%%%%%%%%%%%%%%%%%%%%%%%%%%%%%%%%%%%%%%

\section{Estimation Procedure}\label{sec:est-acu}
Our approach to obtaining a SS estimator of $D(\bthetabar)$ proceeds in two steps.  First, the missing outcomes are imputed with a flexible model to improve statistical efficiency.  Next, the imputations are augmented so that the resulting estimators are consistent for $D(\bthetabar)$ regardless of the specification of the prediction model or the imputation model.  The final estimator of the accuracy measure is then estimated using the full data and the augmented imputations.  This estimation procedure is detailed in the subsequent sections.  

We comment here that the initial imputation step allows for construction of a simple and efficient SS estimator of $\bthetabar$.  As efficient estimation of $\bthetabar$ may not be of practical utility in the prediction setting, we keep our focus on accuracy parameter estimation and defer some of the technical details of model parameter estimation to the Appendix.  However, we do note that the estimator of $D(\bthetabar)$ inherently has two sources of estimation variability.  The dominating source of variation is from estimating the accuracy measure itself while the second source is from the estimation of the regression parameter.  Therefore, by leveraging a SS estimator of $\bthetabar$ in estimating $D(\bthetabar)$, we may further improve the efficiency of our SS estimator of the accuracy measure.  These statements are elucidated by the influence function expansions of the SS and SL estimators of $D(\bthetabar)$ presented in Section \ref{sec:asym}.    

\subsection{Step 1: Flexible imputation}\label{sec:flex:imp}
We propose to impute the missing $y$ with an estimate of $m(\bu) = P(y = 1|\bu)$.  The purpose of the imputation step is to make use of $\Uscr$ as it essentially characterizes the covariate distribution due to its size.  The accuracy metrics provide measures of agreement between the true and predicted outcomes and therefore depend on the covariate distribution.  We thus expect to decrease estimation precision by incorporating $\Uscr$ into estimation.  In taking an imputation-based approach,  we rely on our estimate of $m(\bu)$ to capture the dependency of $y$ on $\bu$ in order to glean information from $\Usc$.  While a fully nonparametric method such as kernel smoothing allows for complete flexibility in estimating $m(\bu)$, smoothing generally does not perform well with moderate $p$ due to the curse of dimensionality  \citep{kpotufe2010curse}.  To overcome this challenge and allow for a rich model for $m(\bu)$, we incorporate some parametric structure into the imputation step via basis function regression.

 %and then estimating $\bthetabar$ with the imputed outcomes of all observations. 
 %We include the stratification variable $\Ssc$ in the imputation model since it is readily available along with $\bx$ and may improve the imputation if $y$ dependents on $\Ssc$ given $\bx$. 
%%We first construct a robust imputation-based SSL estimator, $\bthetahat \subSSL$, for $\bthetabar$ such that
%(i) $\bthetahat \subSSL \overset{p}\to \bthetabar$  and (ii) $\bthetahat \subSSL$ is at least as efficient as $\bthetahat \subSL$ under potential model misspecification.
%A fully non-parametric approach, for example with kernel smoothing, captures the dependency of $y$ on $\bu$ and thus the ability to gain from the information contained in $\Usc$.  However, such an approach may not perform well even with moderate $p$ due to the curse of dimensionality arising from the large number of model parameters inherently required by  non-parametric regression \citep{kpotufe2010curse}.
%%
%\lcomm{The term curse of dimensionality here is questioned by the reviewer. He ask whether our $p$ is fixed.}
%\lcomm{If I understood correctly:}
%\lcomm{Non-parametric method does not work well in practice due to limited sample size, even for fixed $p$.}
%%
%To overcome the curse of dimensionality while allowing for a rich model for $m(\bu)$, we incorporate some parametric structure into the imputation step via basis function regression. 

Let $\bPhi(\bu)$ be a finite set of basis functions {with fixed dimension} that includes $\bx$. We fit a {\em working model}
\begin{equation}\label{impmod}
{\rm P}(y = 1 \mid \bu) = g\{\bgamma \trans \bPhi(\bu) \}
\end{equation}
to $\Lscr$ and impute $y$ as $g\{\bgammatilde \trans \bPhi(\bu) \}$ where $\bgammatilde$ is the solution to 
 \begin{equation}\label{impee}
\bQtilde_n(\bgamma)=  \frac{1}{N}\sum_{i = 1}^N \what_i \bPhi_i [y_i - g\{\bgamma \trans \bPhi_i\} ]  - \lambda_n \bgamma = \bzero
 \end{equation}
$\bPhi_i = \bPhi(\bu_i)$ and $\lambda_n = o(n^{-\frac{1}{2}})$ is a tuning parameter to ensure stable fitting. Under Conditions 1-3 given in Section \ref{sec:asym}, we argue in Appendix \ref{sec: asym-DSL} that $\bgammatilde$ is a regular root-$n$ consistent estimator for the unique solution, $\bgammabar$, to $\bQ(\bgamma)= {\rm E} \left\{ \bPhi(\bu) [y - g\{\bgamma \trans \bPhi(\bu)\} ] \right\} = \bzero.$ We take our initial imputations as $\mtilde_{\sI}(\bu) = g\{\bgammatilde \trans \bPhi(\bu) \}$.

With $y$ imputed as $\mtilde_{\sI}(\bu)$, we may also obtain a simple SS estimator for $\bthetabar$, $\widecheck \btheta \subSSL$, as the solution to  
 \[
 \bUhat_{N}(\btheta)= \frac{1}{N}\sum_{i = 1}^{N} \bx_i \{ g(\bgammatilde \trans \bPhi_i) - g(\btheta \trans \bx_i)  \} =\bzero.
 \]
 The asymptotic behaviour of $\widecheck\btheta\subSSL$ is presented and compared with $\bthetahat\subSL$ in the Appendix.  When the working regression model (\ref{model}) is correctly specified, it is shown that $\bthetahat\subSL$ is fully efficient and $\widecheck\btheta\subSSL$ is asymptotically equivalent to $\bthetahat\subSL$. When the outcome model in (\ref{model}) is not correctly specified, but the imputation model in (\ref{impmod}) is correctly specified, we show in Appendix \ref{sec: asym-thetaSSL} that $\widecheck\btheta\subSSL$ is more efficient than $\bthetahat\subSL$. When the imputation model is also misspecified, $\widecheck\btheta\subSSL$ tends to be more efficient than $\bthetahat\subSL$, but the efficiency gain is not theoretically guaranteed.  We therefore obtain the final SS estimator, denoted as $\bthetahat\subSSL=(\thetahat_{\SSL,0},  \dots,\thetahat_{\SSL,p})\trans$, as a linear combination of $\bthetahat\subSL$ and $\widecheck \btheta\subSSL$ to minimize the asymptotic variance.  Details are provided in Appendices \ref{sec:ss-mod-param} and \ref{sec: inf-thetaSSL}.  

\subsection{Step 2: Robustness augmentation}\label{sec:aug}
To obtain an efficient SS estimator for $\Dbar = D(\bthetabar)$, we note that 
\begin{align}\label{True}
d(y, \Ysc) = y(1- 2\Ysc) + \Ysc^2
\end{align} 
is linear in $y$ when $y\in\{0,1\}$. With a given estimate of $m(\cdot)$, denoted by $\mtilde(\cdot)$, a SS estimate of $\Dbar$ can be obtained as
\[
 \frac{1}{N}\sum_{i=1}^{N} d\{\mtilde(\bu_i), \yscrl(\btheta \trans \bx_i) \} .
\]
However, $\mtilde(\bu)$ needs to be carefully constructed to ensure that the resulting estimator is consistent for $\Dbar$
 under possible misspecification of the imputation model.  Using the expression in (\ref{True}), a sufficient condition to guarantee consistency for $\Dbar$ is that
\begin{equation}
{\rm E} \left[\{y_0 -  \mtilde(\bu_0)\}\{1-2\yscrl(\bthetabar \trans \bx_0) \} \big| \mtilde(\cdot) \right] \overset{p}\to  0 \quad \mbox{as $n \to \infty$.} \label{condition}  
\end{equation}
This condition implies that ${\rm E}[d\{y_0, \yscrl(\bthetabar \trans \bx_0)\}-d\{\mtilde(\bu_0), \yscrl(\bthetabar \trans \bx_0)\}]\overset{p}\to 0$.
%From (\refeq{True}) we can see the dependence of $D(\btheta)$ on the distribution of $\bu$ and therefore the potential value in making use of $\Uscr$.  We also take from (\refeq{True}) our general approach of using $\Lscr$ to estimate $m(\bu)$ and then projecting to $\Uscr$ with the missing outcomes imputed by the estimated $m(\bu)$. Moreover, we allow for potential misspecification of the imputation model through a two-step procedure that involves an initial imputation followed by an augmentation step.  Specifically, in step I, we fit a flexible working generalized linear model
Unfortunately,  $\mtilde_{\sI}(\bu) = g\{\bgammatilde \trans \bPhi(\bu) \}$ does not satisfy (\ref{condition}) when (\ref{impmod}) is misspecified. 
To ensure that (\ref{condition}) holds regardless of the adequacy of the imputation model used for estimating the regression parameters, we augment the initial imputation $\mtilde_{\sI}(\bu)$ as
$$\mtilde_{\sII}(\bu; \btheta) = g\{\bgammatilde \trans \bPhi(\bu) + \bnutilde_{\btheta} \trans \bz_{\btheta} \}$$
where $\bz_{ \btheta}=[1,\yscrl(\btheta \trans \bx)]\trans$  and  $\bnutilde_{\btheta}$ is the solution to the IPW estimating equation
\begin{equation}
\bP_n(\bnu, \btheta) = \frac{1}{N} \sum_{i = 1}^N \what_i \{y_i -   g(\bgammatilde \trans \bPhi_i + \bnu \trans \bz_{i \btheta})  \}  \bz_{i \btheta} = \bzero  \quad \mbox{for any given $\btheta$.}
\label{equ:mom:aug}
\end{equation}
We let $\bnubar_{\btheta}$ be the limiting value of $\bnutilde_{ \btheta}$ which solves the limiting estimation equation
\begin{equation}
 \bR(\bnu \mid \btheta)=
{\rm E} \left( \bz_{\btheta}[y - g\{\bgammabar \trans \bPhi(\bu)+\bnu\trans\bz_{\btheta}\} ] \right) = \bzero.
\label{eq:limesteq}
\end{equation}
This estimating equation is monotone in $\bz_{\btheta}$ for any $\btheta$ and thus $\bnubar_{\btheta}$ exists under mild regularity conditions. {\red{It also follows from (\ref{eq:limesteq}) that
(i) ${\rm E} \left([y - g\{\bar\bgamma \trans \bPhi(\bu)+\bnubar_{\btheta} \trans\bz_{\btheta}\} ] \right) = 0$ and (ii) ${\rm E} \left( \Ysc(\btheta \trans \bx)[y - g\{\bar\bgamma \trans \bPhi(\bu)+\bnubar_{\btheta} \trans\bz_{\btheta}\} ] \right) = 0 $
which ensure that the sufficiency condition in (\ref{condition}) is satisfied.}}  We thus construct a SS estimator of $D(\btheta)$ as
 \begin{align*}
 \Dhat\subSSL( \btheta) =  \frac{1}{N} \sum_{i =1}^{N} d\left\{\mtilde_{\sII}(\bu_i; \btheta),  \yscrl(\btheta\trans\bx_i) \right\}.
\end{align*} 

In Section \ref{sec:asym-D}, we present the asymptotic properties of $\Dhat\subSSL(\bthetahat\subSSL)$ and $\Dhat\subSSL(\widecheck\btheta\subSSL)$ and compare  $\Dhat\subSSL(\widecheck\btheta\subSSL)$ with its supervised counterpart, $\Dhat\subSL(\bthetahat\subSL)$. Similar to the SS estimation of $\bthetabar$, it is shown in Appendix \ref{sec: asym-DSSL} that the unlabeled data helps to reduce the asymptotic variance of $\Dhat\subSSL(\widecheck\btheta\subSSL)$. Specifically, $\Dhat\subSSL(\widecheck\btheta\subSSL)$ is shown to be asymptotically more efficient than $\Dhat\subSL(\bthetahat\subSL)$ when the imputation model is correct.  In practice, however, we may want to use $\Dhat\subSSL(\bthetahat\subSSL)$ instead of $\Dhat\subSSL(\widecheck\btheta\subSSL)$ to achieve improved finite sample performance. %just as using $\bthetahat\subSSL$ to estimate $\bthetabar$ in Section \ref{sec:est-log}.

\section{Bias Correction via Ensemble Cross-Validation}\label{sec-cv}
Similar to the supervised estimators of the prediction performance measures, the proposed plug-in estimator uses the labeled data for both constructing and evaluating the prediction model and is therefore prone to overfitting bias \citep{efron1986biased}.  $K$-fold cross-validation (CV) is a commonly used method to correct for such bias.  However, it has been observed that CV tends to result in overly pessimistic estimates of accuracy measures, particularly when $n$ is not very large relative to $p$ \citep{jiang2007comparison}.  Bias correction  methods such as the 0.632 bootstrap have been proposed to address this behavior \citep{efron1983estimating,efron1997improvements, fu2005estimating, molinaro2005prediction}.  Here, we propose an alternative ensemble CV procedure that takes a weighted sum of the apparent and $K$-fold CV estimators.

We first construct a $K$-fold CV estimator by randomly partitioning $\Lscr$ into $K$ disjoint folds of roughly equal size, denoted by $\{\Lscr_k, k = 1, ...,K\}$. Since $N$ is assumed to be sufficiently large, no CV is necessary for projecting to the full data.  For a given $k$, we use $\Lscr/ \Lscr_k$ to estimate $\bgammabar$ and $\bthetabar$, denoted as $\bgammatilde_{(\text{-}k)}$ and $\bthetahat_{(\text{-}k)}$, respectively.  The {\red{$n_k$}} observations in $\Lscr_k$ are used in the augmentation step to obtain $\bnutilde_{\bthetahat_{(\text{-}k)}, k}$, the solution to $\bP_{n_k}(\bnu, \bthetahat_{(\text{-}k)})=0$.  For the $k^{th}$ fold, we estimate the accuracy measure as 
\begin{equation*}
\Dhat_k(\bthetahat_{(\text{-}k)} ) =  N^{-1} \sum_{i = 1}^{N}   d\left\{\mtilde_{\sII, k}(\bu_i), \yscrl(\bthetahat_{(\text{-}k)}\trans\bx_i) \right\},
\end{equation*}
where $\mtilde_{\sII, k}(\bu_i) = g( \bgammatilde_{(\text{-}k)}\trans \bPhi_i + \bnutilde_{\bthetahat_{(\text{-}k)}, k}\trans \bz_{i \bthetahat_{(\text{-}k)} } )$, and take the final CV estimator as $\Dhat\subSSL\supcv= K^{-1}   \sum_{k = 1}^K\Dhat_k(\bthetahat_{(\text{-}k)} ).$  In practice, we suggest averaging over several replications of CV to remove the variation due to the CV partition.   We then obtain the weighted CV estimator with 
\[
\Dhat\subSSL^{\omega} = \omega \Dhat\subSSL + (1- \omega)  \Dhat\subSSL\supcv, \quad\mbox{where $\omega = K/(2K-1)$.}
\]
We may similarly obtain a CV-based supervised estimator, denoted by $\Dhat\subSL\supcv$, as well as the corresponding weighted estimator, $\Dhat\subSL^{\omega}$. Note that the fraction of observations from stratum $s$ in the $k$th fold, $\rhohat_{1s,k}$, deviates from $\rhohat_{1s}$ in the order of $O(\sqrt{n_s}/n)$. Although this deviation is asymptotically negligibile, it may be desirable to perform the $K$-fold partition within each strata to ensure that $\rhohat_{1s,k} = \rhohat_{1s}$ when $n_s$ is small or moderate.  

Using similar arguments as those given in \cite{tian2007model}, it is not difficult to show that $\nhalf(\Dhat\subSSL-\Dbar)$ and $\nhalf(\Dhat\subSSL\supcv-\Dbar)$ are first-order asymptotically equivalent. Thus, the ensemble CV estimator $\Dhat\subSSL^{\omega}$ reduces the higher order bias of $\Dhat\subSSL$ and $\Dhat\subSSL\supcv$, but has the same asymptotic distribution. Although the empirical performance is promising, it is difficult to rigorously study the bias  properties of $\Dhat\subSSL^{\omega}$ as the regression parameter doesn't necessarily minimize the loss function, $\Dhat(\btheta)$. {We provide a heuristic justification of the ensemble CV method in Appendix \ref{sec: asym-cvW} which assumes $\bthetahat$ minimizes $\Dhat(\btheta)$.}

%These weights aim to eliminate the first-order bias when $\bthetahat$ minimizes the estimated accuracy measure.  Although the proposed $\bthetahat\subSL$ and $\bthetahat\subSSL$ do not necessarily exactly optimize the corresponding estimated accuracy measures, they often serve as close approximations and empirically we find the proposed weights perform well.  
%%%%%%%%%%%%%%%%%%%%%%%%%%%%%%%%%
%%%%%%%%%%% INFERENCE %%%%%%%%%%%%%%%
%%%%%%%%%%%%%%%%%%%%%%%%%%%%%%%%%%

%%%%%%%%%%%%%%%%%%%%%%%%%%%%%%%%%%%%%%%
%%%%%%%%%%% Theoretical Results %%%%%%%%%%%%%%%
%%%%%%%%%%%%%%%%%%%%%%%%%%%%%%%%%%%%%%%

\section{Asymptotic Analysis}\label{sec:asym}
We next present the asymptotic properties of our proposed SS estimator of $\Dbar$.  To facilitate our presentation, we first discuss the properties of $\widecheck\btheta\subSSL$ as the accuracy parameter estimates inherently depend on the variability in estimating $\bthetabar$.  We then present our main result highlighting the efficiency gain of our proposed SS approach for accuracy parameter estimation. {\red{We conclude our theoretical analysis with two practical discussions of (i) intrinsic efficient estimation in the SS setting and (ii) optimal allocation in stratified sampling}}.  

For our asymptotic analysis, we let $\bSigma_1\succ\bSigma_2$ if $\bSigma_1-\bSigma_2$ is positive definite and $\bSigma_1\succeq\bSigma_2$ if  $\bSigma_1-\bSigma_2$ is positive semi-definite for any two symmetric matrices $\bSigma_1$ and $\bSigma_2$.  For any matrix $\mathbf{M}$ and vectors $\bv_1$ and $\bv_2$, $\mathbf{M}_{j.}$ represents the $j^{th}$ row vector, $\bv_1^{\otimes 2} = \bv_1 \bv_1 \trans$, and $\{\bv_1,\bv_2\}=(\bv_1\trans,\bv_2\trans)\trans$ is the vector concatenating $\bv_1$ and $\bv_2$. To establish our theoretical results, we {\red recall that $\rhohat_{1s}$ and $\rhohat_s$ converge to some fixed values $\rho_{1s}$ and $\rho_s$ in probability, as assumed in Section \ref{sec-data-struc}, and introduce} the following three conditions.

\begin{cond} 
The basis $\bPhi(\bu)$ contains $\bx$, has compact support, and is of fixed dimension.  The density function for $\bx$, denoted by $p(\bx)$, and ${\rm P}(y=1\mid\bu)$ are continuously differentiable in the continuous components of $\bx$ and $\bu$, respectively. There is at least one continuous component of $\bx$ with corresponding non-zero component in $\bthetabar$.  
\label{cond:1}
\end{cond}

\begin{cond}
The link function $g(\cdot)$ is continuously differentiable with derivative $\dot g(\cdot).$
\label{cond:3}
\end{cond}

\begin{cond}
(A) There is no vector $\bgamma$ such that  $P (\bgamma \trans \bPhi_1 > \bgamma \trans \bPhi_2 \mid y_1>y_2)=1$ and ${\rm E}\left[\bPhi^{\otimes 2}\dot{g}\{\bgammabar\trans \bPhi\}\right]\succ \bzero$. (B) There is a small neighborhood of $\bthetabar,$ $\bTheta=\{\btheta:\|\btheta-\bthetabar\|_2<\delta\}$ for some $\delta>0$,  such that for any $\btheta\in \bTheta,$  there is no vector $\br$ such that  $P (\br \trans \{\bPhi_1,\bz_{1\btheta}\} > \br \trans \{\bPhi_2,\bz_{2\btheta}\} \mid y_1>y_2)=1$ and $E \left[\bz_{\btheta}^{\otimes 2}\dot{g}\{\bar{\bgamma}\trans \bPhi+\bnubar_{\btheta}\trans\bz_{\btheta}\}\right] \succ\bzero$. (C) ${\rm E}[\bx^{\otimes 2}\dot{g}(\bthetabar\trans\bx)]\succ \bzero$.
\label{cond:2}
\end{cond}

\begin{remark}
 Conditions 1-3 are commonly used regularity conditions in M-estimation theory and are satisfied in broad applications. Similar conditions can be found in \cite{tian2007model} and Section 5.3 of \cite{van2000asymptotic}. Condition 3(A) and 3(B) assume that there is no $\bgamma$ and $\bnu$ such that $\bgamma\trans\bPhi+\bnu\trans\bz_{\btheta}$ can perfectly separate the samples based on $y$. In our application of EHR data analysis, these conditions are typically satisfied as the outcomes of interest (i.e. disease status) do not perfectly depend on covariates such as billing codes, lab values, procedure codes, and other features extracted from free-text.  Similar to \cite{tian2007model}, Condition 3(A) ensures the existence and uniqueness of the limiting parameters $\bthetabar$ and $\bgammabar$ and Condition 3(B) ensures the existence and uniqueness of $\bnubar_{\btheta}$. 
\end{remark}

\subsection{Asymptotic Properties of $\widecheck\btheta\subSSL$}\label{sec:asm_theta}
The asymptotic properties of $\widecheck\btheta\subSSL$ are summarized in Theorem \ref {thm:1} and the justification is provided in Appendix \ref{sec: asym-thetaSSL}. 
\begin{theorem}
\label{thm:1}
Under Conditions \ref{cond:1}-\ref{cond:2}, $\widecheck \btheta\subSSL \overset{p}{\to} \bthetabar$, and 
\[
\Wschat\subSSL  =  \nhalf( \widecheck \btheta \subSSL - \bthetabar) = \nhalf\sum_{s=1}^S \rho_s\left( n_s^{-1}  \sum_{i=1}^N  V_i I(\Ssc_i = s) \be_{\SSL i} \right) + o_p(1) 
\]
which weakly converges to $N(\bzero, \bSigma\subSSL)$ where
\[
\bSigma\subSSL =  \sum_{s=1}^S \rho_s^2 \rho_{1s}^{-1} E\left\{\be_{\SSL i}^{\otimes 2} \mid \Ssc_i = s\right\} , 
\  \be_{\SSL i} = \bA^{-1}\bx_i \{y_i - g(\bgammabar \trans \bPhi_i) \},  \mbox{ and } \bA = {\rm E} \{ \bx_i^{\otimes 2} \dot{g}(\bthetabar \trans  \bx_i) \}.
\]
\end{theorem}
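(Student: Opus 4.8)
The plan is to view $\widecheck\btheta\subSSL$ as a two-stage Z-estimator---the first stage being the inverse-probability-weighted (IPW) fit $\bgammatilde$ and the second the full-data equation $\bUhat_N(\btheta)=\bzero$---and then to rewrite its linearization as a stratified survey average, so that a finite-population central limit theorem applies despite the diverging weights $\what_i$.

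First I would prove consistency. Since $\bgammatilde\overset{p}{\to}\bgammabar$ and $\bUhat_N(\btheta)=N^{-1}\sum_{i=1}^N\bx_i\{g(\bgammatilde\trans\bPhi_i)-g(\btheta\trans\bx_i)\}$ is a smooth average over the full sample, a uniform law of large numbers gives $\bUhat_N(\btheta)\overset{p}{\to}\bUbar(\btheta)=\E[\bx\{g(\bgammabar\trans\bPhi)-g(\btheta\trans\bx)\}]$ uniformly on $\bTheta$. Because $\bPhi$ contains $\bx$ (Condition 1), the $\bx$-block of the defining relation $\E[\bPhi\{y-g(\bgammabar\trans\bPhi)\}]=\bzero$ reads $\E[\bx\,g(\bgammabar\trans\bPhi)]=\E[\bx y]=\E[\bx\,g(\bthetabar\trans\bx)]$, so $\bUbar(\bthetabar)=\bzero$; uniqueness of the root follows from the monotonicity of $g$ together with $\bA=\E[\bx^{\otimes 2}\dot g(\bthetabar\trans\bx)]\succ\bzero$ (Condition 3(C)). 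A standard Z-estimation argument then yields $\widecheck\btheta\subSSL\overset{p}{\to}\bthetabar$.

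Next I would linearize. Taylor expanding $\bzero=\bUhat_N(\widecheck\btheta\subSSL)$ about $\bthetabar$, with Jacobian converging to $-\bA$, gives $\nhalf(\widecheck\btheta\subSSL-\bthetabar)=\bA^{-1}\nhalf\bUhat_N(\bthetabar)+o_p(1)$. Expanding $g(\bgammatilde\trans\bPhi_i)$ about $\bgammabar$ splits $\bUhat_N(\bthetabar)$ into the full-data average $N^{-1}\sum_i\bx_i\{g(\bgammabar\trans\bPhi_i)-g(\bthetabar\trans\bx_i)\}$ plus $\bC_0(\bgammatilde-\bgammabar)$, where $\bC_0=\E[\bx\,\dot g(\bgammabar\trans\bPhi)\bPhi\trans]$. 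The first piece has mean $\bUbar(\bthetabar)=\bzero$ and depends only on $\bu$, hence is $O_p(N^{-1/2})=o_p(n^{-1/2})$ since $n/N\to 0$. For the second piece I would insert $\bgammatilde-\bgammabar=\bB^{-1}\bQtilde_n(\bgammabar)+o_p(n^{-1/2})$ with $\bB=\E[\bPhi^{\otimes 2}\dot g(\bgammabar\trans\bPhi)]\succ\bzero$ (Condition 3(A)). The key simplification is that $\bPhi\supseteq\bx$ means $\bx=\bS\bPhi$ for a fixed selection matrix $\bS$, so $\bC_0=\bS\bB$ and therefore $\bC_0\bB^{-1}=\bS$. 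Consequently the $\bgamma$-estimation error collapses cleanly,
\[
\bC_0(\bgammatilde-\bgammabar)=\bS\,\bQtilde_n(\bgammabar)+o_p(n^{-1/2})=N^{-1}\sum_{i=1}^N\what_i\bx_i\{y_i-g(\bgammabar\trans\bPhi_i)\}+o_p(n^{-1/2}),
\]
the $\lambda_n$ term being negligible as $\lambda_n=o(n^{-1/2})$. Multiplying by $\bA^{-1}$ and collecting terms gives $\nhalf(\widecheck\btheta\subSSL-\bthetabar)=\nhalf N^{-1}\sum_i\what_i\be_{\SSL i}+o_p(1)$, and regrouping the IPW sum by strata, $N^{-1}\sum_i\what_i\be_{\SSL i}=\sum_{s}\rhohat_s\,(n_s^{-1}\sum_i V_iI(\Ssc_i=s)\be_{\SSL i})$ with $\rhohat_s\overset{p}{\to}\rho_s$, reproduces the stated expansion.

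Finally I would establish asymptotic normality and identify $\bSigma\subSSL$. Because $\E[\what_i-1\mid\Fscr]=0$ within each stratum, replacing $\what_i$ by $\what_i-1$ (legitimate since $N^{-1}\sum_i\be_{\SSL i}\overset{p}{\to}\E\,\be_{\SSL i}=\bzero$, contributing only $o_p(1)$ after scaling) exhibits the dominant term as a sum of conditionally centered, cross-strata independent sampling fluctuations; the stratum population means of $\be_{\SSL i}$ enter only through the weighted combination $\sum_s\rho_s\,\E[\be_{\SSL i}\mid\Ssc_i=s]=\E\,\be_{\SSL i}=\bzero$, so they do not perturb the centering of the limit. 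Within stratum $s$ the labels arise from simple random sampling of $n_s$ out of $N_s$ with $n_s/N_s\to 0$, so the finite-population correction $1-n_s/N_s\to1$; a stratified finite-population central limit theorem (in the spirit of the results of \citealp{mirakhmedov2014edgeworth}) then delivers joint asymptotic normality, and summing the independent stratum contributions, each scaled by $\rho_s^2\rho_{1s}^{-1}$, yields the covariance $\bSigma\subSSL=\sum_s\rho_s^2\rho_{1s}^{-1}\E\{\be_{\SSL i}^{\otimes 2}\mid\Ssc_i=s\}$. The main obstacle is precisely the one flagged in the set-up: the weights $\what_i\to\infty$ invalidate ordinary M-estimation asymptotics, so the crux is to recast the dominant term as a stratified Horvitz--Thompson-type average and control it by survey-sampling limit theory in the regime $n_s/N_s\to 0$, while verifying that all genuinely full-data averages are negligible at rate $\sqrt{n/N}$.
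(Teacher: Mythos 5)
Your proposal follows essentially the same route as the paper's Appendix E: consistency via the ULLN together with the identity ${\rm E}[\bx\{y-g(\bgammabar\trans\bPhi)\}]=\bzero$ (valid because $\bPhi$ contains $\bx$), a Taylor expansion in which the genuinely full-data average is $O_p(N^{-1/2})=o_p(n^{-1/2})$ so that the $\bgammatilde$-estimation error dominates, and the key cancellation $\bB\bC^{-1}\bPhi_i=\bx_i$, which the paper obtains as an exact weighted least-squares projection of the components of $\bx$ onto $\bPhi$ and you obtain equivalently via the selection matrix $\bS$ with $\bC_0=\bS\bB$. Your closing step via a stratified finite-population CLT with vanishing sampling fractions is a somewhat more careful rendering of the paper's appeal to the classical CLT on within-stratum i.i.d.\ labeled draws, but it yields the same influence-function expansion and the same limit, so the argument is essentially identical.
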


\begin{remark}
To contrast with the supervised estimator $\bthetahat\subSL$,  we show in Appendix \ref{sec: asym-thetaSL} that
\[
\Wschat\subSL  = \nhalf(\bthetahat\subSL - \bthetabar) %& =    \left(\sumin \what_i\right)^{-1} \left[ \nnhalf \sum_{i = 1}^n \what_i\bA^{-1}\bx_i \{y_i - g(\bthetabar \trans \bx_i)   \} \right]+ o_p(1)  \\
=  \nhalf\sum_{s=1}^S \rho_s \left( n_s^{-1} \sum_{i=1}^N V_iI(\Ssc_i = s) \be_{\SL i} \right)+ o_p(1),
\]
which weakly converges to $N(\bzero,\bSigma\subSL)$ where 
\[
\bSigma\subSL =  \sum_{s=1}^S \rho_s^2 \rho_{1s}^{-1} {\rm E}\left\{\be_{\SL i}^{\otimes 2} \mid \Ssc_i = s\right\},\quad \mbox{and}\quad
 \be_{\SL i}= \bA^{-1}\bx_i \{y_i - g(\bthetabar \trans \bx_i)\} .
\]
It follows that when the imputation model $P(y = 1 \mid \bu) = g\{\bgammabar \trans \bPhi(\bu)\}$ is correctly specified, $\bSigma\subSL \succeq \bSigma\subSSL$.  When $P(\bgammabar \trans \bPhi(\bu) \ne \bthetabar\trans\bx) > 0,$ we have that $\bSigma\subSL \succ \bSigma\subSSL$.
\end{remark}

\subsection{Asymptotic Properties of $\Dhat\subSSL(\widecheck\btheta\subSSL)$ and $\Dhat\subSSL(\bthetahat\subSSL)$}\label{sec:asym-D}
The asymptotic properties of $\Dhat\subSSL( \widecheck\btheta \subSSL)$ are summarized in Theorem \ref {thm:2} and the justification is provided in Appendix \ref{sec: asym-DSSL}. 
\begin{theorem}\label{thm:2}
Under Conditions \ref{cond:1}-\ref{cond:2}, $\Dhat\subSSL( \widecheck\btheta \subSSL)\overset{p}{\to} \Dbar$, and $\widecheck\Tsc\subSSL =  \nhalf\{\Dhat\subSSL( \widecheck\btheta \subSSL) - D( \bthetabar)\} $ is asymptotically Gaussian with mean zero and variance $\sigma^2\subSSL$ given in Appendix \ref{sec: asym-DSSL}. Also, $\widecheck\Tsc\subSSL$ is asymptotically equivalent to 
\[
\nhalf \sum_{s= 1}^S \rho_s \left( \ninv_s \sum_{i = 1}^N V_i I(\Ssc_i = s) \Big[ \{d(y_i, \Yscbar_i)  - d(m_{\sII,i} , \Yscbar_i) \}+ 
\dot \bD(\bthetabar)\trans\be_{\SSL i} \Big] \right) ,
\]
where $\Yscbar_i = \Ysc(\bthetabar\trans\bx_i)$, $m_{\sII, i} = g(\bgammabar \trans \bPhi_i  + \bnubar_{\bthetabar}\trans\bz_{i \bthetabar })$ is the imputation model based 
approximation to ${\rm P}(y=1 \mid \bu)$ and $\dot{\bD}(\btheta) = \partial D(\btheta)/\partial\btheta$.
\end{theorem}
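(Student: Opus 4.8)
The plan is to exploit the exact algebraic identity created by the augmentation step to rewrite $\Dhat\subSSL(\btheta)$ as an augmented inverse-probability-weighted (AIPW) statistic, and then to read off the influence function from the resulting doubly robust structure. The starting observation is that, because $\bnutilde_{\btheta}$ solves $\bP_n(\bnutilde_{\btheta},\btheta)=\bzero$ and $\bz_{\btheta}=[1,\yscrl(\btheta\trans\bx)]\trans$, the appropriate linear combination of the two moment equations gives $N^{-1}\sum_i\what_i\{y_i-\mtilde_{\sII}(\bu_i;\btheta)\}\{1-2\yscrl(\btheta\trans\bx_i)\}=0$ for every $\btheta$. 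Since $d(y,\Ysc)-d(m,\Ysc)=(y-m)(1-2\Ysc)$ by (\ref{True}), adding this vanishing term to the definition of $\Dhat\subSSL(\btheta)$ yields, exactly,
\[
\Dhat\subSSL(\btheta)=\frac1N\sum_{i=1}^N\Big[d\{\mtilde_{\sII}(\bu_i;\btheta),\yscrl(\btheta\trans\bx_i)\}+\what_i\big(d\{y_i,\yscrl(\btheta\trans\bx_i)\}-d\{\mtilde_{\sII}(\bu_i;\btheta),\yscrl(\btheta\trans\bx_i)\}\big)\Big].
\]
This AIPW representation is the workhorse: its population version does not depend on the imputation model, which is the source of the robustness asserted in Section \ref{sec:aug} and, as I use below, forces the first-order contribution of the nuisance estimates $(\bgammatilde,\bnutilde)$ to vanish.

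Next I would evaluate the expansion at the fixed value $\bthetabar$ and replace the fitted nuisance $\mtilde_{\sII}(\cdot;\bthetabar)$ by its limit $m_{\sII,i}=g(\bgammabar\trans\bPhi_i+\bnubar_{\bthetabar}\trans\bz_{i\bthetabar})$. The resulting error is $R=N^{-1}\sum_i(1-\what_i)\{\mtilde_{\sII}(\bu_i;\bthetabar)-m_{\sII,i}\}(1-2\Yscbar_i)$; since $\sum_i\what_i=N$ gives ${\rm E}[1-\what_i\mid\Fscr]=0$ stratum-by-stratum, a first-order Taylor expansion of $g$ exhibits $R$ as a product of the $O_p(\nnhalf)$ nuisance error and an $O_p(\nnhalf)$ empirical factor, hence $o_p(\nnhalf)$ — this is the quantitative form of double robustness under the blowing-up weights. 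What remains is $N^{-1}\sum_i d\{m_{\sII,i},\Yscbar_i\}$ plus $N^{-1}\sum_i\what_i\{d(y_i,\Yscbar_i)-d(m_{\sII,i},\Yscbar_i)\}$. The first average is over the $N$ unlabeled covariates only; the consistency condition (\ref{condition}) makes its mean equal to $D(\bthetabar)$, so it equals $D(\bthetabar)+O_p(N^{-\frac12})$ and contributes $o_p(1)$ after multiplication by $\nhalf$. For the second average, writing $\what_i/N=V_i\rhohat_{\Ssc_i}/n_{\Ssc_i}$ and $\rhohat_s\overset{p}\to\rho_s$ reorganizes it into $\sum_{s}\rho_s n_s^{-1}\sum_i V_iI(\Ssc_i=s)\{d(y_i,\Yscbar_i)-d(m_{\sII,i},\Yscbar_i)\}+o_p(\nnhalf)$, which is exactly the first bracketed term of the claimed influence function.

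It then remains to account for the plug-in of $\widecheck\btheta\subSSL$ in place of $\bthetabar$. I would write $\Dhat\subSSL(\widecheck\btheta\subSSL)-\Dhat\subSSL(\bthetabar)$ and argue that the total derivative of $\Dhat\subSSL(\btheta)$ in $\btheta$ converges to the deterministic vector $\dot{\bD}(\bthetabar)$: the explicit dependence through $\yscrl(\btheta\trans\bx)$ and $\bz_{\btheta}$ reproduces $\partial D(\btheta)/\partial\btheta$, while the implicit dependence through $\bnutilde_{\btheta}$ is annihilated by the Neyman orthogonality built into the AIPW form. Combining with the influence representation of $\nhalf(\widecheck\btheta\subSSL-\bthetabar)$ from Theorem \ref{thm:1} supplies the second bracketed term $\dot{\bD}(\bthetabar)\trans\be_{\SSL i}$. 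For the Brier score this is a genuine differentiation; for the OMR the loss $d\{y,\Ysc_2(\btheta\trans\bx)\}$ is a step function in $\btheta$, so I would instead establish stochastic equicontinuity of the empirical criterion and use the smoothness of the population map $\btheta\mapsto D(\btheta)$ — which holds because Condition \ref{cond:1} provides a continuous covariate that smooths the indicator — exactly as in \cite{tian2007model}.

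Assembling the three contributions gives $\widecheck\Tsc\subSSL=\nhalf\sum_{s}\rho_s(n_s^{-1}\sum_i V_iI(\Ssc_i=s)\psi_i)+o_p(1)$ with $\psi_i=\{d(y_i,\Yscbar_i)-d(m_{\sII,i},\Yscbar_i)\}+\dot{\bD}(\bthetabar)\trans\be_{\SSL i}$; consistency $\Dhat\subSSL(\widecheck\btheta\subSSL)\overset{p}\to\Dbar$ follows immediately since the right side is $O_p(\nnhalf)$. Asymptotic normality with the stated $\sigma^2\subSSL$ then comes from a stratified central limit theorem applied across the $S$ independent within-stratum averages, using ${\rm E}(\psi_i)=0$ (the consistency condition together with ${\rm E}(\be_{\SSL i})=\bzero$) to center the sum. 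I expect the genuine obstacle to be the interplay between the non-smoothness of the OMR loss and the diverging weights $\what_i\overset{p}\to\infty$: the usual Taylor/differentiation route is unavailable for $\Ysc_2$, so the expansion must be justified by empirical-process control of a weighted criterion whose summands are not uniformly bounded, which is precisely where the stratum-wise argument and the root-$n$ consistency established in the appendices must be invoked with care.
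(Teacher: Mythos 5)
Your proposal is correct and reaches the same influence function, but it organizes the key cancellation differently from the paper, and the comparison is worth recording. The paper's Appendix G proof Taylor-expands $\Dhat\subSSL(\btheta)$ in the nuisance parameters around $(\bgammabar,\bnubar_{\btheta})$, substitutes the influence expansions of $\nhalf(\bgammatilde-\bgammabar)$ and $\nhalf(\bnutilde_{\btheta}-\bnubar_{\btheta})$, and then verifies by explicit matrix algebra that $(\bG_{\btheta}+\bH_{\btheta}\bJ_{\btheta}^{-1}\bK_{\btheta})\bC^{-1}=\bzero$ and $\bH_{\btheta}\bJ_{\btheta}^{-1}\bz_{i\btheta}=1-2\yscrl(\btheta\trans\bx_i)$; in that route the entire labeled-data term $\{d(y_i,\Yscbar_i)-d(m_{\sII,i},\Yscbar_i)\}$ emerges from the asymptotic contribution of $\bnutilde_{\btheta}$, while the leading unlabeled average is killed by $N\gg n$. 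You instead use the exact finite-sample moment identity $N^{-1}\sum_i\what_i\{y_i-\mtilde_{\sII}(\bu_i;\btheta)\}\{1-2\yscrl(\btheta\trans\bx_i)\}=0$ (the $(1,-2)$ combination of the components of $\bP_n(\bnutilde_{\btheta},\btheta)=\bzero$) together with $d(y,\Ysc)-d(m,\Ysc)=(y-m)(1-2\Ysc)$ to rewrite $\Dhat\subSSL(\btheta)$ exactly as an AIPW statistic; the labeled-data term then appears explicitly as the IPW correction, and the nuisance estimates contribute only through the product remainder $N^{-1}\sum_i(1-\what_i)\{\mtilde_{\sII}(\bu_i;\bthetabar)-m_{\sII,i}\}(1-2\Yscbar_i)$, which is $o_p(\nnhalf)$ because both factors are $O_p(\nnhalf)$ after linearizing $g$ and noting that the weighted and unweighted empirical means of any fixed bounded function of $\bu$ agree to $O_p(\nnhalf)$. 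Your version makes the orthogonality structure transparent and avoids the matrix bookkeeping, at the cost of needing the linearization-plus-product argument to be carried out uniformly; the paper's version is more computational but dovetails with the expansions of $\bgammatilde$ and $\bnutilde_{\btheta}$ it has already derived. The remaining ingredients — the $O_p(N^{-1/2})=o_p(\nnhalf)$ treatment of the unlabeled average via condition (5), the stratified reorganization of $\what_i/N$, the handling of the plug-in of $\widecheck\btheta\subSSL$ by stochastic equicontinuity and smoothness of $D(\btheta)$ for the non-differentiable OMR loss, and the stratified CLT — coincide with the paper's. I see no gap; just be sure that in the final write-up the equicontinuity claim is established for the class indexed by $\btheta$ that includes the $\btheta$-dependence of $\bnubar_{\btheta}$ and $\bnutilde_{\btheta}$ (the paper does this via Lemma A1 and the Donsker arguments of Appendix D), since that is what licenses replacing the derivative of the non-smooth empirical criterion by $\dot\bD(\bthetabar)$.
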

\noindent
\begin{remark}
We also show that $\Tschat\subSSL =  \nhalf\{\Dhat\subSSL( \bthetahat \subSSL) - D( \bthetabar)\}$ is asymptotically equivalent to 
\[
\nhalf \sum_{s= 1}^S \rho_s \left( \ninv_s \sum_{i = 1}^N  V_iI(\Ssc_i = s) \Big[ \{d(y_i, \Yscbar_i)  - 
d(m_{\sII,i} , \Yscbar_i) \}+ 
\dot \bD(\bthetabar)\trans\{\bW    \be_{\SSL i} +  (\bI - \bW)  \be_{\SL i} \} \Big] \right) ,
\]
which is also asymptotically Gaussian with mean zero where $\bW$ is a diagonal matrix defined in Appendix \ref{sec: inf-thetaSSL}. %with each entry belongs to $[0,1]$}.
\end{remark}

%\{1- 2\Yscbar_i \}g(\bgammabar \trans \bPhi_i  + \bnubar_{\bthetabar} \trans\bz_{i \bthetabar })  + \yscrl^2(\bthetabar \trans \bx_i) \right]
\begin{remark}
As shown in Appendix \ref{sec: asym-DSL},
$\Tschat\subSL =  \nhalf\{\Dhat\subSL( \bthetahat \subSL) - D( \bthetabar)\}$ is asymptotically Gaussian with mean zero and variance $\sigma^2\subSL$ defined in Appendix \ref{sec: asym-DSL}. It is equivalent to
\begin{align*}
\nhalf  \sum_{s= 1}^S \rho_s \left(\ninv_s \sum_{i = 1}^N V_i I(\Ssc_i = s) \left[ \{d(y_i, \Yscbar_i) - D(\bthetabar)\} + \dot{\bD}(\bthetabar)\trans \be_{\SL i}  \right]\right).
\end{align*}
\end{remark}

We verify in Appendix \ref{sec: asym-DSSL} that when the imputation model is correctly specified,  the asymptotic variance of $\Dhat\subSSL(\widecheck\btheta\subSSL)$ is smaller than that of $\Dhat\subSL( \bthetahat \subSL)$ regardless of the specification of the working regression model in (\ref{model}). This is because the accuracy measures always depend on the marginal distribution of $\bx$ and the proposed SS approach leverages $\Usc$. Therefore $\Dhat\subSSL(\widecheck\btheta\subSSL)$ is asymptotically more efficient than $\Dhat\subSL(\bthetahat \subSL)$ even when model (\ref{model}) is correctly specified and $\bthetahat\subSL$ is fully efficient.

{\red{While we cannot theoretically guarantee that the SS estimator is more efficient than the supervised estimator under misspecification of the imputation model, the first and dominating term in the influence function expansion corresponds to the variability from estimating the accuracy measure.  Even when the imputation model is misspecified, it may still provide a close approximation to $P(y = 1\mid \bu)$ and therefore result in reduced variability relative to the supervised approach.  The second term of the influence function corresponds to the variability from estimation of the regression parameter.  As the SS estimator of the regression parameter is more efficient than its supervised counterpart under model misspecification, we also expect this term to have smaller variation than its supervised counterpart.  In our simulation studies, we evaluate the performance of our proposals under various model misspecifications to assess whether this heuristic justification holds up empirically.  We also further study this limitation from a theoretical viewpoint in the next section where we introduce a SS estimator with the intrinsic efficiency property from the semiparametric inference literate for comparison.}}

%To demonstrate this point rigorously, we show in Appendix \ref{sec: asym-DSSL} that:
%\begin{coro}
%Assume that Conditions \ref{cond:1}-\ref{cond:2} hold. Let $\Delta_{aVar}$ be the difference between the asymptotic variance of $\Tschat\subSL$ and $\widecheck\Tsc\subSSL$. When $P(y=1\mid\bu) = g\{\bgammabar \trans \bPhi(\bu)\}$, we have $\Delta_{aVar}>0$. \label{coro:1} \end{coro}
%When both models are incorrectly specified, we still expect the SS estimator to be more efficient than the SL estimator when the imputation model provides a close approximation to $P(y = 1| \bu)$. This is because the total variation in the accuracy parameter is typically dominated by the variation in estimating the accuracy measure and $\widecheck\btheta\subSSL$ is more efficient than the $\bthetahat\subSL$. Our simulation results support this claim.  

%If the risk model in (\ref{model}) is correctly specified and $c=0.5$ for $\Ysc_2$,when $\dot{\bD}(\bthetabar) = 0$ and the difference in asymptotic variance of the supervised and semi-supervised estimator,  $\Delta_{aVar}$, may be expressed as
%\begin{align*}
%\sum_{s= 1}^S \rho_s\rho_{1s}^{-1} {\rm E} \Bigg (   \left\{ d(m_{\sII,i} \Yscbar_i) - D(\bthetabar)    \right\}^2  + 2   (y_i - m_{\sII,i})(1-2\Yscbar_i)  \{d(m_{\sII,i}, \Yscbar_i) - D(\bthetabar) \} \Bigg | \Ssc_i  =s \Bigg )
%\end{align*}

\subsection{Intrinsic Efficient SS Estimation}\label{sec:asym-intrinsic}
{\red 
For simplicity, we begin our discussion of intrinsic efficient estimation focusing on estimation of the regression parameter.  Recall that the idea in Section \ref{sec:flex:imp} is to (i) solve $N^{-1}\sum_{i = 1}^N \what_i \bPhi_i [y_i - g\{\bgamma \trans \bPhi_i\} ]  - \lambda_n \bgamma = \bzero$ to obtain estimated coefficients $\widetilde\bgamma$ for imputation and then (ii) solve $N^{-1}\sum_{i = 1}^{N} \bx_i \{ g(\bgammatilde \trans \bPhi_i) - g(\btheta \trans \bx_i)  \} =\bzero$ to obtain the SS estimator, $\widehat\btheta\subplain$. By Theorem \ref{thm:1}, for any $\e\in\mathbb{R}^{p+1}\setminus\{\bzero\}$, the asymptotic variance of $n^\half(\e\trans\bthetahat\subplain-\e\trans\bthetabar)$ can be expressed as:
\begin{equation}
 \frac{1}{n}\sum_{i=1}^n \zeta_i(\e\trans\bA^{-1}\bx_i)^2\{y_i - g(\bgammabar \trans \bPhi_i) \}^2 ,  \label{equ:var:ctheta}
\end{equation}
where $\zeta_i=\sum_{s=1}^S\rho_s^2 \rho_{1s}^{-2}V_i I(\Ssc_i = s)$ for each $i\in\{1,2,...,N\}$.  When the imputation model ${\rm P}(y = 1 \mid \bu) = g(\bgamma \trans\bPhi)$ is misspecified, an alternative estimating equation for $\bgamma$ may be used to directly reduce the asymptotic variance of the resulting SS estimator.  Specifically, for a fixed $\bPhi$, we may find the estimating equation for $\bgamma$ that leads to the lowest asymptotic variance of the estimator for $\e\trans\bar\btheta$, a property referred to as ``intrinsic efficiency" in the semiparametric inference literature \citep{tan2010bounded}. We briefly propose estimation procedures for an estimator achieving this property with potential to improve upon our original proposal under potential misspecification of the imputation model.

To directly minimize the asymptotic variance of the SS estimator of $\e\trans\bar\btheta$ given by (\ref{equ:var:ctheta}), we obtain the estimated coefficients for the imputation model with
\begin{equation}
\begin{split}
\widetilde\bgamma\supone=\argmin\bgamma \frac{1}{2n}&\sum_{i=1}^n\widehat\zeta_i(\e\trans\widehat\bA^{-1}\bx_i)^2\{y_i - g(\bgamma \trans \bPhi_i) \}^2+\lambda_n\supone\|\bgamma\|_2^2,\\  
\mbox{ s.t. }\frac{1}{N}&\sum_{i=1}^N\what_i\bx_i\{y_i-g(  \bgamma\trans\bPhi_i)\}=\bzero,
\end{split}
\label{equ:min:theta}
\end{equation}
where $\widehat\zeta_i=\sum_{s=1}^S\widehat\rho_s^2 \widehat\rho_{1s}^{-2}V_i I(\Ssc_i = s)$, $\widehat{\bA}= N^{-1}\sum_{i=1}^N\bx_i^{\otimes 2} \dot{g}(\widehat\btheta\subSSL \trans  \bx_i)$ are the empirical estimates of $\zeta_i$ and $\bA$, respectively, and $\lambda_n\supone = o(n^{-\frac{1}{2}})$ is again a tuning parameter for stable fitting. We then solve $\Ninv\sum_{i = 1}^{N} \bx_i \{ g(  \bgammatilde\suponetrans\bPsi_i) - g(\btheta \trans \bx_i)  \} =\bzero$ to obtain $\widehat\btheta\subintri$, and return $\e\trans\widehat\btheta\subintri$ as the intrinsic efficient estimator for $\e\trans\bar\btheta$. The moment condition in (\ref{equ:min:theta}) is used for calibrating the potential bias from a misspecified imputation model and ensuring the consistency of $\bthetahat\subintri$. This condition is explicitly imposed when constructing our original proposal.

To study the asymptotic properties of $\widehat\btheta\subintri$ and compare it with our original proposal, $\widehat\btheta\subplain$, we let 
\[
\bar\bgamma\supone=\argmin\bgamma {\rm E}[R(\e\trans\bA^{-1}\bx)^2\{y - g(\bgamma \trans \bPhi) \}^2],\quad\mbox{s.t.}\quad{\rm E}[\bx\{y - g(\bgamma \trans \bPhi) \}]=\bzero,
\]
be the limit of $\bgammatilde\supone$, where $R=\sum_{s=1}^SI(\Ssc=s){\rho_s}/{\rho_{1s}}$.  The proof of Theorem \ref{thm:3} is provided in Appendix \ref{sec:app:intri}.

\begin{theorem}
Under condition \ref{cond:1}, and conditions \ref{cond:a1} and \ref{cond:a2} introduced in Appendix \ref{sec:app:intri}, $n^{\frac{1}{2}}(\widehat\btheta\subintri-\bthetabar)$ converges weakly to a mean zero normal distribution, and is asymptotically equivalent to $\widehat\Wsc(\bgammabar\supone)$ where
\[
\widehat\Wsc(\bgamma)=\nhalf\sum_{s=1}^S \rho_s \left[ n_s^{-1} \sum_{i=1}^N V_iI(\Ssc_i = s)  \bA^{-1}\bx_i \{y_i - g(\bgamma \trans \bPhi_i) \} \right].
\]
In addition: (i) when the imputation model ${\rm P}(y = 1 \mid \bu) = g(\bgamma \trans \bPhi)$ is correctly specified, $\widehat\btheta\subintri$ is asymptotically equivalent to $\widehat\btheta\subplain$ and (ii) the asymptotic variance of $n^{\frac{1}{2}}(\e\trans\widehat\btheta\subintri-\e\trans\bthetabar)$ is minimized among estimators with $\{\e\trans\widehat\Wsc(\bgamma):{\rm E}[\bx\{y - g(\bgamma \trans\bPhi) \}]=\bzero\}$. Consequently, the variance of the intrinsic efficient estimator is always less than or equal to the asymptotic variance of both $n^{\frac{1}{2}}(\e\trans\widehat\btheta\subSL-\e\trans\bthetabar)$ and $n^{\frac{1}{2}}(\e\trans\widehat\btheta\subplain-\e\trans\bthetabar)$.

\label{thm:3}
\end{theorem}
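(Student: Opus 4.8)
The plan is to show that $\widehat\btheta\subintri$ obeys the \emph{same} first-order expansion as the plain estimator $\widehat\btheta\subplain$ of Theorem~\ref{thm:1}, with the unconstrained imputation limit $\bgammabar$ replaced by the constrained, variance-minimizing limit $\bgammabar\supone$; once the influence function $\widehat\Wsc(\bgammabar\supone)$ is identified, parts (i)--(ii) and the final comparison reduce to algebra on the residual $y-g(\bgamma\trans\bPhi)$. First I would establish consistency. Conditions~\ref{cond:a1}--\ref{cond:a2} guarantee $\bgammatilde\supone\overset{p}\to\bgammabar\supone$, the unique minimizer of the population objective subject to the moment constraint, which by construction satisfies ${\rm E}[\bx\{y-g(\bgammabar\suponetrans\bPhi)\}]=\bzero$. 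Together with the defining relation ${\rm E}[\bx\{y-g(\bthetabar\trans\bx)\}]=\bzero$ of $\bthetabar$, this forces the limiting equation ${\rm E}[\bx\{g(\bgammabar\suponetrans\bPhi)-g(\btheta\trans\bx)\}]=\bzero$ to have $\bthetabar$ as a root, unique by the nonsingularity of $\bA$ (Condition~\ref{cond:2}(C)); standard $Z$-estimator arguments then give $\widehat\btheta\subintri\overset{p}\to\bthetabar$.

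The core is a pair of linearizations sharing a common Jacobian. Expanding the defining equation $\Ninv\sum_{i=1}^N\bx_i\{g(\bgammatilde\suponetrans\bPhi_i)-g(\widehat\btheta\subintri\trans\bx_i)\}=\bzero$ about $(\bgammabar\supone,\bthetabar)$, and using $N\gg n$ so that each full-data average concentrates at its expectation up to $O_p(N^{-1/2})=o_p(n^{-1/2})$, I obtain
\[
\bA(\widehat\btheta\subintri-\bthetabar)=\bB\supone(\bgammatilde\supone-\bgammabar\supone)+o_p(n^{-1/2}),\qquad \bB\supone:={\rm E}\big[\bx\,\dot g(\bgammabar\suponetrans\bPhi)\bPhi\trans\big].
\]
Separately, I would expand the IPW moment constraint $\Ninv\sum_{i=1}^N\what_i\bx_i\{y_i-g(\bgammatilde\suponetrans\bPhi_i)\}=\bzero$ about $\bgammabar\supone$; its $\bgamma$-Jacobian converges to the \emph{same} matrix $\bB\supone$, so that
\[
\bB\supone(\bgammatilde\supone-\bgammabar\supone)=\Ninv\sum_{i=1}^N\what_i\bx_i\{y_i-g(\bgammabar\suponetrans\bPhi_i)\}+o_p(n^{-1/2}).
\]
The essential observation is that the minimizing objective in~(\ref{equ:min:theta}) never enters this calculation---only the constraint does---so substituting the second display into the first yields $\widehat\btheta\subintri-\bthetabar=\bA^{-1}\Ninv\sum_i\what_i\bx_i\{y_i-g(\bgammabar\suponetrans\bPhi_i)\}+o_p(n^{-1/2})$. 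Regrouping the inverse-probability average by strata (so that $\Ninv\what_i=\rhohat_s/n_s$ for a labeled subject in stratum $s$) rearranges this to $\widehat\Wsc(\bgammabar\supone)$, and the stratified central limit theorem used for Theorem~\ref{thm:1} delivers mean-zero asymptotic normality---mean zero precisely because the constraint gives ${\rm E}[\bx\{y-g(\bgammabar\suponetrans\bPhi)\}]=\bzero$.

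For part (i), if the imputation model holds then ${\rm E}[y-g(\bgamma^*\trans\bPhi)\mid\bu]=0$ at the true $\bgamma^*$, so $\bgamma^*$ simultaneously minimizes the weighted objective (a weighted squared error is minimized at the true conditional mean) and satisfies the constraint; hence $\bgammabar\supone=\bgamma^*$. Since the unconstrained limit $\bgammabar$ also equals $\bgamma^*$ under a correct model, the two influence functions coincide and $\widehat\btheta\subintri$ is asymptotically equivalent to $\widehat\btheta\subplain$. For part (ii), I would show the population objective is exactly the target asymptotic variance: with $R=\sum_{s=1}^S I(\Ssc=s)\rho_s/\rho_{1s}$,
\[
{\rm E}\big[R(\e\trans\bA^{-1}\bx)^2\{y-g(\bgamma\trans\bPhi)\}^2\big]=\sum_{s=1}^S\rho_s^2\rho_{1s}^{-1}{\rm E}\big[(\e\trans\bA^{-1}\bx)^2\{y-g(\bgamma\trans\bPhi)\}^2\mid\Ssc=s\big],
\]
which is the asymptotic variance of $\e\trans\widehat\Wsc(\bgamma)$ in the stratified form of Theorem~\ref{thm:1}. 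Thus $\bgammabar\supone$ minimizes this variance over the constraint set $\{\bgamma:{\rm E}[\bx\{y-g(\bgamma\trans\bPhi)\}]=\bzero\}$. The final comparison follows by exhibiting both competitors inside that set: the plain limit $\bgammabar$ satisfies the constraint because $\bPhi$ contains $\bx$ (Condition~\ref{cond:1}), so the $\bx$-block of ${\rm E}[\bPhi\{y-g(\bgammabar\trans\bPhi)\}]=\bzero$ is the constraint; and the supervised influence corresponds to the feasible choice $\bgamma\trans\bPhi=\bthetabar\trans\bx$, whose constraint is the defining equation of $\bthetabar$ and whose summand reproduces $\be_{\SL i}=\bA^{-1}\bx_i\{y_i-g(\bthetabar\trans\bx_i)\}$. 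Being feasible points, both yield variances no smaller than the minimized intrinsic variance.

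I expect the main obstacle to be the rigorous analysis of the \emph{constrained} M-estimator $\bgammatilde\supone$: proving its consistency and $\sqrt n$-expansion through a Lagrange-multiplier/KKT treatment, verifying the uniqueness of $\bgammabar\supone$, and controlling the linearization remainders uniformly---this is the role of Conditions~\ref{cond:a1}--\ref{cond:a2}. A secondary point is confirming that the IPW-weighted averages obey the required law of large numbers and central limit theorem even though $\max_s\widehat\pi_s\overset{p}\to0$ makes $\what_i\overset{p}\to\infty$; this reuses the machinery behind Theorems~\ref{thm:1}--\ref{thm:2}, but the common-Jacobian identity linking the two expansions---and with it the fact that the objective only fixes $\bgammabar\supone$---must be checked for the constrained fit.
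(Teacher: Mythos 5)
Your proposal is correct and follows the same overall architecture as the paper's proof: consistency of the constrained imputation fit and of $\widehat\btheta\subintri$, a linearization identifying $\widehat\Wsc(\bgammabar\supone)$ as the influence function (mean zero precisely because of the moment constraint ${\rm E}[\bx\{y-g(\bgammabar\suponetrans\bPhi)\}]=\bzero$), the identification of the population objective in (\ref{equ:min:theta}) with the asymptotic variance of $\e\trans\widehat\Wsc(\bgamma)$, and the feasibility of both $\bgammabar$ (the $\bx$-block of the imputation score, since $\bPhi$ contains $\bx$) and $\bgamma\trans\bPhi=\bthetabar\trans\bx$ (the defining equation of $\bthetabar$) for the final comparison. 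The one genuine difference is how you extract the quantity $\bB_1\trans(\bgammatilde\supone-\bgammabar\supone)$: the paper first derives the full Lagrange/KKT expansion $\bgammatilde\supone-\bgammabar\supone=[\bA_1^{-1}-\bA_1^{-1}\bB_1(\bB_1\trans\bA_1^{-1}\bB_1)^{-1}\bB_1\trans\bA_1^{-1}]\bXi_{11}+\bA_1^{-1}\bB_1(\bB_1\trans\bA_1^{-1}\bB_1)^{-1}\bXi_{12}$ and then observes that left-multiplication by $\bB_1\trans$ annihilates the objective-gradient term $\bXi_{11}$ and returns exactly the empirical constraint residual $\bXi_{12}$, whereas you obtain the same identity in one step by linearizing the empirical moment constraint itself, which makes explicit your (correct) observation that the objective never enters the first-order behavior of $\widehat\btheta\subintri$. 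Your route is more economical, but it still requires $\|\bgammatilde\supone-\bgammabar\supone\|_2=O_p(n^{-1/2})$ to control the Taylor remainders, and in the paper that rate is a by-product of the very KKT expansion you bypass; so the constrained-M-estimation machinery you flag as the main obstacle cannot be avoided entirely, only confined to establishing the rate and the uniqueness of $\bgammabar\supone$ guaranteed by Condition \ref{cond:a2}. Everything else, including the argument for part (i) that under a correctly specified imputation model the true $\bgamma$ simultaneously minimizes the weighted squared error and satisfies the constraint so that $\bgammabar\supone=\bgammabar$, matches the paper's reasoning.
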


The details and theoretical analysis of the intrinsic efficient estimation procedure of the accuracy measure $\Dbar$ is presented Appendices \ref{sec:app:intri:cons} and \ref{sec:app:intri}.  Similar to Theorem \ref{thm:3}, we show that $\Dhat\subintri$ is asymptotically equivalent with our proposal, $\Dhat\subplain$, when the imputation model is correctly specified and has smaller asymptotic variance than $\Dhat\subplain$ when the imputation model is misspecified. However, it is important to note that estimation based  
on intrinsic efficiency is a non-convex problem and one may encounter numerical optimization issues which may limit its use in practice.  We provide simulation studies comparing the intrinsic efficient estimator to the proposed approach in Section S4 Supplement.

}

{\red
\subsection{Optimal Allocation in Stratified Sampling}\label{sec:thm:opt:all}
Another important practical issue is how to select the strata and the corresponding selection probabilities.  Here we provide here a detailed assessment of the optimal (or Neyman) allocation of the labeled data across the strata.  Specifically, the general form of the influence function for our estimators is 
\begin{align*}
&n^{1/2} \sum_{s=1}^S \rho_{s}  \left\{\ninv_s\sum_{i = 1}^N V_i I(\Ssc_i = s) f(\bF_i) \right\} + o_p(1)  \nonumber
\end{align*}
for a function $f$ with $\sigma^2_s = \E\{f^2(\bF_i) \mid \Ssc_i =  s\}$ and $\E\{f(\bF_i)\} = 0$.  The asymptotic variance can then be expressed as 
\begin{align*}
&\sum_{s=1}^S \rho_{s}^{2} \left\{n_s^{-2} \sum_{i = 1}^N V_i I(\Ssc_i = s) \E\{f^2(\bF_i) \mid \Ssc_i =  s\} \right\} =  \sum_{s=1}^S  \rho_s^2  \frac{\sigma^2_s}{n_s} \\
&=n^{-1}\sum_{s=1}^Sn_s \sum_{s=1}^S \frac{(\rho_s\sigma_s)^2}{n_s}\ge n^{-1}\left(\sum_{s=1}^S \rho_s\sigma_s\right)^2, 
%&=  \sum_{s=1}^S n_s \left(\frac{\rho_s \sigma_s}{n_s} - \frac{\sum_{s=1}^S \rho_s \sigma_s }{n} \right)^2 - \frac{1}{n} \sum_{s =1}^S \rho_s \left(\sigma_s - \sum_{s=1}^S \rho_s\sigma_s  \right)^2 + \frac{1 }{n} \sum_{s=1}^S \rho_s \sigma_s^2.  \label{vareq2}
\end{align*}
by the Cauchy-Schwarz inequality, and equality holds if and only if % The expression in (\ref{vareq2}) is useful for deriving the optimal allocation of $n$ across that strata that minimizes the asymptotic variance. As only the first term depends on $n_s$, it follows that the optimal allocation is  
\begin{equation}
 n_s = n \frac{\rho_s \sigma_s}{\sum_{s=1}^S \rho_s \sigma_s} \quad \mbox{for $s = 1, \dots, S$}. \label{neq}
\end{equation}
The optimal sampling probabilities are therefore proportional to (i) the relative stratum size and (ii) the variability within the stratum, with greater weight placed on large stratum with high variability. %Then by (\ref{vareq2}), 
Consequently, stratified sampling leads to a more efficient estimator than uniform random sampling when the allocation in (\ref{neq}) is used. 

\begin{remark}
There is a rich body of survey sampling literature concerning model-assisted approaches that address the practically important question of how to select the strata and the corresponding selection probabilities \citep{neyman1934two, sarndal2003model, nedyalkova2008optimal}. The optimal allocation given by (\ref{neq}) is in similar spirit with the sampling schemes used in \cite{cai2012evaluating,liu2012evaluating}. It is particularly useful for EHR-based phenotyping studies such as the diabetic neuropathy example in Section \ref{sec-data} as it is often straightforward for domain experts to define a filter variable(s) that yields relatively large stratum with increased prevalence of $y$ (e.g.\ patients with notes containing terms related to the disease, relevant lab values, or specialist visits) and thus increased variability. 
\label{rem:5.4.1}
\end{remark}

We provide additional numerical studies to illustrate Remark \ref{rem:5.4.1} in Section \ref{sec:strata} of the Supplement.

}

%However, since (\ref{equ:min:theta}) and a non-convex problem, one may concur with numerical optimization issue like local minimizer if solving it directly. To handle this, 

%{\blue Remind myself: Also compare e foith the SSL estimator using $Y$ for labeled samples but not $m(\bx)$.}

\section{Perturbation Resampling Procedure for Inference}\label{sec-resample}
We next propose a perturbation resampling procedure to construct standard error (SE) and confidence interval (CI) estimates in finite samples. Resampling procedures are particularly attractive for making inference about $\Dbar$ when $\Ysc = \Ysc_2$ since $\Dhat\subSSL(\btheta)$ is not differentiable in $\btheta$. To this end, we generate a set of independent and identically distributed ({\red{i.i.d}}) non-negative random variables, $\Gsc = (G_1, \dots, G_n)$, independent of $\Dsc$, from a known distribution with mean one and unit variance. 

For each set of $\Gsc$, we first obtain a perturbed version of $\bthetahat \subSSL$ as 
$$\bthetahat^* = \bthetahat \subSSL +{\bAhat}^{-1}  \sum_{k = 1}^K \sum_{i \in \Lscr_k} \frac{\what_i(G_i -1) }{\sum_{j=1}^n \what_j}\left[\bx_iy_i - \bWhat\bx_i g(\bgammatilde_{(\text{-}k)} \trans \bPhi_i) - (\bI-\bWhat)\bx_i  g(\bthetahat_{(\text{-}k)} \trans \bx_i)  \right], $$
where $\bAhat = N^{-1} \sum_{i = 1}^{N} \bx_i^{\otimes 2}  \dot{g}( \bthetahat \subSSL \trans  \bx_i)$.  We use CV to correct for variance underestimation due to overfitting. 
Next, we find the solution $\bgammatilde^*$ to the perturbed objective function 
\begin{equation}
\bQtilde_n^{*}(\bgamma)=  \frac{\sum_{i = 1}^n \what_i \bPhi_i [y_i - g(\bgamma \trans \bPhi_i) ] G_i }{\sum_{i = 1}^n \what_iG_i} - \lambda_n \bgamma= \bzero
\end{equation} 
and the solution $\bnutilde^*$ that solves
$$\bP_n^*(\bnu, \bthetahat^*) = \frac{ {\sum_{i = 1}^n} \what_i \{y_i -   g({\bgammatilde*}\trans \bPhi_i + \bnu \trans \bz_{i \bthetahat^* })  \}  \bz_{i \bthetahat^* }G_i }{{\sum_{i = 1}^n} \what_iG_i} = \bzero$$
to obtain perturbed counterparts of $\bgammatilde$ and $\bnutilde$, respectively.  We then compute $\mtilde_{11}^*(\bu_i)   = g(\bgammatilde^{*\intercal} \bPhi_i + \bnutilde^{*\intercal} \bz_{i \bthetahat^* }) $ and  obtain the perturbed estimator of $\Dhat \subSSL(\bthetahat \subSSL)$ as 
 \begin{align*}
 \Dhat\subSSL^*( \bthetahat^*) =  N^{-1} \sum_{i = 1}^{N}\left[ \mtilde_{11}^*(\bu_i)\{ 1- 2\yscrl(\bthetahat^{*\intercal}  \bx_i)\} + \yscrl^2(\bthetahat^{*\intercal}  \bx_i)\right].
 \end{align*} 
 Following arguments such as those in \cite{tian2007model}, one may verify that $\nhalf \{  \Dhat \subSSL^*( \bthetahat \subSSL^*) - \Dhat \subSSL( \bthetahat \subSSL)  \} | \Dscr$ converges to the same limiting distribution as $\Tschat \subSSL$.  Additionally, it may be shown that $\Tschat\supcv\subSSL =\nhalf \{ \Dhat \subSSL \supcv - D(\bthetabar)\}$ and hence $\Tschat^{\omega}\subSSL =\nhalf \{ \Dhat \subSSL ^{\omega} - D(\bthetabar)\}$  converge to the limiting distribution of $\Tschat \subSSL$.  We utilize these results to approximate the distribution of $\Tschat^{\omega}\subSSL$ with the empirical distribution of a large number of perturbed estimates using the above resampling procedure to base inference for $D(\bthetabar)$ on the proposed bias-corrected estimator.  The variance of $ \Dhat \subSSL ^{\omega}$ can correspondingly be estimated with the sample variance and confidence intervals may be constructed accordingly.

%  It is standard practice to base inference for $D(\bthetabar)$ on a bias-corrected estimator due to overfitting. 
 
%%%%%%%%%%%%%%%%%%%%%%%%%%%%%%%%%%%%%%%
%%%%%%%%%%% NUMERICAL STUDIES %%%%%%%%%%%%%%%
%%%%%%%%%%%%%%%%%%%%%%%%%%%%%%%%%%%%%%%

\def\Sbb{\mathbb{S}}
\def\Cbb{\bC}
\def\Mscr{\mathscr{M}}
\def\sublogit{_{\scriptscriptstyle\sf logistic}}
\def\subextreme{_{\scriptscriptstyle\sf extreme}}

\section{Simulation Studies}\label{sec-sim}
We conducted extensive simulation studies to evaluate the performance of the proposed SSL procedures and to compare to existing methods. Throughout, we generated $p=10$ dimensional covariates $\bx$ from $N(\bzero,\Cbb)$ with $\Cbb_{kl} = 3(0.4)^{|k-l|}$.  Stratified sampling was performed according to $\Ssc$ generated from the following two  mechanisms:
\begin{enumerate}[(1)]
\item $\Ssc \in \{1,S = 2\}$ with $\Ssc = 1 + I(x_1+\delta_1\leq0.5)$ and $\delta_1 \sim N(0,1)$.
\item $\Ssc \in \{1,2,3,S=4\}$ with $\Ssc = 1 + I(x_1+\delta_1 \le 0.5) + 2I(x_3+\delta_2 \le 0.5)$, $\delta_1 \sim N(0,1)$, $\delta_2 \sim N(0,1)$, and $\delta_1\perp \delta_2$.
\end{enumerate}
We let $\Sbb=(I(\Ssc=1), ..., I(\Ssc = S-1))\trans$.   For both settings, we sampled $n_s=100$ or $200$ observations from each stratum. Throughout, we let $\bv_1$ be the natural spline of $\bx$ with 3 knots and $\bv_2$ be the interaction terms $\{\bx_1:\bx_{-1},~\bx_2:\bx_{-(1,2)}\}$, where $\bx_1:\bx_{-1}$ and $\bx_2:\bx_{-(1,2)}$ represent interaction terms of $\bx_1$ with the remaining covariates and $\bx_2$ with covariates excluding $\bx_1$ and $\bx_2$, respectively. %{\blue I am confused with the notation for $\bv_2$, maybe its just me?} {\red Molei: just added some explanation} 
With $\btheta = \{0, 1, 1, 0.5, 0.5, \bzero_{(p - 4)\times 1}\}\trans$ and $\epsilon\sublogit$ and $\epsilon\subextreme$ denoting noise generated from the logistic and extreme value$(-2,0.3)$ distributions, we simulated $y$ from the following models:
\begin{enumerate}[(i)]
\item[] \hspace{-.35in} (i) ($\Msc_{\mbox{\tiny correct}}$, $\Isc_{\mbox{\tiny correct}}$) with correct outcome model and correct imputation model: 
 $$y = I( \btheta \trans  \bx + \epsilon\sublogit > 2)  \  \mbox{and}\  \bPhi=(1,\bx\trans,\bv_1\trans,\Sbb\trans)\trans; $$
\item[] \hspace{-.35in} (ii) ($\Msc_{\mbox{\tiny incorrect}}$, $\Isc_{\mbox{\tiny correct}}$) with incorrect outcome model and correct  imputation model: 
$$y = I[ \btheta \trans  \bx+0.5\{x_1x_2+x_1x_5-x_2x_6-I(\Ssc=1)\} + \epsilon\sublogit > 0] \ \mbox{and}\  \bPhi=(1,\bx\trans, \bv_2\trans,\Sbb\trans)\trans;$$
\item[] \hspace{-.35in} (iii) ($\Msc_{\mbox{\tiny incorrect}}$, $\Isc_{\mbox{\tiny incorrect}}$) with incorrect outcome model and incorrect imputation model:  
$$y = I \{\btheta \trans  \bx + x_1^2 + x_3^2 +   \exp(-2-3x_4-3x_6)\epsilon\subextreme > 2\} \ \mbox{and}\ 
\bPhi=(1,\bx\trans,\bv_1\trans,\Sbb\trans)\trans . $$
\end{enumerate}
While the outcome model is misspecified in both (ii) and (iii),  the misspecification is more severe in (iii) due to the higher magnitude of nonlinear effects. 
These configurations are chosen to mimic EHR settings where the signals are typically sparse and $S$ is small. The covariate effects of $1$ represent the strong signals from the main billing codes and free-text mentions of the disease of interest. The two weaker signals $0.5$ characterize features such as related medications, signs, symptoms and lab results relevant to the disease of interest. 

Across all settings, we compare our SS estimators to both the SL estimator and the alternative density ratio (DR) method \citep{kawakita2013semi,kawakita2014safe}. 
The basis function $\bvarphi(\bu)$ required in the DR method was chosen to be the same as $\bPhi(\bu)$ in our method for all settings. {\red{The details and theoretical properties of the DR method are further discussed in the Supplement.}} We employed the ensemble CV strategy to construct a bias corrected DR estimator for $\Dbar$, denoted as $\Dhat\subDR^{\omega}$, to ensure a fair comparison to our approach. The three settings of outcome and imputation models under (i), (ii), and (iii) allow us to verify the asymptotic efficiency of the proposed SSL procedures relative to the SL and DR methods under various scenarios of misspecification.

%\lcomm{In the following paragraph, I tried to respond to the comments about our simulation settings.}\\
%\tcomm{it's fine not to only include such info in the response if they did not specifically request us to add to paper}

%{\blue Our configurations of the simulation studies intended to mimic some practical settings in EHR data analysis: in most clinical studies based on EHR, the sampling mechanism on $\Ssc$ is typically dependent on some disease-related covariates in $\bx$ and in our real data example, the number of stratums $S=2$. For the outcome model for $y$, a fraction of coefficients in $\btheta$ were set to be $0$, representing possible existence of the noisy or useless EHR variables for the disease outcome $y$. While the first two entries in $\btheta$ were set to be $1$, to represent the strong signals, typically coming from main ICD codes and NLP features related to $y$ and the two weaker signals $0.5$ were used to characterize features such as key medications, signs, symptoms and lab results in EHR that usually have weaker association with the outcome. Three different generating mechanisms of $y$, (i) $\Msc_{\mbox{\tiny incorrect}}$, $\Isc_{\mbox{\tiny correct}}$; (ii) $\Msc_{\mbox{\tiny incorrect}}$, $\Isc_{\mbox{\tiny correct}}$ and (iii) $\Msc_{\mbox{\tiny incorrect}}$, $\Isc_{\mbox{\tiny incorrect}}$, were proposed corresponding to our above discussion on the asymptotic efficiency of $\bthetahat\subSSL$ to $\bthetahat\subSL$.}

For each configuration, results are summarized with $500$ independent data sets.  The size of the unlabeled data was chosen to be $20,000$ across all settings.  For all our numerical studies including the real data application, CV was performed with either $K=3$ or $K=6$ and averaged over 20 replications. The estimated SEs were based on 500 perturbed realizations and the OMR was evaluated with $c = 0.5$.  We let $\lambda_n=\log(2p)/n^{1.5}$ when fitting the ridge penalized logistic regression.  We focus primarily on results for $S = 2$ and $K =6$, but include results for $S = 4$ and $K = 3$ in Section \ref{sec:add-sim} of the Supplement as they show similar patterns.  Additionally, our analyses concentrate on the performance of the accuracy metrics.  Results for the regression parameter estimates can be found in Section \ref{sec:regr-sim} of the Supplement.  The code to implement the proposed methods and run the simulation studies can be found at \url{https://github.com/jlgrons/Stratified-SSL}.

%We present results for the regression parameters and associated performance measures for $\rho = 0.4$. Results for $\rho = 0.2$ follow similar patterns and are omitted for brevity.

In Figure \ref{figure:bias:RE}, we present the percent biases of the apparent, CV, and ensemble CV estimators of the accuracy parameters. Although all three estimators have negligible biases, the SSL exhibits slightly less bias than its supervised counterpart and the DR estimator under $(\Msc_{\mbox{\tiny incorrect}},\Isc_{\mbox{\tiny incorrect}})$.  The ensemble CV method is effective in bias correction while the apparent estimators are optimistic and the standard CV estimator is pessimistic. For example, under $(\Msc_{\mbox{\tiny incorrect}},\Isc_{\mbox{\tiny incorrect}})$, $n_s=100$ and $S=2$, the percent bias of the SSL estimators for the OMR are $-8.2\%$, $8.8\%$ and $-0.5\%$ when we use the plug-in, 6-fold CV, and the ensemble CV methods, respectively.  The efficiency of $\Dhat\subSSL^w$ and $\Dhat\subDR^{\omega}$ relative to $\Dhat\subSL^w$ for both the Brier score and OMR are presented in Figure \ref{figure:RE}.  Again, $\Dhat\subSSL^w$ is substantially more efficient than $\Dhat\subSL^w$ and $\Dhat\subDR^{\omega}$, with efficiency gains of approximately 15\%--30\% under $(\Msc_{\mbox{\tiny correct}}, \Isc_{\mbox{\tiny correct}})$, 40\% under $(\Msc_{\mbox{\tiny incorrect}}, \Isc_{\mbox{\tiny correct}})$, and 40\%--80\% under $(\Msc_{\mbox{\tiny incorrect}},  \Isc_{\mbox{\tiny incorrect}})$. Results for $K=3$ have similar patterns and are presented in Figure \ref{figure:bias:RE-add} and \ref{figure:RE-add} of the Supplement. In Table \ref{table: resampling}, we present the results for the interval estimation obtained from the perturbation resampling procedure for the SS estimator $\Dhat \subSSL ^{\omega}$. The SEs are well approximated and the empirical coverage for the 95\% CIs is close to the nominal level across all settings.

{\remark{
{\red
While our simulation studies focus on the SSL estimators proposed in Section \ref{sec:est-acu}, we also investigated the finite sample performance of $\widehat\btheta\subintri$ and $\Dhat\subintri$ and compared them with our original proposals. The numerical studies are described in Section \ref{sec:sup:intri} of the Supplement and demonstrate that when the estimated coefficients for the imputation model of the original estimators are equal or close to those of the intrinsic efficient estimator, these two methods perform equivalently with respect to mean square errors (MSE). In contrast, under a setting where the coefficients for the imputation model differ across these two methods, $\widehat\btheta\subintri$ and $\Dhat\subintri$ have about $30\%$ smaller MSE than $\widehat\btheta\subplain$ and $\Dhat\subplain$, on average. The detailed %(including handling the non-convexity issue of the intrinsic efficient approach) 
results are presented in Table \ref{tab:app:intri} of the Supplement.
}}}

{\remark{
{\red To illustrate the benefit of stratified sampling in both the supervised and SS settings, we provide numerical studies of the optimal allocation in \ref{sec:strata} of the Supplement.  Mimicking our example in Section \ref{sec-data}, we let the risk of $y$ significantly differ across the $S=2$ sampling groups. The stratification variable is picked so that $\P(y = 1\mid\Ssc=1)$ is much lower than $\P(y = 1\mid\Ssc=2)$. We consider two sampling strategies: (i) uniform random sampling of $n$ subjects, and (ii) stratified sampling of $n/2$ subjects from each stratum. Since $\P(y = 1\mid\Ssc=1)$ is low and close to $0$, the variability of this stratum $\sigma^2_1 = \E\{f^2(\bF) \mid \Ssc =  1\}$ is smaller than that of $\Ssc=2$ with $\P(y = 1\mid\Ssc=2)$ not near $0$ and $1$. Connecting this with (\ref{neq}), the stratified sampling strategy oversamples within $\Ssc=2$ so that its allocation of $n_2$ is more close to the optimal choice. Consistent with this observation, our simulation results indicate that stratified sampling is more efficient than uniform random sampling in both the supervised and SS settings with an average relative efficiency $>1.45$ across different setups. We further inspect the supervised estimator of $D(\bthetabar)$ under setup (I) in Section \ref{sec:strata}, for which  the optimal allocation is $n_1 = 0.47n$ and $n_2 =0.53n$ and nearly coincides with our equal allocation of $n$ across the two stratum.} }}

\section{Example: EHR Study of Diabetic Neuropathy}\label{sec-data}
We applied the proposed SSL procedures to develop and evaluate an EHR phenotyping algorithm for classifying diabetic neuropathy (DN), a common and serious complication of diabetes resulting in nerve damage. The full study cohort consists of $N = 16, 826$ patients over age 18 with one or more of 12 ICD9 codes relating to DN identified from Partners HealthCare EHR.  {\red{An initial assessment of 100 charts by physicians revealed the prevalence of DN in the study cohort was approximately 17\%.  To obtain a labeled set with sufficient DN cases for model training and improve efficiency, the investigators decided to employ a stratified sampling scheme.  To do so, a binary filter variable $\Ssc$ indicating whether a patient had a neurological exam and a neurology note with at least 1,000 characters was created.  The prevalence of DN in the ``enriched'' stratum with $\Ssc=1$ was expected to be higher than that in the stratum with $\Ssc=0$.  As demonstrated in our theoretical analysis in Section \ref{sec:thm:opt:all} and our numerical studies in Section \ref{sec:strata}, oversampling within the enriched set can improve estimation efficiency relative to taking a uniform random sample and is a common approach taken in EHR-based analyses.  For this study, the investigators sampled $n_0 = 70$ and $n_1 = 538$ patients from the $N_0 = 13608$ patients with $\Ssc = 0$ and the $N_1=  3218$ patients with $\Ssc = 1$, respectively, for developing the phenotyping algorithm.      
}}
% This is a common approach in EHR-based analyses where  
%The prevalence of DN in this cohort was approximately 17\% based on initial assessment. To enrich the labeled set with sufficient DN cases, the investigators created a binary filter variable $\Ssc$ indicating whether a patient had a neurological exam and a neurology note with at least 1,000 characters and over-sampled patients from $\Ssc=1$. To develop the phenotyping algorithm, $n_0 = 70$ and $n_1 = 538$ patients were sampled from the $N_0 = 13608$ patients with $\Ssc = 0$ and the $N_1=  3218$ patients with $\Ssc = 1$, respectively.  This analysis naturally lends itself to the SS setting under stratified sampling.  This sampling scheme is commonly employed in the EHR-based analyses as it is often more efficient than random sampling.  In Section \ref{sec:strata} of the Supplement, we empirically investigate the advantages of stratified sampling as well as the importance weighted estimation in this setting.

To train the model for classifying DN, a set of 11 codified and NLP features related to DN were selected from an original list of 75 via an unsupervised screening as described in \cite{yu2015toward}. The codified features included $\Ssc$, diagnostic codes for diabetes, type 2 diabetes mellitus, diabetic neuropathy, other idiopathic peripheral autonomic neuropathy, and diabetes mellitus with neurological manifestation as well as normal glucose lab values and prescriptions for anti-diabetic medications.  The NLP features included mentions of terms related to DN in the patient record including  glycosylated hemoglobin (HgA1c), diabetic, and neuropathy. As all these features (with the exception of $\Ssc$) are count variables and tend to be highly skewed, we used the transformation $x \to \log(x +1)$ to stabilize model fitting. 

We developed DN classification models by fitting a logistic regression with the above features based on  $\bthetahat\subSL$, $\bthetahat\subSSL$ and $\bthetahat\subDR$ obtained from density ratio weighted estimation. Since the proportion of observations with $\Ssc=0$ is relatively low in the labeled data, we implemented 100 replications of 6-fold CV procedure by splitting the data randomly within each strata to improve the stability of the CV procedure. To construct the basis for $\bthetahat\subSSL$ and $\bthetahat\subDR$, we used a natural spline with 3 knots on all covariates except $\Ssc$. To improve training stability, we set the ridge tuning parameter $\lambda_n=n^{-1}$ when fitting the imputation model. 

As shown in Table \ref{table: DN}(a), the point estimates are reasonably similar which confirms the consistency and stability of SS estimator in a real data setting. As expected, we find that the two most influential predictors are the diagnostic code for diabetic neuropathy and anti-diabetic medications. Importantly, we note substantial efficiency gains of $\bthetahat\subSSL$ compared to $\bthetahat\subSL$. The SSL estimates are $>50\%$ more efficient than the SL estimates for several features including six diagnostic code features and one NLP feature for DN. Additionally, $\bthetahat\subSSL$ is the most efficient estimator among all three approaches for nearly all variables. 

%Even though $\bthetahat\subDR$ is more efficient that the SL estimator, it is substantially less efficient than $\bthetahat\subSSL$ in general.

In Table \ref{table: DN}(b), we compare $\Dhat\subSL^w$, $\Dhat\subSSL^w$ and $\Dhat\subDR^w$ for the Brier score and OMR with $c=0.5$. While the point estimates for the accuracy measures based on these different approaches are relatively similar, $\Dhat\subSSL^w$ is 55\% more efficient than $\Dhat\subSL^w$ for the Brier score and 63\% more efficient for the OMR. Again, $\Dhat\subSSL^w$ is substantially more efficient than the DR estimator $\Dhat\subDR^w$. These results support the potential value of our method for EHR-based research as these gains in efficiency may be directly translated into requiring fewer labeled examples for model evaluation.

%{\blue Molei: I suggest deleting the comparison with RF:

%We also compared the performance of our regression approach to the commonly used random forest model. The 6-fold CV Brier score and OMR ($\times 100$) are 7.78 and 11.4 for the RF model and 8.98 and 11.7 for our proposed SSL approach, respectively. While the slight loss in accuracy of the proposed approach implies potential misspecification of the logistic regression model, it may be desirable in practice due to its simplicity.
%}

%%%%%%%%%%%%%%%%%%%%%%%%%%%%%%%%%%
%%%%%%%%%%% DISCUSSION %%%%%%%%%%%%%%%
%%%%%%%%%%%%%%%%%%%%%%%%%%%%%%%%%

\section{Discussion}\label{sec-conc}

%{\blue Molei: may be good to make this section shorter. }

In this paper, we focused on the evaluation of a classification rule derived from a working regression model under stratified sampling in the SS setting.  In particular, we introduced a two-step imputation-based method for estimation of the Brier score and OMR that makes use of unlabeled data.  Additionally, as a by-product of our procedure, we obtained an efficient SS estimator of the regression parameter. Through theoretical and numerical studies, we demonstrated the advantage of the SS estimator over the SL estimator with respect to efficiency.  We also developed a weighted CV procedure to adjust for overfitting and a resampling procedure for making inference. Our numerical studies indicate that our proposed method outperforms the existing DR method for SSL in the finite sample studies {\red and we provide further discussion of this finding in Section \ref{sec:method-DR} of the Supplement}.  Importantly, this article is one of the first theoretical studies of labeling based on stratified sampling within the SSL literature.  We focus on the stratified sampling scheme due to its direct application to a variety of EHR-based analyses, including the development of a phenotyping algorithm for diabetic neuropathy presented in the previous section. 

In our numerical studies, we used spline functions with 3 or 4 knots and interaction terms for the imputation model. It would be possible to use more knots or add more features to the basis function for settings with a larger $n$.  However, care must be taken to avoid overfitting and potential loss in the efficiency gain of the SS estimator in finite sample.  Alternatively, other basis functions can be utilized provided that $\bPhi(\bu)$ contains $\bx$ in its components to ensure consistency of the regression parameter. In settings where nonlinear effects of $\bx$ on $y$ are present, it may be desirable to impose a more complex outcome model to improve the prediction performance.  A potential approach is to explicitly include nonlinear basis functions in the outcome model. In Section \ref{sec:sup:pc} of the Supplementary Materials, we consider using the leading principal components (PCs) of $\bx$ and $\bPsi(\bx)$ where $\bPsi(\cdot)$ is a vector of nonlinear transformation functions under a variety of settings.  This approach performs similarly or better than the commonly used random forest model with respect to predictive accuracy, suggesting the utility of our proposed methods in the presence of nonlinear effects. Our numerical results also illustrate the efficiency gain of the SS estimators of the Brier score and OMR relative to the SL and DR methods.  It is important to note, however, that the dimensions of both $\bx$ and $\bPhi$ were assumed to be fixed in our asymptotic analysis.  Accommodating more complex modeling with $p$ not small relative to $n$ requires extending the proposed SSL approach to settings where $\bx$ and $\bPhi$ are high dimensional.

%When implementing the DR method in Section \ref{sec-sim}, we do not set the basis function to be fully stratified on $\Ssc$ as in Remark \ref{rem:5.4.1} of the Supplement. Though such specification theoretically guarantees that asymptotic variances of the DR estimators are always less than or equal to those of the SL estimators, it may not achieve improved efficiency in finite samples due to overfitting. We demonstrate this point through simulation described in Section \ref{sec:app:DR:more} of the Supplement. In our studies, we find that the DR estimator specified as in Remark \ref{rem:5.4.1} has reduced finite sample performance compared to SL, our SSL estimator, and the DR estimator obtained with a less complex basis. 

For accuracy estimation, we proposed an ensemble CV estimator that eliminates first-order bias when the estimated regression parameter is the minimizer of the empirical performance measure. Though this condition may not hold when the outcome model is misspecified, we have found that the suggested weights perform well in our numerical studies.  Such ensemble methods that accommodate the more general case in both the supervised and SS settings warrant further research. Additionally, an important setting where our proposed SSL procedure would be of great use is in drawing inferences about two competing regression models.  As it is likely that at least one model is misspecified, we would expect to observe efficiency gains in estimating the difference in prediction error with the proposed method.

Lastly, while the present work focuses on the binary outcome along with the Brier score and OMR, the proposed SSL framework can potentially be extended to more general settings with continuous $y$ and/or other accuracy parameters. In particular, for binary $y$ and corresponding classification rule $\Ysc_2=I\{g(\btheta\trans\bx)>c\}$, it would be of interest to consider estimation of the sensitivity, specificity, and weighted OMR with different threshold values to analyze the costs associated with the false positive and false negative errors. 

\bibliographystyle{abbrvnat}      
\bibliography{SSLref.bib}  

\clearpage

\newpage

\begin{table}[!htbp]
    \caption{The 100$\times$ESE of $\Dhat\subSL^w$, $\Dhat\subSSL^w$ and $\Dhat\subDR^w$ under (i) ($\Msc_{\mbox{\tiny correct}}$, $\Isc_{\mbox{\tiny correct}}$); (ii)  ($\Msc_{\mbox{\tiny incorrect}}$, $\Isc_{\mbox{\tiny correct}}$), and (iii) ($\Msc_{\mbox{\tiny incorrect}}$, $\Isc_{\mbox{\tiny incorrect}}$). For $\Dhat\subSSL^w$, we also show the average of the 100$\times$ASE as well as the empirical coverage probability (CP) of the 95\% confidence intervals constructed based on the resampling procedure. }
    \label{table: resampling}
    \caption*{(i) ($\Msc_{\mbox{\tiny correct}}$, $\Isc_{\mbox{\tiny correct}}$)}
      \centering
      \begin{tabular}{c |c|cc|c|| c|cc|c|}
  \hline
 & \multicolumn{4}{c||}{Brier score} & \multicolumn{4}{c|}{OMR} \\ 
 \cline{2-9}
 & \multicolumn{1}{c|}{$\Dhat\subSL^w$}& \multicolumn{2}{c|}{$\Dhat\subSSL^w$}& \multicolumn{1}{c||}{$\Dhat\subDR^w$} 
 & \multicolumn{1}{c|}{$\Dhat\subSL^w$}& \multicolumn{2}{c|}{$\Dhat\subSSL^w$} & \multicolumn{1}{c|}{$\Dhat\subDR^w$}\\
  \hline
 & ESE & $\text{ESE}_ {\text{ASE}}$ & CP &ESE 
 & ESE& $\text{ESE}_ {\text{ASE}}$ & CP & ESE \\ \cline{2-9}
$S=2, n_s =100$  & $1.27$        & $1.19_{1.25}$ & $0.94$        & $1.31$        & $2.29$        & $2.10_{2.23}$ & $0.97$        & $2.35$  \\
$S=2, n_s =200$ & $0.97$        & $0.87_{0.86}$ & $0.93$        & $0.95$        & $1.67$        & $1.50_{1.60}$ & $0.97$        & $1.67$ \\ 
$S=4, n_s =100$  & $0.97$        & $0.89_{0.85}$ & $0.93$        & $0.95$        & $1.70$        & $1.54_{1.58}$ & $0.95$        & $1.64$    \\ 
$S=4, n_s =200$ & $0.67$        & $0.58_{0.60}$ & $0.95$        & $0.66$        & $1.16$        & $1.02_{1.11}$ & $0.97$        & $1.16$    \\ \hline
\end{tabular}
\end{table}

\begin{table}[!htb]
\ContinuedFloat
    \caption*{(ii) ($\Msc_{\mbox{\tiny incorrect}}$, $\Isc_{\mbox{\tiny correct}}$)}
    \centering
\begin{tabular}{c |c|cc|c|| c|cc|c|}
  \hline
 & \multicolumn{4}{c||}{Brier score} & \multicolumn{4}{c|}{OMR} \\ 
 \cline{2-9}
 & \multicolumn{1}{c|}{$\Dhat\subSL^w$}& \multicolumn{2}{c|}{$\Dhat\subSSL^w$}& \multicolumn{1}{c||}{$\Dhat\subDR^w$} 
 & \multicolumn{1}{c|}{$\Dhat\subSL^w$}& \multicolumn{2}{c|}{$\Dhat\subSSL^w$} & \multicolumn{1}{c|}{$\Dhat\subDR^w$}\\
  \hline
 & ESE & $\text{ESE}_ {\text{ASE}}$ & CP &ESE 
 & ESE& $\text{ESE}_ {\text{ASE}}$ & CP & ESE \\ \cline{2-9}
$S=2, n_s =100$  & $1.72$        & $1.47_{1.39}$ & $0.93$        & $1.85$        & $3.09$        & $2.56_{2.65}$ & $0.94$        & $3.42$  \\
$S=2, n_s =200$ & $1.13$        & $1.01_{0.94}$ & $0.92$        & $1.16$        & $2.14$        & $1.85_{1.85}$ & $0.95$        & $2.21$ \\ 
$S=4, n_s =100$& $1.21$        & $1.07_{0.96}$ & $0.92$        & $1.24$        & $2.13$        & $1.87_{1.88}$ & $0.95$        & $2.18$  \\ 
$S=4, n_s =200$ & $0.86$        & $0.73_{0.67}$ & $0.93$        & $0.87$        & $1.62$        & $1.37_{1.34}$ & $0.94$        & $1.62$    \\ \hline
\end{tabular}
\end{table}

%{\blue For $s =2$, coverage is 0.99 with $n_s =200$ but 0.96 when $n_s =100$?  It seems like CP looks closer to 0.95 for $ n_s=100$ vs 200 in these tables? Guess this could be because $n_s = 100, 200$ are still small.} 
%{\red Molei: Agree, they are still limited sample size results.}
\begin{table}[!htb]
\ContinuedFloat
    \caption*{(iii) ($\Msc_{\mbox{\tiny incorrect}}$, $\Isc_{\mbox{\tiny incorrect}}$)}
    \centering
\begin{tabular}{c |c|cc|c|| c|cc|c|}
  \hline
 & \multicolumn{4}{c||}{Brier score} & \multicolumn{4}{c|}{OMR} \\ 
 \cline{2-9}
 & \multicolumn{1}{c|}{$\Dhat\subSL^w$}& \multicolumn{2}{c|}{$\Dhat\subSSL^w$}& \multicolumn{1}{c||}{$\Dhat\subDR^w$} 
 & \multicolumn{1}{c|}{$\Dhat\subSL^w$}& \multicolumn{2}{c|}{$\Dhat\subSSL^w$} & \multicolumn{1}{c|}{$\Dhat\subDR^w$}\\
  \hline
 & ESE & $\text{ESE}_ {\text{ASE}}$ & CP &ESE 
 & ESE& $\text{ESE}_ {\text{ASE}}$ & CP & ESE \\ \cline{2-9}
$S=2, n_s =100$ & $1.57$        & $1.31_{1.29}$ & $0.94$        & $1.53$        & $2.68$        & $2.15_{2.31}$ & $0.96$        & $2.61$  \\
$S=2, n_s =200$ & $1.05$        & $0.86_{0.85}$ & $0.95$        & $1.01$        & $1.87$        & $1.45_{1.54}$ & $0.97$        & $1.83$   \\ 
$S=4, n_s =100$ & $1.11$        & $0.87_{0.88}$ & $0.96$        & $0.99$        & $1.97$        & $1.46_{1.57}$ & $0.96$        & $1.87$ \\ 
$S=4, n_s =200$ & $0.78$        & $0.61_{0.61}$ & $0.95$        & $0.73$        & $1.37$        & $1.02_{1.08}$ & $0.97$        & $1.30$  \\ \hline
\end{tabular}
\end{table}

\clearpage
\newpage
\begin{figure}[htpb!]
\centering
\caption{Percent biases of the apparent (AP), CV, and ensemble cross validation (eCV) estimators of the Brier score (BS) and overall misclassification rate (OMR) for SL, SSL and DR under (i) ($\Msc_{\mbox{\tiny correct}}$, $\Isc_{\mbox{\tiny correct}}$), (ii)  ($\Msc_{\mbox{\tiny incorrect}}$, $\Isc_{\mbox{\tiny correct}}$), and (iii)  ($\Msc_{\mbox{\tiny incorrect}}$, $\Isc_{\mbox{\tiny incorrect}}$). Shown are the results obtained with $K=6$ fold for CV.}
\label{figure:bias:RE}
\caption*{(i) ($\Msc_{\mbox{\tiny correct}}$, $\Isc_{\mbox{\tiny correct}}$)}
\begin{minipage}{1\textwidth}\centering
\mbox{
  \includegraphics[width=0.5\textwidth]{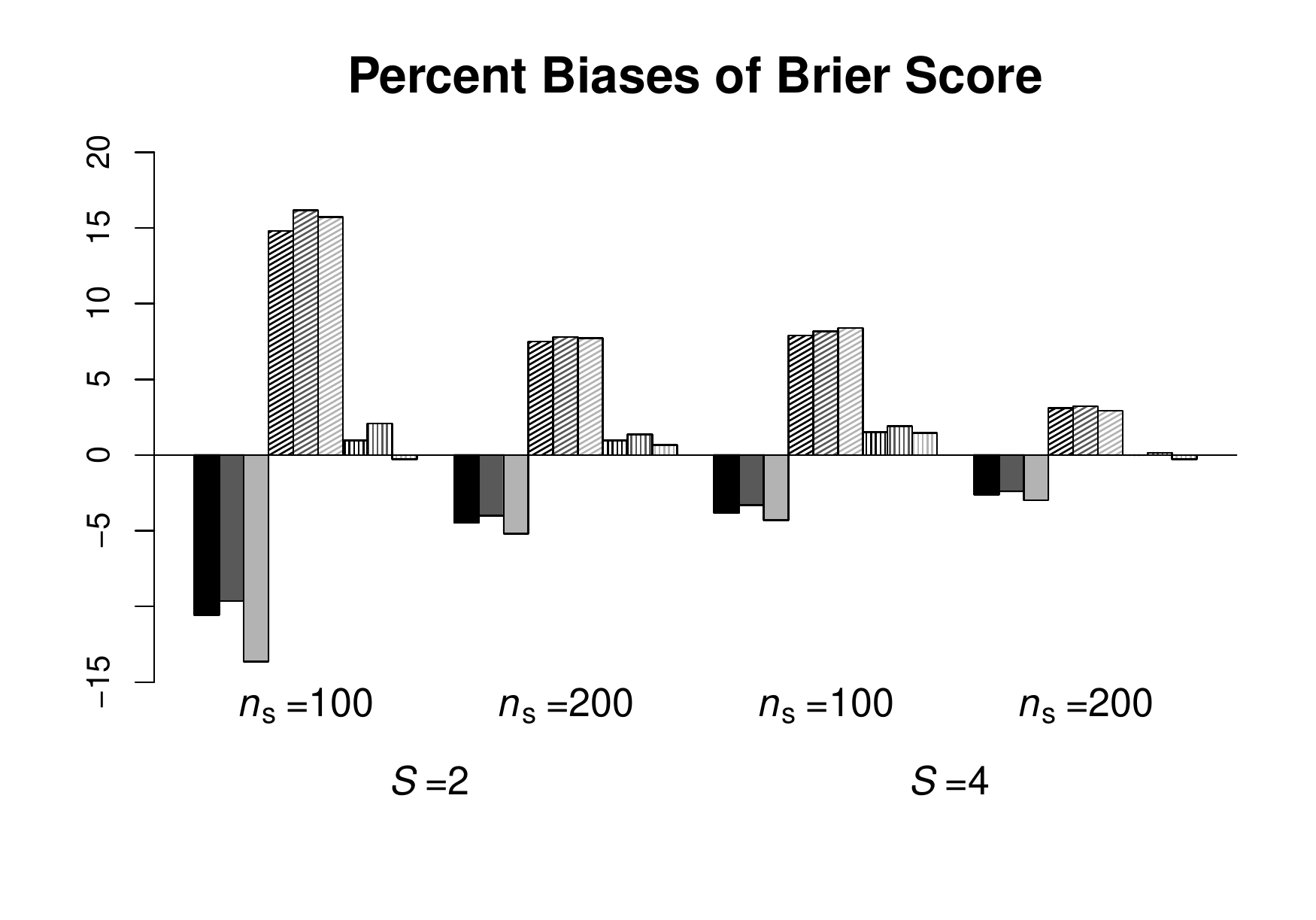}
  \includegraphics[width=0.5\textwidth]{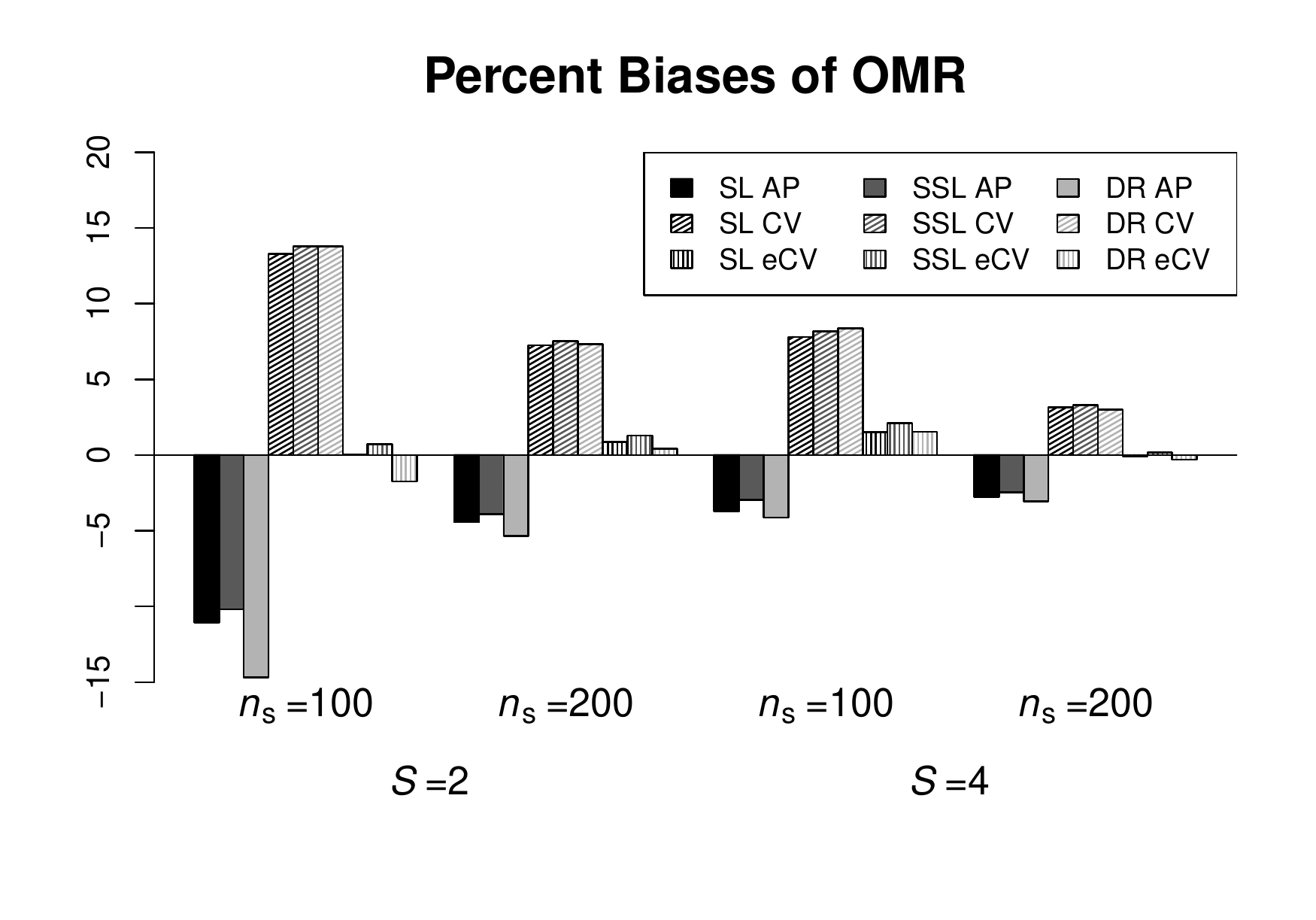}}
\end{minipage}
\caption*{(ii) ($\Msc_{\mbox{\tiny incorrect}}$, $\Isc_{\mbox{\tiny correct}}$)}
\begin{minipage}{1\textwidth}\centering
\mbox{
  \includegraphics[width=0.5\textwidth]{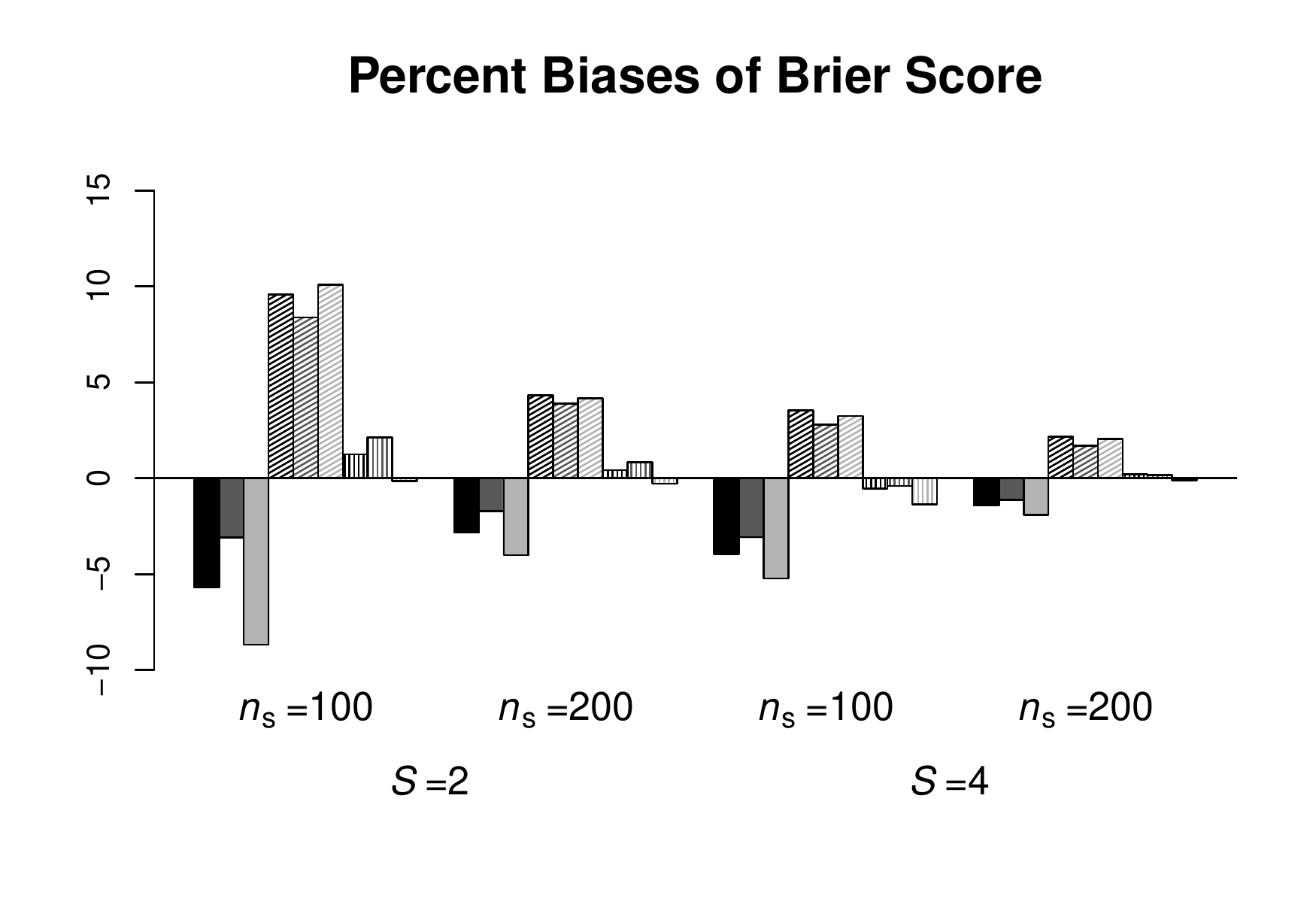}
  \includegraphics[width=0.5\textwidth]{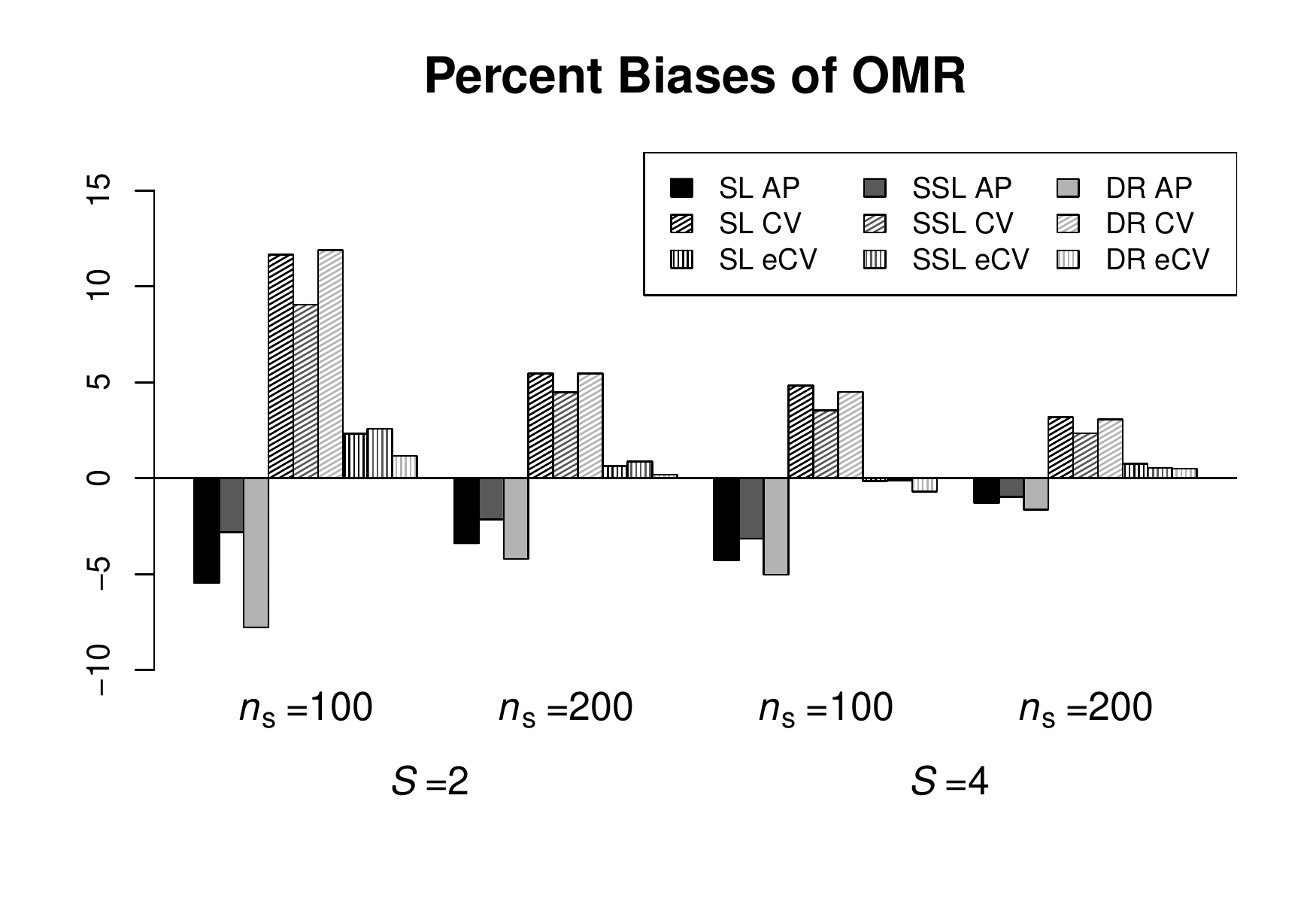}}
\end{minipage}
\caption*{(iii) ($\Msc_{\mbox{\tiny incorrect}}$, $\Isc_{\mbox{\tiny incorrect}}$)}
\begin{minipage}{1\textwidth}\centering
\mbox{
  \includegraphics[width=0.5\textwidth]{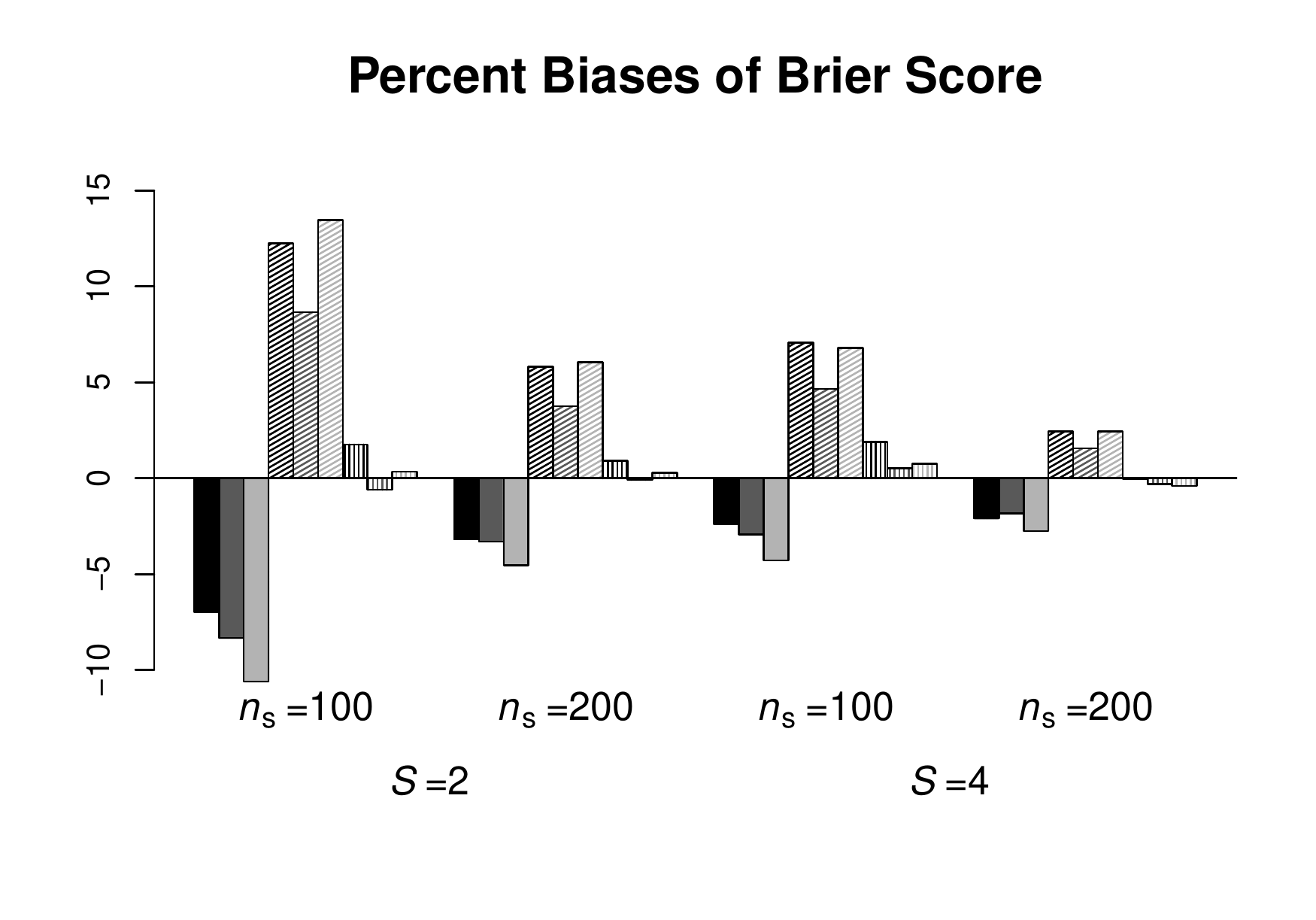}
  \includegraphics[width=0.5\textwidth]{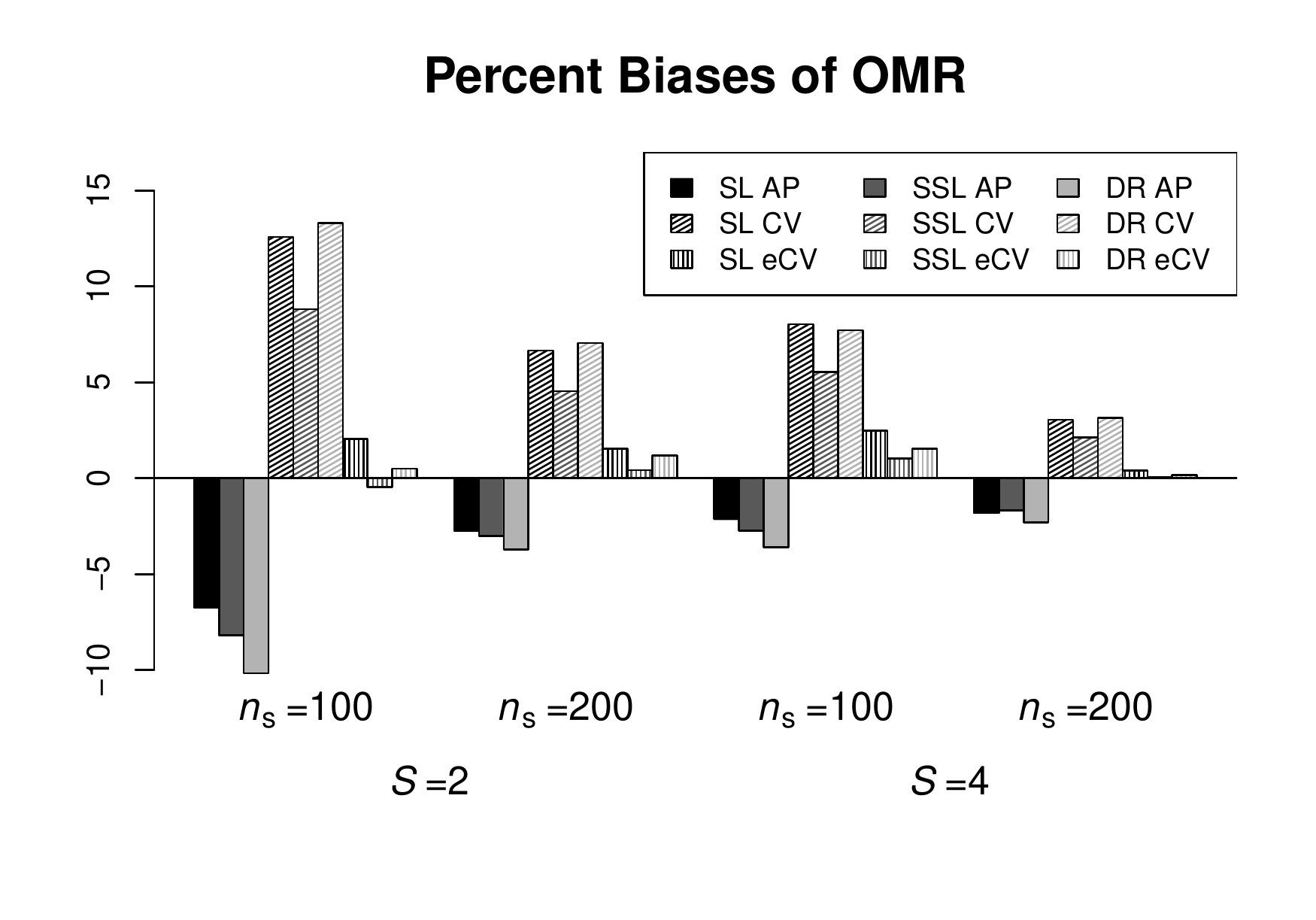}}
\end{minipage}
\end{figure}

\newpage

\begin{figure}[htpb!]
\centering
\caption{Relative efficiency (RE) of  $\Dhat\subSSL^{\omega}$ (SSL) and $\Dhat\subDR^{\omega}$ (DR) compared to $\Dhat\subSL^{\omega}$ for the Brier score (BS) and OMR under (i) ($\Msc_{\mbox{\tiny correct}}$, $\Isc_{\mbox{\tiny correct}}$), (ii) ($\Msc_{\mbox{\tiny incorrect}}$, $\Isc_{\mbox{\tiny correct}}$), and (iii) ($\Msc_{\mbox{\tiny incorrect}}$, $\Isc_{\mbox{\tiny incorrect}}$). Shown are the results obtained with $K=6$ fold CV.}
\label{figure:RE}
  \centering
  \includegraphics[scale = 0.46]  {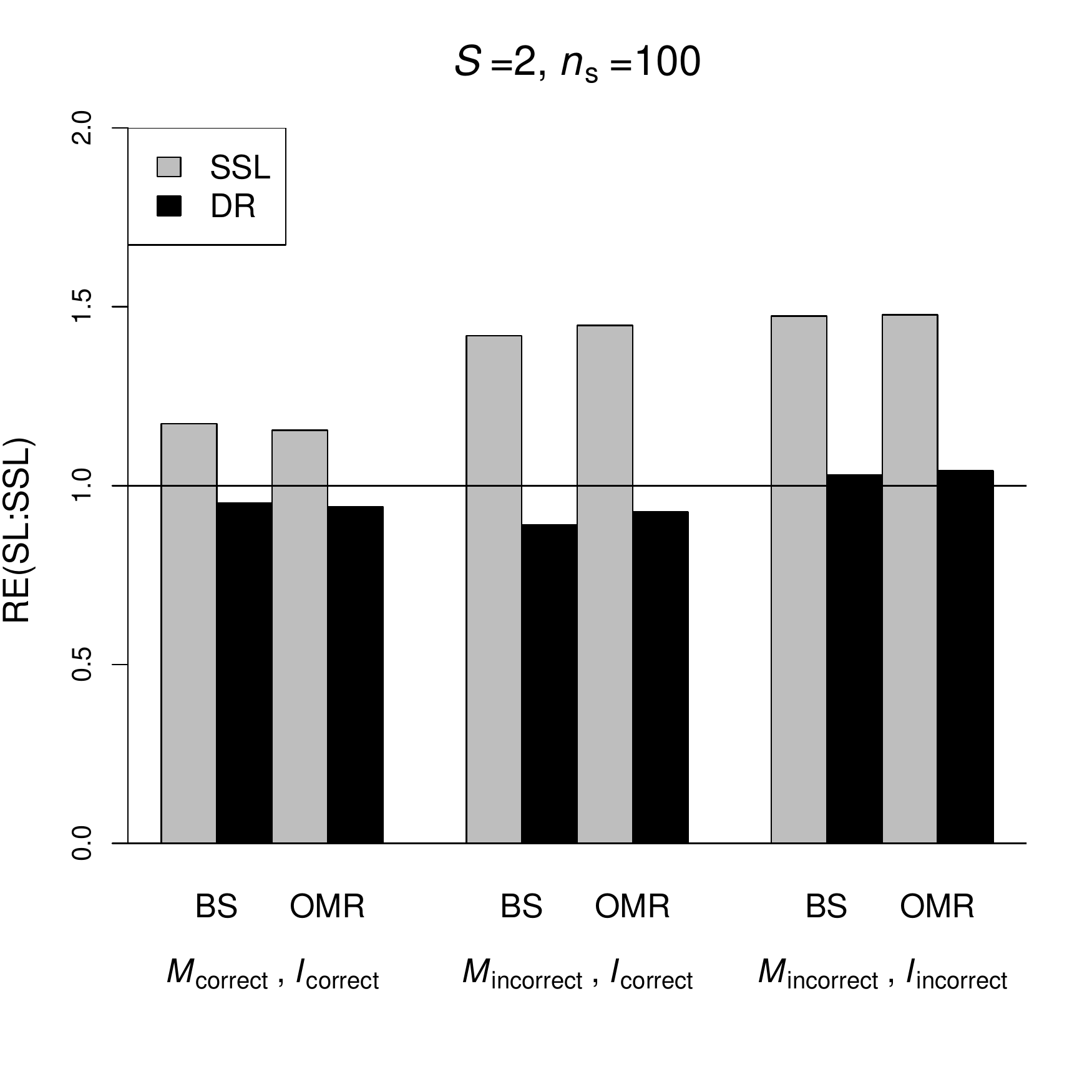}
  \includegraphics[scale = 0.46]  {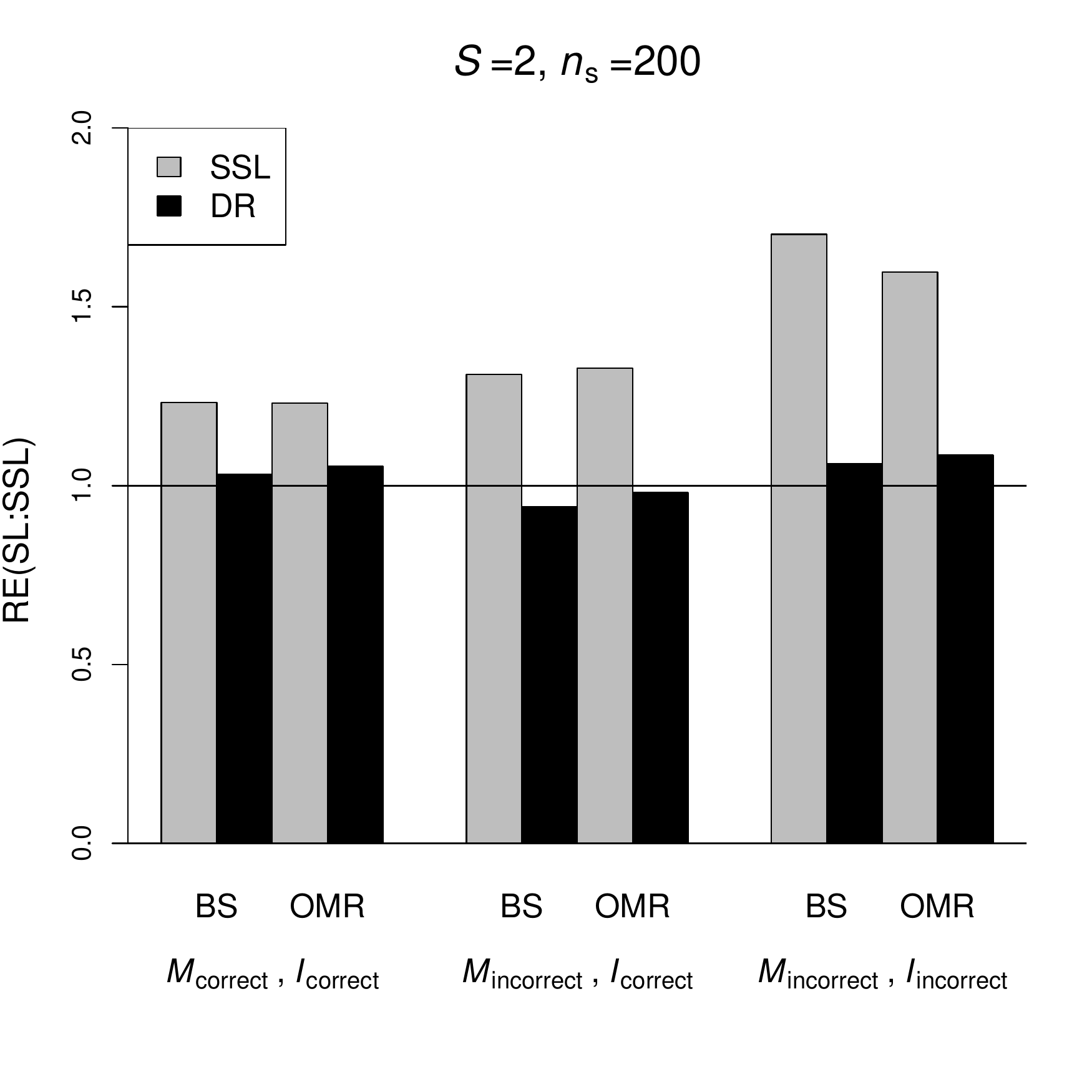}
    \includegraphics[scale = 0.46]  {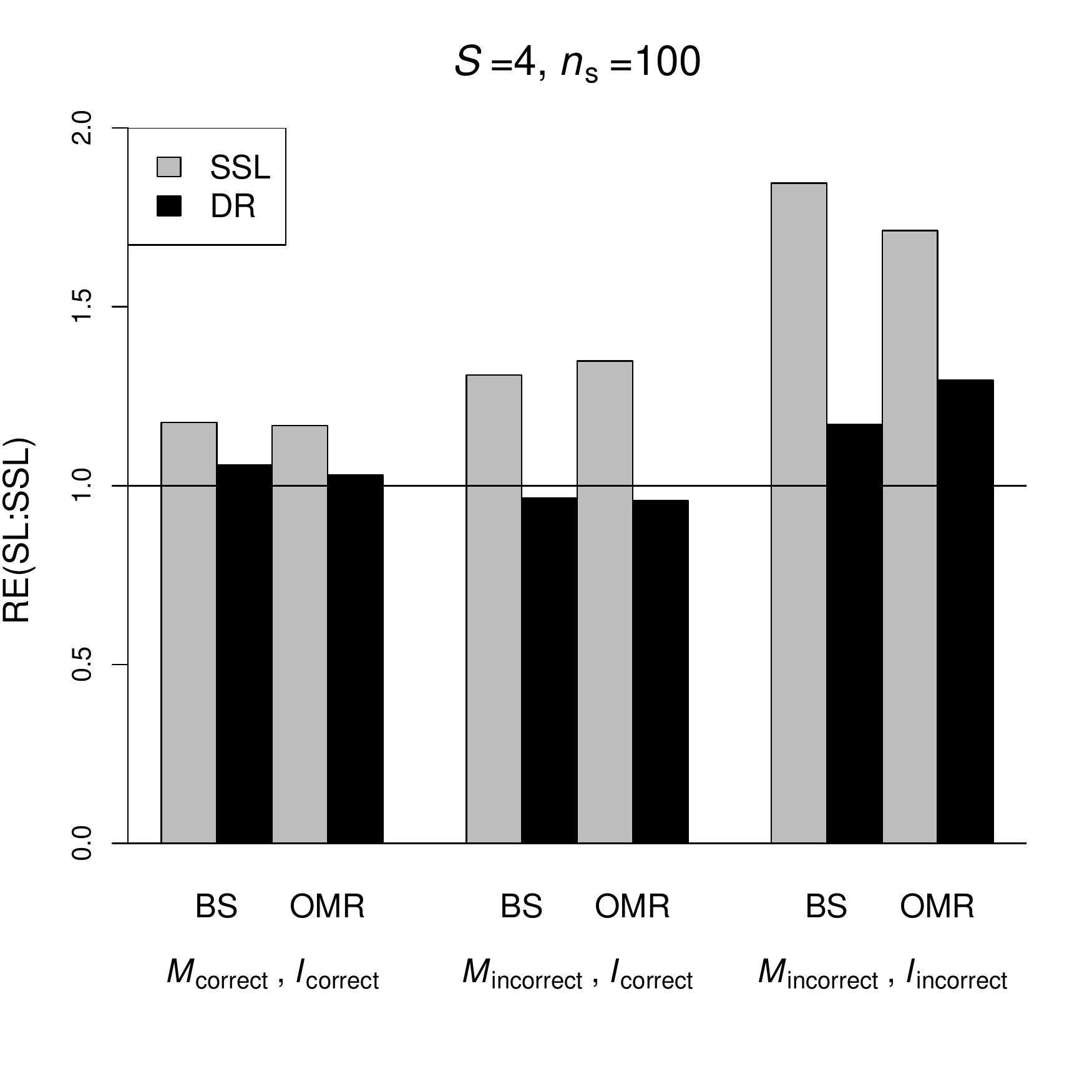}
  \includegraphics[scale = 0.46]  {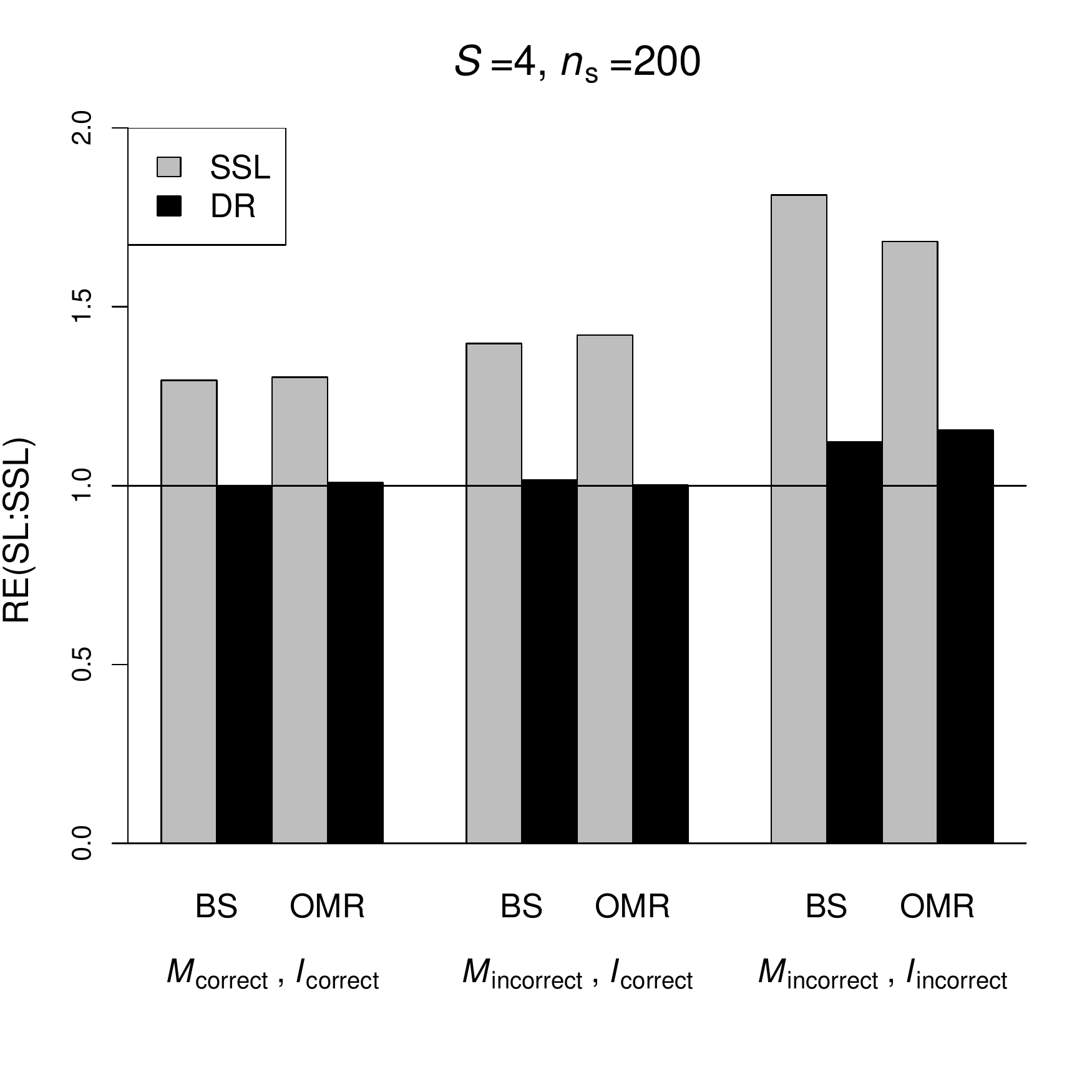}
 % \captionof*{figure}{ Relative Efficiency}
  \label{fig:test1}
\end{figure}

\newpage

\begin{table}[htpb!]
\begin{center}
\caption{{Results from the diabetic neuropathy EHR study: (a) estimates (Est.) of the regression parameters for both codified (COD) and NLP features based on $\bthetahat\subSL$, $\bthetahat\subSSL$ and $\bthetahat\subDR$ along with their estimated SEs and the coordinate-wise relative efficiencies (RE) of $\bthetahat\subSSL$ compared to $\bthetahat\subSL$ and (b) $\Dhat\subSL^{\omega}$, $\Dhat\subSSL^{\omega}$ and $\Dhat\subDR^{\omega}$ along with their estimated SEs and relative efficiencies (RE) of $\Dhat\subSSL^{\omega}$ and $\Dhat\subDR^{\omega}$ compared to $\Dhat\subSL^{\omega}$.}}
\label{table: DN}
\centerline{(a) Estimates of the Regression Coefficients}
\begin{tabular}{cl rr | rrr| rrr}  \hline\hline
&   &  \multicolumn{2}{c|}{$\bthetahat\subSL$} & \multicolumn{3}{c|}{$\bthetahat\subSSL$}  & \multicolumn{3}{c}{$\bthetahat\subDR$}\\
&    & Est. & SE & Est. & SE & RE & Est. & SE & RE \\    \hline
&  Intercept & -3.89 & 0.80 & -3.85 & 0.73 & {1.21} & -3.26 & 0.66 & {\bf 1.45}\\  \hline
    & Neurological exam \& note & -1.09 & 0.81 & -0.90 & 0.61 & {\bf 1.77} & -2.38 & 1.16 & {0.48}\\ 
 	&  Diabetes & -0.24 & 0.57 & 0.02 & 0.43 & {\bf 1.79} & -0.09 & 0.48 & {1.42}\\ 
	&  Type 2 Diabetes Mellitus & -1.05 & 0.73 & -0.52 & 0.60 & {\bf{1.62}} & -0.62 & 0.86 & {0.73}\\ 
	&  Diabetic Neuropathy & 1.99 & 0.86 & 1.79 & 0.68 & {\bf{1.58}} & 1.66 & 1.08& {0.63} \\ 
COD	&  Anti-diabetic Meds & 1.70 & 0.59 & 1.12 & 0.44 & {\bf 1.79} & 1.50 & 0.51 & 1.32\\ 
	& Diabetes Mellitus with  & \raisebox{-2ex}[0cm][0cm]{0.36} & \raisebox{-2ex}[0cm][0cm]{1.17} &\raisebox{-2ex}[0cm][0cm]{0.60} & 
	\raisebox{-2ex}[0cm][0cm]{0.98} & \raisebox{-2ex}[0cm][0cm]{{1.42}} & \raisebox{-2ex}[0cm][0cm]{0.59} & \raisebox{-2ex}[0cm][0cm]{0.90}& \raisebox{-2ex}[0cm][0cm]{\bf 1.68} \\
	& \qquad Neuro Manifestation & \\
	& Other Idiopathic Peripheral  & \raisebox{-2ex}[0cm][0cm]{0.86} & \raisebox{-2ex}[0cm][0cm]{0.71} &\raisebox{-2ex}[0cm][0cm]{0.93} & 
	\raisebox{-2ex}[0cm][0cm]{0.68} & \raisebox{-2ex}[0cm][0cm]{{\bf 1.09}} & \raisebox{-2ex}[0cm][0cm]{1.01} & \raisebox{-2ex}[0cm][0cm]{0.76} & \raisebox{-2ex}[0cm][0cm]{0.89}\\
	& \qquad Autonomic Neuropathy & \\
	& Normal Glucose & -0.56 & 0.65 & -0.20 & 0.51 & {\bf{1.64}} & -1.27 & 0.80 & {0.66}\\ \hline
 	&  Diabetic & 0.30 & 0.58 &-0.57 & 0.49 & {\bf{1.39}} & 0.14 & 0.65 & {0.80}\\ 
NLP	&  HgA1c & -0.52 & 0.75 & -0.64 & 0.70 & {\bf{1.16}} & -0.67 & 0.85 & {0.79}\\ 
	&  Neuropathy & 0.27 & 0.58 & 0.30 & 0.47 & {\bf{1.55}} & 0.37 & 0.54 & {1.15}\\   
	\hline
\end{tabular}\vspace{.2in}

\centerline{(b) Estimates of the Accuracy Parameters ($\times 100$)}
\begin{tabular}{crr|rrr|rrr}
  \hline
\multicolumn{1}{r}{} &  \multicolumn{2}{c|}{$\Dhat\subSL^{\omega}$} & \multicolumn{3}{c|}{$\Dhat\subSSL^{\omega}$} & \multicolumn{3}{c}{ $\Dhat\subDR^{\omega}$}\\
    &  Est. & SE  & Est. & SE & RE & Est. & SE & RE\\   \hline
Brier score &  8.97 & 2.09 & 9.59 & 1.68 & {\bf{1.55}} & 9.60 & 1.87 & 1.26\\ 
OMR & 12.87 & 3.25  & 14.01 &  2.55 & {\bf{1.63}} & 12.29 & 3.04 & 1.14 \\ 
   \hline
\end{tabular}
\end{center}
\end{table}

\clearpage
\newpage

\appendix

\section*{Appendix}
\numberwithin{equation}{section}
\setcounter{lemma}{0}
\setcounter{cond}{0}
\setcounter{table}{0}
\setcounter{figure}{0}
\setcounter{remark}{0}
\setcounter{theorem}{0}

\renewcommand{\thelemma}{A\arabic{lemma}}
\renewcommand{\thetheorem}{A\arabic{theorem}}
\renewcommand{\thecond}{A\arabic{cond}}
\renewcommand{\thefigure}{A\arabic{figure}}
\renewcommand{\thetable}{A\arabic{table}}
\renewcommand{\theremark}{A\arabic{remark}}

Here we provide justifications for our main theoretical results. The following lemma confirming the existence and uniqueness of the limiting parameters $\bthetabar$, $\bgammabar$, and $\bnubar_{\btheta}$ will be used in our subsequent derivations.
\begin{lemma}
Under conditions \ref{cond:1}-\ref{cond:2}, unique $\bthetabar$ and $\bgammabar$ exist. In addition, there exists $\delta>0$ such that a unique $\bnubar_{\btheta}$ exists for any $\btheta$ satisfying $\|\btheta-\bthetabar\|_2<\delta$.
\label{lem:1}
\end{lemma}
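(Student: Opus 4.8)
The plan is to recast each of the three estimating equations as the stationarity condition of a convex population objective, and then to argue existence by coercivity and uniqueness by strict convexity at the root. Since $g$ maps monotonically into $(0,1)$ and is continuously differentiable (Condition \ref{cond:3}), set $G(\eta)=\int_0^\eta g(s)\,ds$ and define the per-observation loss $\ell(\eta;y)=G(\eta)-y\eta$, whose derivative in $\eta$ is $g(\eta)-y$ and whose second derivative is $\dot g(\eta)\ge 0$. For $\bgammabar$ I would study $L(\bgamma)={\rm E}[\ell(\bgamma\trans\bPhi(\bu);y)]$; differentiating under the integral (justified by the compact support in Condition \ref{cond:1} and the boundedness of $g$ and $\dot g$) gives $\nabla L(\bgamma)=-\bQ(\bgamma)$ and $\nabla^2 L(\bgamma)={\rm E}[\bPhi^{\otimes2}\dot g(\bgamma\trans\bPhi)]\succeq\bzero$, so $L$ is convex and its stationary points are exactly the roots of $\bQ$. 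The analogous objectives ${\rm E}[\ell(\btheta\trans\bx;y)]$ and, for fixed $\btheta$, $M(\bnu\mid\btheta)={\rm E}[\ell(\bgammabar\trans\bPhi+\bnu\trans\bz_\btheta;y)]$ play the same role for $\bU$ and for $\bR(\cdot\mid\btheta)$ respectively.

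For existence I would show each objective is coercive. Writing $\bgamma=t\bv$ with $\|\bv\|=1$ and conditioning on $\bu$, the integrand $\ell(t\bv\trans\bPhi;y)$ tends to $+\infty$ on the event $\{\bv\trans\bPhi\ne 0\}$ whenever $0<m(\bu)<1$, so Fatou's lemma forces $L(t\bv)\to\infty$ unless $\bv$ almost surely fails to separate the outcomes; the non-separation clause of Condition \ref{cond:2}(A) excludes the latter and thereby produces a minimizer, hence a root of $\bQ$. The same argument applied to the $\bx$-coordinates (which are contained in $\bPhi$ by Condition \ref{cond:1}, so a separating direction for $\bx$ lifts to one for $\bPhi$) produces a root $\bthetabar$ of $\bU$, and applying it to the direction $\br$ in the $\{\bPhi,\bz_\btheta\}$ space with the offset $\bgammabar\trans\bPhi$ held fixed, using the non-separation clause of Condition \ref{cond:2}(B), produces a root $\bnubar_\btheta$ of $\bR(\cdot\mid\btheta)$ for each $\btheta$ in the neighborhood $\bTheta$ supplied by that condition.

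For uniqueness I would invoke the positive-definiteness clauses. The minimizer set of a convex function is convex, so if two distinct roots existed then the whole segment joining them would minimize $L$; but ${\rm E}[\bPhi^{\otimes2}\dot g(\bgammabar\trans\bPhi)]\succ\bzero$ (Condition \ref{cond:2}(A)) makes $L$ strictly convex along every direction at $\bgammabar$, which is incompatible with $L$ being constant on a segment emanating from $\bgammabar$; hence $\bgammabar$ is unique. Conditions \ref{cond:2}(C) and \ref{cond:2}(B) supply the corresponding positive-definite Hessians ${\rm E}[\bx^{\otimes2}\dot g(\bthetabar\trans\bx)]\succ\bzero$ and ${\rm E}[\bz_\btheta^{\otimes2}\dot g(\bgammabar\trans\bPhi+\bnubar_\btheta\trans\bz_\btheta)]\succ\bzero$, yielding uniqueness of $\bthetabar$ and of $\bnubar_\btheta$ for each $\btheta\in\bTheta$; taking $\delta>0$ to be the radius of $\bTheta$ completes the statement.

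The step I expect to be the main obstacle is the coercivity argument linking divergence of the objective to the combinatorial non-separation hypotheses, particularly for $\bnubar_\btheta$: there the linear predictor carries the fixed offset $\bgammabar\trans\bPhi$ and $\bz_\btheta=[1,\Ysc(\btheta\trans\bx)]\trans$ shares its constant coordinate with $\bPhi$, so I must argue that a non-coercive direction $\br$ would have to separate the outcomes through $\{\bPhi,\bz_\btheta\}$ strongly enough to contradict Condition \ref{cond:2}(B), while also handling the integrability of $\ell$ when $m(\bu)$ approaches $0$ or $1$. I would also have to confirm that this can be arranged simultaneously for all $\btheta$ in a single neighborhood $\bTheta$ rather than one $\btheta$ at a time; the compact support of $\bPhi$ and the continuity of $\Ysc(\btheta\trans\bx)$ in $\btheta$ are what I expect to make this uniformity available.
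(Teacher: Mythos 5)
Your proposal is correct and is essentially the argument the paper relies on: the paper's proof reduces Conditions \ref{cond:2}(A)--(B) to non-separation statements and then invokes Appendix I of \cite{tian2007model} for existence and uniqueness of $\bthetabar$ and $\bgammabar$, which is precisely the convex-loss coercivity argument you reconstruct (with $L(\bgamma)={\rm E}[G(\bgamma\trans\bPhi)-y\,\bgamma\trans\bPhi]$ and its analogues), and it handles $\bnubar_{\btheta}$ by the same non-separation reasoning plus the positive-definite Hessian, exactly as you do. The only substantive difference is that you supply in full the coercivity and strict-convexity details that the paper delegates to the citation.
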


\begin{proof}
Conditions \ref{cond:2} (A) and (B) imply that there is no $\btheta$ and $\bgamma$ such that with non-trivial probability,
\[
I(Y_1>Y_2)=I(\btheta\trans\bx_1>\btheta\trans\bx_2),
\]
and
\[
I(Y_1>Y_2)=I(\bgamma\trans\bPhi_1>\bgamma\trans\bPhi_2).
\]
It follows directly from Appendix I of \cite{tian2007model} that there exist finite $\bthetabar$ and $\bgammabar$ that solve $U(\bthetabar)$ and $Q(\bgammabar)$, respectively,  and that $\bthetabar$ and $\bgammabar$ are unique. For $\btheta \in \bTheta,$ there exists no $\bnu$ such that
\[
I(Y_1>Y_2)=I(\bgammabar\trans\bPhi_1+\bnu\trans\bz_{1\btheta}>\bgammabar\trans\bPhi_2+\bnu\trans \bz_{2\btheta}),
\]
or 
\[
I(Y_1>Y_2)=I(\bgammabar\trans\bPhi_1+\bnu\trans \bz_{1\btheta}<\bgammabar\trans\bPhi_2+\bnu\trans\bz_{2\btheta}),
\]
which similarly implies there exists a finite $\bnubar_{\btheta}$ that is the solution to $\bR(\bnu|\btheta)=0$. The solution is also unique as  $E \left[\bz_{\btheta}^{\otimes2}\dot{g}\{\bar{\bgamma}\trans \bPhi+\bnu\trans\bz_{\btheta}\}\right] \succ\bzero$ for any $\bnu.$
\end{proof}

\section{Estimation Procedure for $\bthetahat \subSSL$ }\label{sec:ss-mod-param}
We propose to obtain a simple SS estimator for $\bthetabar$, $\widecheck \btheta \subSSL$, as the solution to  
 \begin{equation}
 \bUhat_{N}(\btheta)= \frac{1}{N}\sum_{i = 1}^{N} \bx_i \{ g(\bgammatilde \trans \bPhi_i) - g(\btheta \trans \bx_i)  \} =\bzero.
 \end{equation}
Note that when $\bu$ includes the stratum information $\Ssc$ and the imputation model (\ref{impmod}) is correctly specified, there is no need to use the inverse probability weighted (IPW) estimating equation with $\what_i$ as in (\ref{impee}). However, under the general scenario where the imputation model may be misspecified, the unweighted estimating equation is not guaranteed to provide an asymptotically unbiased estimate of $\bgammabar$ and necessitates the use of the IPW approach. 

As detailed in the subsequent sections, when the imputation and outcome models are misspecified, the efficiency gain of $\widecheck\btheta\subSSL$ relative to $\bthetahat\subSL$, is not theoretically guaranteed. We therefore obtain the final SS estimator, denoted as $\bthetahat\subSSL=(\thetahat_{\SSL,0},  \dots,\thetahat_{\SSL,p})\trans$, as a linear combination of $\bthetahat\subSL$ and $\widecheck \btheta\subSSL$ to minimize the asymptotic variance. For simplicity we consider here the component-wise optimal combination of the two estimators. That is, the $j$th component of $\bthetabar$, $\bar{\theta}_{j}$, is estimated with
$$\thetahat_{\SSL,j} =  \What_{1j} \widecheck \theta_{\SSL,j} +  (1-\What_{1j}) \thetatilde_{\SL, j}$$
where $\What_{1j}$ is the first component of the vector $\bWhat_j =\bone\trans \bSigmahat_j^{-1}/(\bone\trans \bSigmahat_j^{-1} \bone)$ and  $\bSigmahat_j$ is a consistent estimator for $\cov\{(\widecheck \theta_{\SSL,j}, \thetatilde_{\SL,j})\trans\}$.  To estimate the variance of $\bthetahat \subSSL$, one may rely on estimates of the influence functions of $\widecheck \btheta \subSSL$ and $\bthetahat \subSL$. To avoid under-estimation in a finite sample, we obtain bias-corrected estimates of the influence functions via $K$-fold CV. Details on the $K$-fold CV procedure, as well as the computations for the aforementioned estimation of $\bSigmahat_j$ and $\What_{1j}$, are given in Appendix \ref{sec: inf-thetaSSL}.  

\section{Cross-validation Based Inference for $\bthetahat \subSSL$}\label{sec: inf-thetaSSL}
Here we provide the details of the procedure to obtain $\bthetahat \subSSL$ as well as an estimate of its variance.  We employ $K$-fold CV in the proposed procedure to adjust for overfitting in finite sample and denote each fold of $\Lscr$ as $\Lscr_k$ for $k = 1, \dots, K$.  First, we estimate $\bSigmahat_j$ with 
\begin{equation*}
{n^{-1}}\sum_{k = 1}^K \sum_{i \in \Lscr_k} \{ \Wsc_j( \bgammatilde_{(\text{-}k)} ,\bD_i),  \Vsc_j(\bthetahat_{(\text{-}k)}, \bD_i) \}\{ \Wsc_j(\bgammatilde_{(\text{-}k)} ,\bD_i),  \Vsc_j(\bthetahat_{(\text{-}k)}, \bD_i) \} \trans 
\end{equation*} 
where  $\bgammatilde_{(\text{-}k)}$ is the estimator of $\bgammabar$ based on $\Lscr/\Lscr_k$,  $\bthetahat_{(\text{-}k)}$ is the supervised estimator of $\bthetabar$ based on $\Lscr/\Lscr_k$,
\begin{align*}
 \Wsc(\bgammatilde_{(\text{-}k)} ,\bD_i) =    \bAhat^{-1}  [\varpi_i \bx_i\{ y_i - g(\bgammatilde_{(\text{-}k)} \trans  \bPhi_i )\}] ,   \quad& \Vsc(\bthetahat_{(\text{-}k)}, \bD_i)  =  \bAhat^{-1}  [\varpi_i \bx_i\{ y_i - g(\bthetahat_{(\text{-}k)}\trans  \bx_i)\}],  \\ 
 \bAhat  =  N^{-1} \sum_{i = 1}^{N} \bx_i^{\otimes 2}  \dot{g}(\widecheck \btheta \subSSL \trans  \bx_i) &\quad \mbox{and} \quad \varpi_i = \what_i n/ N
\end{align*}
In practice,  $\bSigmahat_j$ may be unstable due to the high correlation of $\widecheck \btheta \subSSL$ and $\bthetahat \subSL$.  One may use a regularized estimator $(\bSigmahat_j + \delta_n \bI)^{-1}$ with some $\delta_n = O(n^{-\frac{1}{2}})$ to stabilize the estimation and obtain $\bthetahat \subSSL$ accordingly.  The covariance of $\bthetahat \subSSL$ may then  be consistently estimated with
\begin{equation}
n^{-1}\sum_{k = 1}^K \sum_{i \in \Lscr_k}  \{ \Zsc( \bgammatilde_{(\text{-}k)}, \bthetahat_{(\text{-}k)}, \bD_i)\}\{ \Zsc( \bgammatilde_{(\text{-}k)}, \bthetahat_{(\text{-}k)}, \bD_i)\} \trans \quad \mbox{where}
\end{equation}
\begin{equation}
 \Zsc(  \bgammatilde_{(\text{-}k)}, \bthetahat_{(\text{-}k)}, \bD_i) = \bWhat \{\Wsc(  \bgammatilde_{(\text{-}k)} ,\bD_i) \} + ( \bI - \bWhat) \{\Vsc(\bthetahat_{(\text{-}k)},\bD_i) \}
 \end{equation} 
$\bWhat = \text{diag}(\What_{10},  \dots, \What_{1p})$ is an estimate of  $\bW = \text{diag}(W_{10},  \dots, W_{1p})$, $\What_{1j}$ is the first component of $\bWhat_j =\bone\trans \bSigmahat_j^{-1}/(\bone\trans \bSigmahat_j^{-1} \bone)$ and $W_{1j}$ is the first component of $\bW_j =\bone\trans \bSigma_j^{-1}/(\bone\trans \bSigma_j^{-1} \bone)$.  The confidence intervals for the regression parameters can be constructed with the proposed variance estimates and the asymptotic normal distribution of the SS estimator.

\section{Asymptotic Properties of $ \bthetahat \subSL$}\label{sec: asym-thetaSL}
The main complication in deriving the asymptotic properties of the SL estimators arises from the fact that 
$P(V_i = 1 \mid \Fscr) = \pihat_{\Ssc_i}  \to 0$ as $n \to \infty$ and hence $\what_i=V_i/\pihat_{\Ssc_i}$ is an ill behaved random variable tending to infinity in the limit for those with $V_i=1$. {This substantially distinguishes the SS setting from the standard missing data literature}. To overcome this complication, we note that for subjects in the labeled set, $\Ninv \what_i= \sum_{s=1}^S I(\Ssc_i = s) \rhohat_s\ninv_s V_i$ and $V_i \mid \Ssc_i = s,$ are independent identically distributed (i.i.d) random variables since the labeled observations are drawn randomly within each stratum. {\red Also note that $ \rhohat_s\overset{p}\rightarrow\rho_s$ as assumed in Section \ref{sec-data-struc}.} Hence for any function $f$ with $\var\{f(\bF_i) \mid S_i=s\} < \infty$, 
\begin{align}
 N^{-1}\sumiN \what_i f(\bF_i)  & = \sum_{s=1}^S \rhohat_s \ninv_s\sum_{i=1}^N  V_iI(\Ssc_i = s) f(\bF_i)\\
 &= \sum_{s=1}^S {\red \{\rho_{s} +o_p(1)\}} \left\{\ninv_s \sum_{i=1}^N  V_i I(\Ssc_i = s) f(\bF_i) \right\} + o_p(1)  \nonumber \\
& = \sumsS \rho_s  {\rm E}\{ f(\bF_i) \mid \Ssc_i = s\} + o_p(1) = {\rm E}\{ f(\bF_i) \}+ o_p(1)  . \label{eq-IPW}
\end{align}

We begin by verifying that $\bthetahat \subSL$ is consistent for $\bthetabar$.  It suffices to show that  (i) $ \sup_{\btheta\in\mathbf{\Theta}} \| \bU_n(\btheta) - \bU(\btheta)   \|_2 = o_p(1)$ and (ii) $\inf_{\|\btheta-\bthetabar\|_2>\epsilon}\|\bU(\btheta)\|_2>0$ for any $\epsilon>0$ \citep[Lemma 2.8]{newey1994large}.  To this end, we write
\begin{align*}
\bU_n( \btheta) &= \frac{1}{N}\sum_{i = 1}^N \what_i \bx_i \{ y_i -g(\btheta \trans\bx_i) \} =  \sum_{s=1}^S  \rho_s   \left[n_s^{-1} \sum_{V_i = 1} I(\Ssc_i = s) \bx_i \{ y_i -g(\btheta \trans\bx_i) \}\right] + o_p(1) .
\end{align*}
Under Conditions \ref{cond:1}-\ref{cond:2}, $\bx$ belongs to a compact set and $\dot g(\btheta\trans\bx)$ is continuous and uniformly bounded for $\btheta\in\mathbf{\Theta}$. From the uniform law of large numbers (ULLN) \citep[Theorem 8.2]{pollard1990empirical}, 
$n_s^{-1} \sum_{V_i = 1} I(\Ssc_i = s) \bx_i \{ y_i -g(\btheta \trans\bx_i) \}$ converges to $E\left[\bx_i \{ y_i -g(\btheta \trans\bx_i) \}\mid \Ssc_i=s \right]$ in probability uniformly as $n\rightarrow \infty$ and
$ \sup_{\btheta\in\mathbf{\Theta}} \| \bU_n(\btheta) - \bU(\btheta)   \|_2 = o_p(1)$. Furthermore, (ii) follows directly from Lemma \ref{lem:1} and consequently $\bthetahat \subSL \overset{p} \rightarrow \bthetabar$ as $n \to \infty$.

Next we consider the asymptotic normality of $\Wschat\subSL = \nhalf( \bthetahat \subSL - \bthetabar)$. Noting that $\bthetahat \subSL \overset{p} \rightarrow \bthetabar$ {\red and $ \rhohat_s\overset{p}\rightarrow\rho_s$}, we apply Theorem 5.21 of \cite{van2000asymptotic} to obtain the Taylor expansion
\begin{align*}
\Wschat\subSL = \nhalf( \bthetahat \subSL - \bthetabar) &=  \nhalf \frac{1}{N}   \sum_{i = 1}^N \what_i\bA^{-1}\bx_i \{y_i - g(\bthetabar \trans \bx_i)   \} + o_p(1)   \\
&= \nhalf \sum_{s=1}^S {\red \{\rho_s+o_p(1)\}} \left\{ \ninv_s \sum_{V_i=1} I(\Ssc_i = s) \be_{\SL i}  \right\}+ o_p(1),
\end{align*}
where $\be_{\SL i} = \bA^{-1}\bx_i \{y_i - g(\bthetabar \trans \bx_i) \}$ and $\bA = {\rm E} \{ \bx_i^{\otimes 2}  \dot{g}(\bthetabar \trans \bx_i)\}$. It then follows by the {\red{classical Central Limit Theorem}} that $\Wschat\subSL \to N(0, \bSigma\subSL)$ in distribution and {\red 
\[
\Wschat\subSL=\nhalf \sum_{s=1}^S \rho_s \left\{ \ninv_s \sum_{V_i=1} I(\Ssc_i = s) \be_{\SL i}  \right\}+ o_p(1),
\]}
where
$\bSigma\subSL =  \sum_{s=1}^S \rho_s^2 \rho_{1s}^{-1} E\{\be_{\SL i}^{\otimes 2} \mid \Ssc_i = s \}.$

\section{Asymptotic Properties of $\Dhat \subSL( \bthetahat \subSL)$}\label{sec: asym-DSL}
We begin by showing that  $\Dhat \subSL( \bthetahat \subSL) \overset{p} \to D(\bthetabar)$ as $n \to \infty$.  We first note that {\red since $ \rhohat_s\overset{p}\rightarrow\rho_s$},
\begin{equation}
\Dhat \subSL( \btheta) = \sum_{s= 1}^S \rho_s  \left[ \nsinv  \sum_{V_i=1} I(\Ssc_i = s) d\{y_i, \yscrl(\btheta \trans \bx_i)\} \right] + o_p(1).
\end{equation}
It follows by the ULLN that $\sup_{\btheta\in\mathbf{\Theta}} | \Dhat\subSL( \btheta) - D( \btheta) | = o_p(1)$ since $d\{y,\yscrl(\btheta \trans \bx\}$ is continuously differentiable in $\btheta$ and uniformly bounded.  The consistency of $\Dhat \subSL( \bthetahat\subSL)$ for $D(\bthetabar)$ then follows from the fact that $\bthetahat \subSL$ converges in probability to $\bthetabar$ as $n \to \infty$.

To establish the asymptotic distribution of $\Tschat\subSL =\nhalf \{ \Dhat \subSL( \bthetahat \subSL)  - D(\bthetabar)\}$, we first consider $\Tsctilde\subSL(\btheta) = \nhalf \{ \Dhat\subSL( \btheta)  - D(\btheta)\}$. We verify that there exists $\delta>0$, such that the classes of functions indexed by $\btheta$:
\begin{equation*}
\begin{split}
&\Bsc_1 = \{I(\Ssc = s) |y - I\{g(\btheta \trans \bx) > c\}| :  \| \btheta - \bthetabar \|_2 < \delta  \} \ \\
\mbox{and}\quad&\Bsc_2 = \{I(\Ssc = s) [y - g(\btheta \trans \bx) ]^2 :  \| \btheta - \bthetabar \|_2 < \delta  \}
\end{split}    
\end{equation*}
are Donsker classes. For $\Bsc_1$, we note that $\{ I\{g(\btheta \trans \bx) > c\} :  \| \btheta - \bthetabar \|_2 < \delta  \}$ is a Vapnik-Chervonenkis class  \citep[Page 275]{van2000asymptotic} and thus 
$$\Bsc_1 = \{ I(s = s_0) [ I(y = 0) I\{g(\btheta \trans \bx) > c\} + I(y = 1) I\{g(\btheta \trans \bx) \le  c\}  ]:   \| \btheta - \bthetabar \|_2 < \delta  \}$$
is a Donsker class.  For $\Bsc_2$, $[y - g(\btheta \trans \bx) ]^2$ is continuously differentiable in $\btheta$ and uniformly bounded by a constant.  It follows that $\Bsc_2$ is a Donsker class \citep[Example 19.7]{van2000asymptotic}. By Theorem 19.5 of \cite{van2000asymptotic} we then have 
\[
\nhalf \{ \Dhat \subSL( \btheta)  - D(\btheta)\} =  \sum_{s= 1}^S \rho_s\rho_{1s}^{-\half}   \nsnhalf \sum_{V_i = 1}  I(\Ssc_i = s) [d\{y_i, \yscrl(\btheta \trans \bx_i)\} - D(\btheta)]   + o_p(1),
\]
which converges weakly to a mean zero Gaussian process index by $\btheta$. Thus, $\nhalf \{ \Dhat \subSL ( \btheta)  - D(\btheta)\}$ is stochastically equicontinuous at $\bthetabar$. In addition, note that $D(\btheta)$ is continuously differentiable at $\bthetabar$ {\red and $\rhohat_s\overset{p}{\to}\rho_s$}. It then follows that
\begin{align*}
\Tschat\subSL =&\nhalf \{ \Dhat \subSL( \bthetahat \subSL)  - D(\bthetahat \subSL)\} +\nhalf \{ D(\bthetahat \subSL)  - D(\bthetabar)\} \\
=&\nhalf \sum_{s= 1}^S \rho_s  \left( \ninv_s \sum_{V_i = 1}  I(\Ssc_i = s) \left[d(y_i, \Yscbar_i)   - D(\bthetabar) + \dot{D}(\bthetabar)\trans \be_{\SL i}  \right] \right) + o_p(1),
\end{align*}
which converges in distribution to $N(0, \sigma^2\subSL)$ where 
$$\sigma^2\subSL =  \sum_{s= 1}^S \rho_s^2\rho_{1s}^{-1}  {\rm E}[ \{ d(y_i, \Yscbar_i)   - D(\bthetabar) + \dot{\bD}(\bthetabar)\trans \be_{\SL i} \}^2  \mid \Ssc_i = s ].$$

%where $\dot{D}(\btheta) = E[\Dhat \subSL (\btheta)]$ and $\Dhat \subSL (\btheta)$ is the quasi-derivative of $\Dhat \subSL (\btheta)$

%In our simulations and data analysis, we have found that $\{\var(\bthetahat \subSL) + \var(\widecheck \btheta \subSSL) \}/2n^{0.6}$ works well. 

\section{Asymptotic Properties of $\widecheck \btheta \subSSL$}\label{sec: asym-thetaSSL}
We first consider the asymptotic properties of $\bgammatilde$. Under Conditions \ref{cond:1}-\ref{cond:2} and using that $\lambda_n = o(n^{-\frac{1}{2}})$ and {\red $ \rhohat_s\overset{p}\rightarrow\rho_s$}, we can adapt the same procedure as in Appendix \ref{sec: asym-thetaSL} to show that $\bgammatilde \overset{p} \to \bgammabar$ as $n \to \infty$.  We obtain the following Taylor series expansion
\begin{align*}
\nhalf( \bgammatilde- \bgammabar) &=    \nhalf \sum_{s= 1}^S \rho_s  \left[ \ninv_s \sum_{i = 1}^n  I(\Ssc_i = s) \bC^{-1}\bPhi_i \{y_i - g(\bgammabar \trans \bPhi_i) \} \right] + o_p(1)
\end{align*}
where $\Cbb = {\rm E} \{ \bPhi_i^{\otimes 2} \dot{g}(\bgammabar \trans \bPhi_i) \} $.  We then have that $\nhalf( \bgammatilde- \bgammabar)$ converges to zero-mean Gaussian distribution. To verify that $\widecheck \btheta \subSSL$ is consistent for $\bthetabar$, it suffices to show that (i) $ \sup_{\btheta\in\mathbf{\Theta}} \| \bUhat_{N}(\btheta) - \bU^0(\btheta) \|_2 = o_p(1)$ and (ii) {$ \inf_{\|\btheta-\bthetabar\|_2\ge \epsilon|}\|\bU(\btheta)\|_2 >0,$ for any $\epsilon>0$} \citep[Lemma 2.8]{newey1994large}, where
\begin{equation*}
\bU^0(\btheta) = {\rm E} [ \bx_i \{ g(\bgammabar \trans  \bPhi_i) - g(\btheta \trans \bx_i) \} ] .
\end{equation*}
For (i), we first note that since $\bgammatilde \overset{p} \to \bgammabar$, $\dot g(\cdot)$ is continuous and $\bPhi$ is bounded,
\begin{equation}
\begin{split}
\bUhat_{N}(\btheta)&= \Ninv\sum_{i = 1}^{N} \bx_i \{ g(\bgammatilde \trans \bPhi_i) - g(\btheta \trans \bx_i)  \}=\Ninv\sum_{i = 1}^{N} \bx_i \{ g(\bgammabar \trans \bPhi_i) - g(\btheta \trans \bx_i)\}+o_p(1).
\end{split}
\label{equ:app-c1}
\end{equation}
Note that $g(\bgammabar \trans \bPhi) - g(\btheta \trans \bx)$ is bounded and continuously differentiable in $\btheta$. We then apply the ULLN to have that $ \sup_{\btheta\in\mathbf{\Theta}} \| \bUhat_{N}(\btheta) - \bU^0(\btheta)\|_2 = o_p(1).$ For (ii), we note
\begin{equation*}
\bU^0(\btheta) = -{\rm E}  [ \bx_i \{ y_i - g(\bgammabar \trans  \bPhi_i) \} ]+ {\rm E}  [ \bx_i \{ y_i - g(\btheta \trans  \bx_i) \} ]={\rm E}  [ \bx_i \{ y_i - g(\btheta \trans  \bx_i) \} ].
\end{equation*}
Therefore, (ii) holds by Lemma \ref{lem:1} and $\widecheck \btheta \subSSL$ is consistent for $\bthetabar$. 

Now we consider the weak convergence of $\nhalf( \widecheck \btheta \subSSL - \bthetabar)$. Under Conditions \ref{cond:1}-\ref{cond:2}, we have the Taylor expansion
  \begin{align*}
 \nhalf( \widecheck \btheta \subSSL - \bthetabar) = \nhalf \bA^{-1} \left[ N^{-1} \sum_{i = 1}^{N} \bx_i \{ g(\bgammabar \trans \bPhi_i) - g(\bthetabar \trans \bx_i)  \} +    \bB( \bgammatilde- \bgammabar)   \right] +o_p(1),
  \end{align*}
where $\bB =  {\rm E} \{  \bx_i \bPhi_i\trans   \dot{g}(\bgammabar \trans \bPhi_i) \}$. This expansion coupled with the fact that
$N^{-1} \sum_{i = 1}^{N} \bx_i \{ g(\bgammabar \trans \bPhi_i) - g(\bthetabar \trans \bx_i)  \}  = O_p(N^{-\frac{1}{2}})=o_p(n^{-\frac{1}{2}})$ imply that
\begin{align*}
\nhalf( \widecheck \btheta \subSSL - \bthetabar) &=  \nhalf \sum_{s= 1}^S \rho_s  \left[ \ninv_s \sum_{i = 1}^n \bA^{-1}  \bB \bC^{-1} I(\Ssc_i = s) \bPhi_i \{y_i - g(\bgammabar \trans \bPhi_i) \} \right]  + o_p(1).
\end{align*}
Letting $\bx_i=(x_{i1},...,x_{ip})\trans$, we note that for $j = 1, \dots, p$,
\begin{align*}
[\bB  \Cbb^{-1} ]_j = {\rm E} \{  x_{ij}\bPhi_i\trans   \dot{g}(\bgammabar \trans \bPhi_i) \}  {\rm E} [\{ \bPhi_i^{\otimes 2} \dot{g}(\bgammabar \trans \bPhi_i) \}]^{-1}
=   \argmin{\bbeta} {\rm E} \{  \dot{g}(\bgammabar \trans \bPhi_i) (x_{ij} - \bbeta \trans\bPhi_i)^2 \}.
\end{align*}
Since $x_{ij}$ is {a component of the vector $\bPhi(\bu)$, the minimizer $\bbeta$ can be chosen such that $ x_{ij} - \bbeta \trans\bPhi_i = 0$ for $i=1, \cdots, N$ which implies that $\bx_i =   \bB  \Cbb^{-1} \bPhi_i$.} Thus 
\begin{align*}
\nhalf( \widecheck \btheta \subSSL - \bthetabar) &=   \nhalf \sum_{s= 1}^S \rho_s   \left\{ \ninv_s\sum_{i = 1}^n  I(\Ssc_i = s) \be_{\SSL i}  \right\}+ o_p(1)
\end{align*}
where $\be_{\SSL i} = \bA^{-1}\bx_i \{y_i - g(\bgammabar \trans \bPhi_i) \}$.
It then follows from the {\red{classical Central Limit Theorem}} that $\nhalf( \widecheck \btheta \subSSL  - \bthetabar) \to N(0, \bSigma\subSSL)$ in distribution where
$$\bSigma\subSSL =  \sum_{s=1}^S \rho_s^2 \rho_{1s}^{-1}E\left\{\be_{\SSL i}^{\otimes 2} \mid \Ssc_i = s\right\}.$$ 
We then see that
\begin{align*}
& \bSigma\subSL  - \bSigma\subSSL  =\sum_{s=1}^S \rho_s^2 \rho_{1s}^{-1}  [{\rm E} \{\be_{\SL i}^{\otimes 2} \mid \Ssc_i = s \} - {\rm E} \{\be_{\SSL i}^{\otimes 2} \mid \Ssc_i = s \} ]=\\
&\sum_{s=1}^S\rho_s^2 \rho_{1s}^{-1}\bA^{-1} {\rm E} \left[ \bx_i^{\otimes 2} \{g(\bgammabar \trans \bPhi_i) -  g(\bthetabar \trans \bx_i)\}^2   
+ 2      \bx_i^{\otimes 2} \{y_i - g(\bgammabar \trans \bPhi_i)\}  \{g(\bgammabar \trans \bPhi_i) -  g(\bthetabar \trans \bx_i)\}\mid\Ssc_i  = s \right] .
\end{align*}
Therefore, when the imputation model is correctly specified, it follows that 
$$\bSigma\subSL  - \bSigma\subSSL =  \sum_{s=1}^S \rho_s^2 \rho_{1s}^{-1} E \left[ \bA^{-1} \bx_i^{\otimes 2} \{g(\bgammabar \trans \bPhi_i) -  g(\bthetabar \trans \bx_i)\}^2  \mid \Ssc_i  = s   \right] \succeq \bzero .$$
%and when the regression model is correctly specified $\bSigma\subSL  = \bSigma\subSSL$.

\section{Asymptotic Properties of $\Dhat \subSSL( \bthetahat \subSSL)$ and $\Dhat \subSSL( \widecheck\btheta \subSSL)$}\label{sec: asym-DSSL}
%We denote $\bnutilde_{\btheta }$ by $\bnutilde$ for ease of notation throughout. 
First note that by Lemma \ref{lem:1}, there exists $\delta>0$ such that for all $\btheta$ satisfying $\|\btheta-\bthetabar\|_2<\delta$, so that $\bnubar_{\btheta}$ is unique. Then, similar to the derivations in Appendices \ref{sec: asym-thetaSL} and \ref{sec: asym-thetaSSL}, we may show that $\bnutilde_{\btheta}$ is consistent for $\bnubar_{\btheta}$ and $\nhalf (\bnutilde_{\btheta} - \bnubar_{\btheta})$ is asymptotically Gaussian with mean zero.

Let $\mathbf{\Theta}_{\delta}=\mathbf{\Theta}\cap\{\btheta:\|\btheta-\bthetabar\|_2<\delta\}$. For the consistency of $\Dhat \subSSL ( \bthetahat \subSSL)$ for $D(\bthetabar)$, we note that the uniform consistency of $\bnutilde_{\btheta}$ for $\bnubar_{\btheta}$ and $\bgammatilde$ for $\bgammabar$, together with the ULLN and under regularity Conditions \ref{cond:1}-\ref{cond:2} {\red and $ \rhohat_s\overset{p}\rightarrow\rho_s$}, imply $\sup_{\btheta\in\mathbf{\Theta}_{\delta}} | \Dhat \subSSL(\btheta) - D( \btheta) | = o_p(1)$.  It then follows from the consistency of $\bthetahat \subSSL$ and $\widecheck\btheta \subSSL$ for $\bthetabar$ that $\Dhat \subSSL( \bthetahat \subSSL) \overset{p} \to D( \bthetabar)$ and  $\Dhat \subSSL( \widecheck\btheta \subSSL) \overset{p} \to D( \bthetabar)$ as $n \to \infty$.
%For the consistency of $\Dhat \subSSL ( \bthetahat \subSSL)$ for $D(\bthetabar)$,  it suffices to show that %$$ \sup_\bu | \mtilde_{\sII}(\bu_i; \bthetahat) \{ 1- 2\yscrl(\bthetahat \trans \bx_i)\} + \yscrl(\bthetahat  \trans \bx_i)^2 -   m(\bu_i; \bthetabar) \{ 1- %2\Yscbar_i\} + \yscrl(\bthetabar  \trans \bx_i)^2  | = o_p(1)$$.
%\begin{align*}
%\Dhat \subSSL (\btheta) =  N^{-1} \sum_{i = 1}^{N} g\{\bgamma_0 \trans \bPhi(\bu) + \bnu_{0} \trans \bz_{\btheta } \} \{ 1 - 2 \yscrl(\btheta\trans \bx) \}  + \yscrl(\btheta\trans \bx)^2 + o_p(1)
%\end{align*}

To derive the asymptotic distribution for $\Tschat\subSSL = \nhalf \{ \Dhat\subSSL( \bthetahat \subSSL)  - D(\bthetabar)\}$ and $\widecheck\Tsc\subSSL =  \nhalf\{\Dhat\subSSL( \widecheck\btheta \subSSL) - D( \bthetabar)\}$, we first consider
\begin{align*}
\Tsctilde\subSSL(\btheta) = \nhalf \{ \Dhat \subSSL( \btheta)  -  D(\btheta) \} &= \nhalf  \left [ N^{-1} \sum_{i = 1}^{N} d\{g(\bgammatilde \trans \bPhi_i + \bnutilde_{\btheta}), \yscrl(\btheta \trans \bx_i)\} - D(\btheta) \right]
\end{align*}
Under Conditions \ref{cond:1}-\ref{cond:2} and by Taylor series expansion about $\bgammabar$ and $\bnubar$ and the ULLN, 
\begin{align*}
\Tsctilde\subSSL(\btheta) = & \nhalf  \left[  N^{-1} \sum_{i = 1}^{N} d\{g(\bgammabar \trans \bPhi_i + \bnubar_{\btheta} \trans \bz_{i\btheta}), \yscrl(\btheta \trans \bx_i)\} - D(\btheta)  +  \bG_{\btheta} (\bgammatilde - \bgammabar)  + \bH_{\btheta}  (\bnutilde_{\btheta} - \bnubar_{\btheta})  \right]
\end{align*}
where $ \bG_{\btheta} = {\rm E} [\bPhi_i \trans \dot{g}(\bgammabar \trans \bPhi_i + \bnubar_{\btheta} \trans \bz_{i\btheta}) \{ 1- 2\yscrl(\btheta \trans \bx_i)\} ]$ and $\bH_{\btheta} = {\rm E}[\bz_{i\btheta} \trans\dot{g}(\bgammabar \trans \bPhi_i + \bnubar_{\btheta}\trans \bz_{i\btheta}) \{ 1- 2\yscrl(\btheta \trans \bx_i)\}]$. From the previous section we have 
\begin{align*}
\nhalf( \bgammatilde- \bgammabar) &=   \nhalf  \sum_{s= 1}^S \rho_s \left[ \ninv_s \sum_{i = 1}^n \bC^{-1} I(\Ssc_i = s) \bPhi_i \{y_i - g(\bgammabar \trans \bPhi_i) \} \right] + o_p(1) 
\end{align*}
Similar arguments can be used to verify that
\begin{align*}
 \nhalf (\bnutilde_{\btheta} - \bnubar_{\btheta}) \trans  &= \nhalf \bJ_{\btheta}^{-1}   \left[\sum_{s= 1}^S \rho_s   \ninv_s \sum_{i = 1}^n  I(\Ssc_i = s) \bz_{i \btheta } \{y_i - g(\bgammabar \trans \bPhi_i  + \bnubar\trans\bz_{i \btheta })  \}  +    \bK(\bgammatilde - \bgammabar) \trans \right] + o_p(1) 
\end{align*}
where $\bJ_{\btheta} = {\rm E}\{   \bz_{i \btheta } ^{\otimes 2} \dot{g}(\bgammabar \trans \bPhi_i + \bnubar \trans\bz_{i \btheta }) \}$ and $\bK_{\btheta} =- {\rm E}\{ \bz_{i \btheta } \bPhi_i \trans  \dot{g}(\bgammabar \trans \bPhi_i + \bnubar \trans\bz_{i \btheta })  \}$.
These results, together with the fact that 
$N^{-\frac{1}{2}} \sum_{i = 1}^{N} d\{g(\bgammabar \trans \bPhi_i + \bnubar_{\btheta} \trans \bz_{i\btheta}), \yscrl(\btheta \trans \bx_i)\} - D(\btheta) )$
converges weakly to zero-mean Gaussian process in $\btheta$, imply that 
\begin{align*}
\Tsctilde\subSSL(\btheta) =& \nhalf \sum_{s= 1}^S \rho_s\Bigg( \ninv_s \sum_{i = 1}^n  I(\Ssc_i = s)  \Big[ (\bG_{\btheta}   + \bH_{\btheta}  \bJ_{\btheta}^{-1} \bK_{\btheta} )  \bC^{-1}  \bPhi_i \{y_i - g(\bgammabar \trans \bPhi_i) \}      \\
&\hspace{2.2in} +  \bH_{\btheta}  \bJ_{\btheta}^{-1}  \bz_{i \btheta }  \{y_i - g(\bgammabar \trans \bPhi_i  + \bnubar_{\btheta} \trans\bz_{i \btheta }) \}\Big] \Bigg) + o_p(1) .
\end{align*}
We may simplify the above expression by noting that $\{1- 2\yscrl(\btheta \trans \bx_i)\}$ is a linear combination of $\bz_{i \btheta }$ and {hence $[\bH_{\btheta}  \bJ_{\btheta}^{-1}]\bz_{i \btheta } = \{1- 2\yscrl(\btheta \trans \bx_i)\}$}.  Additionally, $ \bH_{\btheta}  \bJ_{\btheta}^{-1} \bK_{\btheta}  = - \bG_{\btheta}$ which implies that $(\bG_{\btheta}    + \bH_{\btheta}  \bJ_{\btheta}^{-1} \bK_{\btheta})  \bC^{-1}= 0$. Thus,
\begin{align*}
\Tsctilde\subSSL(\btheta)=&  \nhalf \sum_{s= 1}^S \rho_s  \left[ \ninv_s \sum_{i = 1}^n  I(\Ssc_i = s) \{1- 2\yscrl(\btheta \trans \bx_i) \} \{y_i - g(\bgammabar \trans \bPhi_i  + \bnubar\trans\bz_{i \btheta }) \}\right].
\end{align*}
This combined with the fact that $D(\btheta)$ is continuously differentiable at $\bthetabar$, $\bWhat$ is consistent for its limiting value $\bW$ introduced in Appendix \ref{sec: inf-thetaSSL}, and Conditions \ref{cond:1}-\ref{cond:2} then give that
\begin{align*}
\Tschat\subSSL =&\nhalf \{ \Dhat \subSSL( \bthetahat \subSSL) - D(\bthetahat\subSSL)\} +\nhalf \{ D(\bthetahat \subSSL)  - D(\bthetabar)\} \\
=& \nhalf  \sum_{s= 1}^S \rho_s \Bigg(\ninv_s \sum_{i = 1}^n  I(\Ssc_i = s) \Big[ \{1- 2\Yscbar_i \} \{y_i - g(\bgammabar \trans \bPhi_i  + \bnubar_{\bthetabar} \trans\bz_{i \bthetabar }) \}  \\
& \hspace{2.5in} + \dot \bD(\bthetabar)\trans\{\bW \be_{\SSL i} +  (\bI - \bW)  \be_{\SL i} \} \Big]\Bigg) +o_p(1) \\
= & \nhalf \sum_{s= 1}^S \rho_s \Bigg(\ninv_s \sum_{i = 1}^n I(\Ssc_i = s) \Big[ \{d(y_i, \Yscbar_i)- d(m_{\sII,i} , \Yscbar_i)\} \\
& \hspace{2.5in} + \dot \bD(\bthetabar)\trans\{\bW \be_{\SSL i} +  (\bI - \bW)  \be_{\SL i} \} \Big]\Bigg)+o_p(1).
\end{align*}
{Note that the existence of $\dot\bD(\bthetabar)$ is implied by Condition \ref{cond:1}, namely, that the density function of $\bthetabar\trans\bx$ is continuously differentiable in $\bthetabar\trans\bx$ and ${\rm P}(y=1|\bu)$ is continuously differentiable in the continuous components of $\bu$.} We then have that $\nhalf \{ \Dhat \subSSL( \bthetahat \subSSL)  - D(\bthetabar)\}$ converges to a zero mean normal random variable by the {\red{classical Central Limit Theorem}}. Using similar arguments as those for $\widecheck\Tsc\subSL$, we have that $\widecheck\Tsc\subSSL=\nhalf \{ \Dhat \subSSL( \widecheck\btheta \subSSL)  - D(\bthetabar)\}$ can be expanded as
\[
\nhalf \sum_{s= 1}^S \rho_s \left( \ninv_s \sum_{i = 1}^n  I(\Ssc_i = s) \Big[ \{d(y_i, \Yscbar_i)  - 
d(m_{\sII,i} , \Yscbar_i) \}+ 
\dot \bD(\bthetabar)\trans\be_{\SSL i} \Big] \right),
\]
which also converges to a zero mean normal random variable. 

Comparing the asymptotic variance of $\widecheck\Tsc\subSSL$ with $\Tschat\subSL$, first note that
\begin{align*}
\Tschat\subSL=&\nhalf \sum_{s= 1}^S \rho_s  \left( \ninv_s \sum_{i = 1}^n  I(\Ssc_i = s) \left[d(y_i, \Yscbar_i)- D(\bthetabar) + \dot{D}(\bthetabar)\trans \be_{\SL i}  \right] \right) + o_p(1)\\
=&\nhalf \sum_{s= 1}^S \rho_s  \Bigg( \ninv_s \sum_{i = 1}^n  I(\Ssc_i = s) \Big[d(y_i, \Yscbar_i)- d(m_{\sII,i} , \Yscbar_i) + d(m_{\sII,i} , \Yscbar_i) - D(\bthetabar)\\
& \hspace{2.5in} +\dot{D}(\bthetabar)\trans \bA^{-1}\bx_i \{y_i - g(\bthetabar \trans \bx_i)\} \Big] \Bigg) + o_p(1)\\
=&\nhalf \sum_{s= 1}^S \rho_s  \Bigg( \ninv_s \sum_{i = 1}^n  I(\Ssc_i = s) \Big[(1- 2\Yscbar_i )\{y_i - g(\bgammabar \trans \bPhi_i  + \bnubar_{\bthetabar} \trans\bz_{i \bthetabar }) \} + d(m_{\sII,i} , \Yscbar_i) - D(\bthetabar)\\
& \hspace{1.4in} +\dot{D}(\bthetabar)\trans \bA^{-1}\bx_i \{y_i -g(\bgammabar \trans \bPhi_i) + g(\bgammabar \trans \bPhi_i)- g(\bthetabar \trans \bx_i)\} \Big] \Bigg) + o_p(1).
\end{align*}
Letting $h_1(\bPhi_i)=1- 2\Yscbar_i+\dot{D}(\bthetabar)\trans \bA^{-1}\bx_i$ and 
\[
h_2(\bPhi_i)=d(m_{\sII,i} , \Yscbar_i) - D(\bthetabar)+\dot{D}(\bthetabar)\trans \bA^{-1}\bx_i\{g(\bgammabar \trans \bPhi_i)- g(\bthetabar \trans \bx_i)\}.
\] 
we note that $h_1(\bPhi_i)$ and $h_2(\bPhi_i)$ are functions of $\bPhi_i$ and do not depend on $y_i$. Thus, when $P(y = 1 \mid \bu) = g(\bgammabar \trans \bPhi)$, we have $\bnubar=\bzero$ and 
\begin{equation*}
\begin{split}
\sigma^2\subSL =&  \sum_{s= 1}^S \rho_s^2\rho_{1s}^{-1}  {\rm E}[ h_1^2(\bPhi_i)\{y_i - g(\bgammabar \trans \bPhi_i)\}^2+ 2 h_1(\bPhi_i) h_2(\bPhi_i)\{y_i - g(\bgammabar \trans \bPhi_i)\}+ h_2^2(\bPhi_i)\mid \Ssc_i = s ]\\
=&\sum_{s= 1}^S \rho_s^2\rho_{1s}^{-1}  {\rm E}[ h_1(\bPhi_i)^2\{y_i - g(\bgammabar \trans \bPhi_i)\}^2+h_2(\bPhi_i)^2 \mid \Ssc_i = s ]
\end{split}
\end{equation*}
while the asymptotic variance of $\widecheck\Tsc\subSSL$ is
\begin{equation*}
\begin{split}
\sigma^2\subSSL =\sum_{s= 1}^S \rho_s^2\rho_{1s}^{-1}  {\rm E}[ h_1^2(\bPhi_i)\{y_i - g(\bgammabar \trans \bPhi_i)\}^2\mid \Ssc_i = s ].
\end{split}
\end{equation*}
Therefore, when $P(y = 1 \mid \bu) = g(\bgammabar \trans \bPhi)$, it follows that $\Delta_{aVar}:=\sigma^2\subSL-\sigma^2\subSSL>0$. Additionally, when model (\ref{model}) is correct and $P(y = 1 \mid \bu)= g(\bgammabar \trans \bPhi) = g(\bthetabar \trans \bx)$, we have $h_2(\bPhi_i)=d(m_{\sII,i} , \Yscbar_i) - D(\bthetabar)$ which is not equal to $0$ with probability 1, so that again $\Delta_{aVar}>0$.

\section{Intrinsic Efficient Estimation}\label{sec:app:intri:all}

\subsection{Intrinsic Efficient Estimator for $\Dbar$}\label{sec:app:intri:cons}
{\red 
We first introduce the intrinsic efficient estimator of the accuracy measures. Without loss of generality, we set the imputation basis for both $\btheta$ and $D(\btheta)$ as $\bPsi_{\btheta i}=[\bPhi_i\trans,\yscrl(\btheta \trans \bx_i)]\trans$, where $\btheta$ is plugged in with some preliminary estimator for $\btheta$, denoted as $\widetilde\btheta$. In practice, one may take $\widetilde\btheta$ as either the simple SL estimator or the SSL estimator obtained following Section \ref{sec:flex:imp}. We include $\yscrl(\btheta\trans \bx_i)$ in the imputation basis to simplify the notation and presentation in this section. Although this distinguishes the following discussion from the proposal in Section \ref{sec:aug}, it is straightforward to extend our results to the original proposal.

Recall that for the original SSL estimator of the regression parameter, one first obtains $\widetilde\bgamma_{\bthetatilde}$ as the solution to
\[
N^{-1}\sum_{i = 1}^N \what_i\bPsi_{\bthetatilde i} \{y_i - g(\bgamma \trans \bPsi_{\bthetatilde i}) \}  - \lambda_n \bgamma = \bzero
\]
and then solves $N^{-1}\sum_{i = 1}^{N} \bx_i \{ g(\bgammatilde_{\bthetatilde} \trans \bPsi_{\bthetatilde i}) - g(\btheta \trans \bx_i)  \} =\bzero$ to obtain the estimator of $\bthetabar$.  Despite the change in basis, we still denote this estimator as $\widehat\btheta\subplain$ with a slight abuse in the notation. Adapting the augmentation procedure in Section \ref{sec:aug}, we then find $\bgammatilde_{\widehat\btheta\subplain}$ as the solution to 
\[
N^{-1}\sum_{i = 1}^N \what_i\bPsi_{\widehat\btheta\subplain i} \{y_i - g(\bgamma \trans \bPsi_{\widehat\btheta\subplain i}) \}  - \lambda_n \bgamma = \bzero,
\]
and estimate $\Dbar$ with $\Dhat\subplain=\Dhat\subplain(\bthetahat\subplain)$ where $\Dhat\subplain( \btheta) =N^{-1} \sum_{i =1}^{N} d\{g(\bgammatilde_{\btheta} \trans \bPsi_{\btheta i}),  \yscrl(\btheta\trans\bx_i) \}.$ Extending Theorem \ref{thm:2}, the asymptotic variance of $\nhalf\{\Dhat\subplain(\bthetahat\subplain)- \Dbar\} $ may be expressed as
\begin{equation}
\frac{1}{n}\sum_{i=1}^n{\rm E}\zeta_i\{1- 2\Yscbar_i +\dot\bD(\bthetabar)\trans\bA^{-1}\bx_i\}^2\{y_i - g(\bgammabar_{\bthetabar} \trans \bar\bPsi_i) \}^2,
\label{equ:var:D}
\end{equation}
where $\bgammabar_{\bthetabar}$ represents the limits of $\bgammatilde_{\widehat\btheta\subplain}$ (or $\bgammatilde_{\bthetatilde}$), and $\bar\bPsi_i=\bar\bPsi_{\bthetabar i}=[\bPhi_i\trans,\yscrl(\bthetabar\trans \bx_i)]\trans$. Analogous to the construction of $\e\trans\bthetahat\subintri$, we consider minimizing the asymptotic variance given by (\ref{equ:var:D}) to estimate $\Dbar$. Specifically, we first solve for $\bgammatilde\suptwo_{\bthetatilde}$ with
\begin{equation}
\begin{split}
\argmin{\bgamma}\frac{1}{2n}&\sum_{i=1}^n\widehat\zeta_i\{1- 2\yscrl(\bthetatilde \trans \bx_i) +\widehat{\dot\bD}\trans\widehat\bA^{-1}\bx_i\}^2\{y_i - g(\bgamma\trans \bPsi_{\bthetatilde i}) \}^2+\lambda_n\suptwo\|\bgamma\|_2^2,\\
\mbox{ s.t. }\frac{1}{N}&\sum_{i=1}^N\what_i[\bx_i\trans,\yscrl(\bthetatilde \trans \bx_i)]\trans\{y_i-g(  \bgamma\trans\bPsi_{\bthetatilde i})\}=\bzero,
\end{split}
\label{equ:min:D}
\end{equation}
where $\widehat{\dot\bD}$ is an estimation of $\dot\bD(\bthetabar)$ and the tuning parameter $\lambda\suptwo=o(n^{-\frac{1}{2}})$. Similar to (\ref{impee}) and (\ref{equ:min:theta}), moment constraints in (\ref{equ:min:D}) calibrate potential bias of the estimators for $\bthetabar$ and $\Dbar(\btheta)$. 

Next, we present the construction of $\widehat{\dot\bD}$ for the Brier score and OMR separately. For the Brier score, $\bar D_1$, we take
\[
\widehat{\dot\bD}=\widehat{\dot\bD}_1=\frac{1}{N}\sum_{i = 1}^N -2\what_i\dot g(\bthetatilde\trans\bx_i)\{y_i-g(\bthetatilde\trans\bx_i)\}\bx_i.
\]
For the the OMR, $\bar D_2$, recall that a simple estimator is given by the empirical average
\[
\frac{1}{N}\sum_{i = 1}^N\what_i\{y_i+(1-2y_i)I(g(\bthetatilde\trans\bx_i)>c)\}.
\]
Since $I(g(\btheta\trans\bx_i)>c)$ is not a differentiable function of $\btheta$, we first smooth each $I(g(\bthetatilde\trans\bx_i)>c)$ as $\int_c^{+\infty}K_h\{g(\bthetatilde\trans\bx_i)-u\}du,$ where $K(\cdot)$ represents the Gaussian kernel function, and $K_h(a):=h^{-1}K(a/h)$ with some bandwidth $h>0$. Then, $\dot\bD(\bthetabar)$ is estimated with
\[
\widehat{\dot\bD}=\widehat{\dot\bD}_2=\frac{1}{N}\sum_{i = 1}^N\what_i(1-2y_i)\dot g(\bthetatilde\trans\bx_i)K_h\{g(\bthetatilde\trans\bx_i)-c\}\bx_i.
\]
With $\bgammatilde\suptwo_{\bthetatilde}$, we then solve 
\[
\Ninv\sum_{i = 1}^{N} \bx_i \{ g(  \bgammatilde\suptwotrans_{\bthetatilde}\bPsi_{\bthetatilde i}) - g(\btheta \trans \bx_i)  \} =\bzero
\] 
to obtain $\widehat\btheta\subintri^D$ for estimation of $\Dbar$  and employ the augmentation procedure in Section \ref{sec:aug}.  That is, we solve $\bgammatilde\suptwo_{\widehat\btheta^D\subintri}$ from
\[
N^{-1}\sum_{i = 1}^N \what_i\bPsi_{\widehat\btheta\subintri^D i} \{y_i - g(\bgamma \trans \bPsi_{\widehat\btheta\subintri^D i}) \}  - \lambda_n\suptwo \bgamma = \bzero,
\]
and estimate $\Dbar$ by $\widehat D\subintri=\widehat D\subintri(\widehat\btheta\subintri^D)$ where $\widehat D\subintri(\btheta)=\Ninv\sum_{i=1}^{N} d\{g(\bgammatilde_{\btheta}\suptwo\bPsi_{\btheta i}), \yscrl(\btheta \trans \bx_i) \}.$ 

To present the asymptotic properties of $\widehat D\subintri$, we define
\begin{align*}
\bar\bgamma\suptwo_{\bthetabar}=\argmin{\bgamma} &{\rm E}[R\{1- 2\bar\Ysc+\dot\bD(\bthetabar)\trans\bA^{-1}\bx\}^2\{y - g(\bgamma_{\bthetabar}\trans \bar\bPsi) \}^2],\\
\mbox{ s.t. }&{\rm E}[\bx\trans,\yscrl(\bthetabar \trans \bx)]\trans\{y-g(  \bgamma_{\bthetabar}\trans\bar\bPsi)\}=\bzero.
\end{align*}
Theorem \ref{thm:a1} provides the asymptotic expansion of $\Dhat\subintri$ and its proof, together with the proof of Theorem \ref{thm:3} from the main text, is detailed in Appendix \ref{sec:app:intri}.
\begin{theorem}
Under Conditions \ref{cond:1}, Conditions \ref{cond:a1} and \ref{cond:a2} from Appendix \ref{sec:app:intri}, and with the bandwidth $h\asymp n^{-\frac{1}{4}}$,  $\nhalf(\Dhat\subintri -\Dbar)$ weakly converges to a Gaussian distribution with mean zero, and is asymptotically equivalent to $\widehat\Tsc(\bgammabar\suptwo_{\bthetabar})$ where
\[
\widehat\Tsc(\bgamma)= \nhalf \sum_{s= 1}^S \rho_s \left[ \ninv_s \sum_{i = 1}^N V_i I(\Ssc_i = s) \{1- 2\bar\Ysc_i+\dot\bD(\bthetabar)\trans\bA^{-1}\bx_i\}\{y_i - g(\bgamma\trans \bar\bPsi_i) \} \right].
\]
This implies that (i) $\Dhat\subintri$ is asymptotically equivalent to $\Dhat\subplain$ when the imputation model ${\rm P}(y = 1 \mid \bu) = g(\bgamma \trans \bar\bPsi)$ is correctly specified and (ii) the asymptotic variance of $\nhalf(\Dhat\subintri -\Dbar)$ is minimized among $\{\widehat\Tsc(\bgamma):{\rm E}[\bx\trans,\yscrl(\bthetabar \trans \bx)]\trans\{y-g(  \bgamma\trans\bar\bPsi)\}=\bzero\}$. Consequently, the asymptotic variance of the intrinsic efficient estimator is always less than or equal to the asymptotic variance of $\nhalf(\Dhat\subplain -\Dbar)$ and $\nhalf(\Dhat\subSL -\Dbar)$.

\label{thm:a1}
\end{theorem}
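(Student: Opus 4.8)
The plan is to mirror the proof of Theorem~\ref{thm:2} in Appendix~\ref{sec: asym-DSSL}, augmenting it with the constrained–optimization machinery behind the intrinsic–efficiency claim and with a kernel–smoothing argument to handle the non-differentiability of the OMR. First I would establish the three consistency statements the expansion rests on: (a) that the constrained minimizer defining the intrinsic imputation coefficients converges to its population analogue $\bar\bgamma\suptwo_{\bthetabar}$, (b) that $\widehat\btheta\subintri^{D}\overset{p}\to\bthetabar$, and (c) that $\Dhat\subintri\overset{p}\to\Dbar$. Part (a) is the non-standard piece: because the objective in (\ref{equ:min:D}) is non-convex with a nonlinear link and carries an equality (moment) constraint, ordinary M-estimation consistency does not apply directly, so I would invoke Conditions~\ref{cond:a1}--\ref{cond:a2} together with the uniform law of large numbers and the stratified IPW representation (\ref{eq-IPW}) to identify and isolate the constrained root. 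Parts (b)--(c) then follow as in Appendices~\ref{sec: asym-thetaSSL}--\ref{sec: asym-DSSL} from the ULLN, the boundedness and smoothness granted by Conditions~\ref{cond:1}--\ref{cond:2}, and $\rhohat_s\overset{p}\to\rho_s$.

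The core and hardest step is the influence–function expansion. I would Taylor–expand $\nhalf\{\Dhat\subintri(\btheta)-D(\btheta)\}$ about $\bar\bgamma\suptwo_{\bthetabar}$ for $\btheta$ in a shrinking neighbourhood of $\bthetabar$ and then substitute $\widehat\btheta\subintri^{D}$. The decisive structural fact, inherited from Appendix~\ref{sec: asym-DSSL}, is that the augmentation basis $\bPsi_{\btheta}=[\bPhi\trans,\yscrl(\btheta\trans\bx)]\trans$ contains both the constant and $\yscrl(\btheta\trans\bx)$, so the multiplier $\{1-2\yscrl(\btheta\trans\bx)\}$ lies in the span of $\bPsi_{\btheta}$; the first–order contribution of estimating the imputation coefficients therefore collapses into the single residual term $\{y_i-g(\bar\bgamma\suptwo_{\bthetabar}\trans\bar\bPsi_i)\}$, exactly as the combination $(\bG_{\btheta}+\bH_{\btheta}\bJ_{\btheta}^{-1}\bK_{\btheta})\bC^{-1}$ vanished there. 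The variability of $\widehat\btheta\subintri^{D}$ enters through $\dot\bD(\bthetabar)\trans\bA^{-1}\bx_i$ and combines with $\{1-2\Yscbar_i\}$ to yield the weight $\{1-2\Yscbar_i+\dot\bD(\bthetabar)\trans\bA^{-1}\bx_i\}$, producing $\widehat\Tsc(\bar\bgamma\suptwo_{\bthetabar})$. The main obstacle is the OMR case, where $\yscrl_2=I\{g(\btheta\trans\bx)>c\}$ is non-differentiable, so $\dot\bD(\bthetabar)$ must be replaced by its kernel–smoothed estimate $\widehat{\dot\bD}$; I would show its smoothing bias is $O(h^2)$ and its stochastic error $O_p\{(nh)^{-1/2}\}$, both of which must be $o_p(n^{-1/4})$ so that the associated second–order remainder of order $\nhalf(\widehat{\dot\bD}-\dot\bD)^2$ is negligible, and $h\asymp n^{-1/4}$ is exactly the balancing choice. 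A Donsker/stochastic–equicontinuity argument for the relevant function classes, as with $\Bsc_1,\Bsc_2$ in Appendix~\ref{sec: asym-DSL}, then justifies the expansion uniformly in $\btheta$, and the stratified central limit theorem applied through (\ref{eq-IPW}) delivers asymptotic normality.

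For consequence (i), when the imputation model is correct the residual $\{y-g(\bar\bgamma\suptwo_{\bthetabar}\trans\bar\bPsi)\}$ is conditionally mean zero given $\bu$; the weight $\{1-2\Yscbar+\dot\bD(\bthetabar)\trans\bA^{-1}\bx\}^2$ in the objective (\ref{equ:var:D}) then leaves the population minimizer unchanged, so $\bar\bgamma\suptwo_{\bthetabar}$ coincides with the plain limit $\bar\bgamma_{\bthetabar}$ and the two influence functions are identical, giving the asymptotic equivalence of $\Dhat\subintri$ and $\Dhat\subplain$. For consequence (ii), I would note that every estimator in the stated class has asymptotic variance equal to the functional (\ref{equ:var:D}) evaluated at its feasible $\bgamma$; since $\bar\bgamma\suptwo_{\bthetabar}$ is by construction the constrained argmin of this functional and $\bgammatilde\suptwo$ converges to it, $\Dhat\subintri$ attains the minimum over the class. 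Because the plain limit $\bar\bgamma_{\bthetabar}$ satisfies the moment constraint (its defining equations already include ${\rm E}[\bx\{y-g\}]=\bzero$ and ${\rm E}[\yscrl(\bthetabar\trans\bx)\{y-g\}]=\bzero$), it is feasible, so $\Dhat\subintri$ dominates $\Dhat\subplain$. For the comparison with $\Dhat\subSL$, I would write the supervised influence function as the class term evaluated at the degenerate imputation $m_i=g(\bthetabar\trans\bx_i)$ plus a $\bPhi$-measurable remainder $d(g(\bthetabar\trans\bx_i),\Yscbar_i)-\Dbar$, and then run the orthogonal–decomposition argument of Appendix~\ref{sec: asym-DSSL} (the $h_1,h_2$ split) to show the supervised variance exceeds that of the corresponding class member and hence the intrinsic minimum. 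I expect the constrained non-convex consistency of step~(a) and the bandwidth balancing for the OMR to be the two places demanding the most care.
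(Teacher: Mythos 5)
Your overall architecture matches the paper's proof of Theorem \ref{thm:a1} almost exactly: consistency of the constrained minimizer $\bgammatilde\suptwo$ via the ULLN and Conditions \ref{cond:a1}--\ref{cond:a2}, the collapse of the imputation-estimation contribution because $\{1-2\yscrl(\btheta\trans\bx)\}$ lies in the span of $\bPsi_{\btheta}$ (the analogue of $(\bG_{\btheta}+\bH_{\btheta}\bJ_{\btheta}^{-1}\bK_{\btheta})\bC^{-1}=\bzero$ in Appendix \ref{sec: asym-DSSL}), the delta-method term producing the weight $\{1-2\Yscbar_i+\dot\bD(\bthetabar)\trans\bA^{-1}\bx_i\}$, and the constrained-argmin argument for claims (i) and (ii), including feasibility of $\bgammabar_{\bthetabar}$ for the domination over $\Dhat\subplain$. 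The paper handles the constrained non-convex problem by a local quadratic expansion around $\bgammabar\suptwo$ yielding the explicit projection-type formula $[\bA_2^{-1}-\bA_2^{-1}\bB_2(\bB_2\trans\bA_2^{-1}\bB_2)^{-1}\bB_2\trans\bA_2^{-1}]\bXi_{21}+\bA_2^{-1}\bB_2(\bB_2\trans\bA_2^{-1}\bB_2)^{-1}\bXi_{22}$, which is the concrete realization of the step you flag as needing the most care.

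There is, however, one genuine problem in your treatment of the OMR smoothing. You place $\widehat{\dot\bD}$ in the final influence-function expansion and argue that a second-order remainder of order $\nhalf\|\widehat{\dot\bD}-\dot\bD\|_2^2$ must be killed, requiring both bias and stochastic error to be $o_p(n^{-1/4})$, with an $O(h^2)$ bias claim. Two issues: first, under Condition \ref{cond:1} the paper only establishes an $O(h)$ bias (via Lipschitz continuity of $\r(\cdot)f_g(\cdot)$, not a second-order kernel expansion), so at $h\asymp n^{-1/4}$ the bias is exactly $O(n^{-1/4})$ and your own negligibility criterion fails at the boundary; an $O(h^2)$ rate would need additional smoothness and a symmetry/cancellation argument you have not supplied. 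Second, and more importantly, $\widehat{\dot\bD}$ enters the estimator only through the weights $\{1-2\yscrl(\bthetatilde\trans\bx_i)+\widehat{\dot\bD}\trans\widehat\bA^{-1}\bx_i\}^2$ in the preliminary objective (\ref{equ:min:D}) defining $\bgammatilde\suptwo$; it never appears in the plug-in $\Dhat\subintri(\btheta)$ itself. Consequently only consistency $\|\widehat{\dot\bD}-\bar{\dot\bD}\|_2=o_p(1)$ is needed (the perturbation multiplies the $O_p(n^{-1/2})$ score $\bXi_{21}$), and the $\dot\bD(\bthetabar)\trans\bA^{-1}\bx_i$ term in $\widehat\Tsc$ arises from the population derivative of $D(\btheta)$ via the delta method, not from $\widehat{\dot\bD}$. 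Your balancing argument as written therefore does not close under the stated conditions, whereas the paper's weaker requirement does; repairing your version would mean either strengthening the smoothness assumptions or, better, relocating $\widehat{\dot\bD}$ to the weights where $O_p(n^{-1/4})$ accuracy is already sufficient.
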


}

%Similar to (\ref{equ:imp:intr}), we use the first term in (\ref{equ:imp:intr:D}) for bias correction due to potential misspecification of the (augmented) imputation model. While this is not needed when constructing $\widehat D\subSSL(\btheta)$ because of the augmented moment conditions imposed by equation (\ref{equ:mom:aug}).

{
\red
\subsection{Asymptotic Properties of $\widehat\btheta\subintri$ and $\widehat D\subintri$}\label{sec:app:intri}
We first introduce the smoothness condition on the link function $g(\cdot)$, which is stronger than Condition \ref{cond:3}, but still holds for the most commonly used link functions such as the logit and probit functions.
\begin{cond}
The link function $g(\cdot)\in(0,1)$ is continuously twice differentiable with derivative $\dot g(\cdot)$ and the second order derivative $\ddot g(\cdot)$.
\label{cond:a1}
\end{cond}
Given Condition \ref{cond:a1}, we let $\bgammabar\suptwo=\bgammabar\suptwo_{\bthetabar}$ and define 
\begin{align*}
\bA_1&={\rm E}\left[R(\e\trans\bA^{-1}\bx)^2\bPhi^{\otimes 2}\{\dot{g}^2(\bgammabar\suponetrans \bPhi)+\ddot{g}(\bgammabar\suponetrans\bPhi)[y-g(\bgammabar\suponetrans\bPhi)]\}\right],\\
\bA_2&={\rm E}\left[R\{1- 2\bar\Ysc+\dot\bD(\bthetabar)\trans\bA^{-1}\bx\}^2\bar\bPsi^{\otimes 2}\{\dot{g}^2(\bgammabar\suptwotrans \bar\bPsi)+\ddot{g}(\bgammabar\suptwotrans \bar\bPsi)[y-g(\bgammabar\suptwotrans \bar\bPsi)]\}\right],
\end{align*}
$\bB_1={\rm E}[\bPhi\bx\trans \dot g(\bgammabar \suponetrans \bPhi)]$ and $\bB_2={\rm E}[\bar\bPsi\{\bx\trans,\yscrl(\bthetabar \trans \bx)\}\dot g(\bgammabar \suptwotrans \bar\bPsi)]$. We next present the regularity condition on the covariates and regression coefficients required by Theorem \ref{thm:3}.
\begin{cond}
There exists $\bTheta'=\{\btheta:\|\btheta-\bthetabar\|_2<\delta'\}$ for some $\delta'>0$,  such that for any $\btheta\in \bTheta'$, there is no $\bgamma$ such that  $P (\bgamma \trans \bPhi_1> \bgamma \trans \bPhi_2\mid y_1>y_2)=1$ or $P (\bgamma \trans \bPsi_{\btheta 1} > \bgamma \trans \bPsi_{\btheta 2}\mid y_1>y_2)=1$. It is also the case that $\bA\succ\bzero$, $\bA_1\succ\bzero$, $\bA_2\succ\bzero$, $\bB_1\trans\bA_1^{-1}\bB_1\succ\bzero$, and $\bB_2\trans\bA_2^{-1}\bB_2\succ\bzero$.
\label{cond:a2}
\end{cond}

\begin{remark}
Condition \ref{cond:a2} is analog to Condition \ref{cond:2}. It assumes there is no linear combination of $\bPhi$ or $\bar\bPsi$ perfectly separating the samples based on $y$, and the Hessian matrices of the constrained least square problems for $\bgammabar\supone$ and $\bgammabar\suptwo$ are positive definite. Again, these assumptions are mild and common in the M-estimation literature \citep{van2000asymptotic}.

\end{remark}

Under these regularity conditions, we present the proofs of Theorem \ref{thm:3} and \ref{thm:a1}. In our development, we take $\bthetatilde$ as the SSL estimator for $\btheta$ introduced in Section \ref{sec:flex:imp}, but note that the proof remains basically unchanged when taking $\bthetatilde$ as the SL estimator. We first derive the consistency (error rates) of $\widehat{\dot\bD}_1$ and $\widehat{\dot\bD}_2$. For the Brier score, let $\bar{\dot\bD}_1$ denote the derivative of $D_1(\btheta)$ evaluated at $\bthetabar$. We then use $\rhohat_s\overset{p}\rightarrow\rho_s$, Theorem \ref{thm:1}, Conditions \ref{cond:1} and \ref{cond:a1}, and the classical Central Limit Theorem to derive that 
\begin{align*}
\widehat{\dot\bD}_1-\bar{\dot\bD}_1=&\frac{1}{N}\sum_{i = 1}^N -2\what_i\dot g(\bthetatilde\trans\bx_i)\{y_i-g(\bthetatilde\trans\bx_i)\}\bx_i-\frac{1}{N}\sum_{i = 1}^N -2w_i\dot g(\bthetabar\trans\bx_i)\{y_i-g(\bthetabar\trans\bx_i)\}\bx_i\\
&+\frac{1}{N}\sum_{i = 1}^N -2w_i\dot g(\bthetabar\trans\bx_i)\{y_i-g(\bthetabar\trans\bx_i)\}\bx_i-{\rm E}[2\dot g(\bthetabar\trans\bx)\{y-g(\bthetabar\trans\bx)\}\bx]\\
=&O_p(\|\bthetatilde-\bthetabar\|_2)+O_p(n^{-\frac{1}{2}})=O_p(n^{-\frac{1}{2}}).
\end{align*}
For the estimator of the derivative for the OMR, let $\bar{\dot\bD}_2$ be the limiting value of $\widehat{\dot\bD}_2$. We then have
\begin{align*}
\widehat{\dot\bD}_2-\bar{\dot\bD}_2=&\frac{1}{N}\sum_{i = 1}^N(1-2y_i)[\what_i\dot g(\bthetatilde\trans\bx_i)K_h\{g(\bthetatilde\trans\bx_i)-c\}-w_i\dot g(\bthetabar\trans\bx_i)K_h\{g(\bthetabar\trans\bx_i)-c\}]\bx_i\\    
&+\frac{1}{N}\sum_{i = 1}^Nw_i(1-2y_i)\dot g(\bthetabar\trans\bx_i)K_h\{g(\bthetabar\trans\bx_i)-c\}\bx_i-{\rm E}[(1-2y)\dot g(\bthetabar\trans\bx)\bx|g(\bthetabar\trans\bx)=c]f_g(c)\\
=:&\bDelta_1+\bDelta_2
\end{align*}
where $f_g(c)$ represent the density function of $g(\bthetabar\trans\bx)$ evaluated at $c$. This follows from the fact that $\bar{\dot\bD}_2={\rm E}[(1-2y)\dot g(\bthetabar\trans\bx)\bx|g(\bthetabar\trans\bx)=c]f_g(c)$. Since the Gaussian kernel $K(\cdot)$ is continuously differentiable and by Theorem \ref{thm:1}, Conditions \ref{cond:1} and \ref{cond:a1}, we have
\[
\|\bDelta_1\|_2=h^{-1}O_p(\|\bthetatilde-\bthetabar\|_2)=O_p(n^{-\frac{1}{2}}h^{-1}).
\]
For $\bDelta_2$, Condition \ref{cond:1} and the classical Central Limit Theorem imply that 
\[
\frac{1}{N}\sum_{i = 1}^Nw_i(1-2y_i)\dot g(\bthetabar\trans\bx_i)K_h\{g(\bthetabar\trans\bx_i)-c\}\bx_i-{\rm E}(1-2y)\dot g(\bthetabar\trans\bx)K_h\{g(\bthetabar\trans\bx)-c\}\bx=O_p\{(nh)^{-\frac{1}{2}}\},
\]
and from Condition \ref{cond:1},
\begin{align*}
&{\rm E}(1-2y)\dot g(\bthetabar\trans\bx)K_h\{g(\bthetabar\trans\bx)-c\}\bx-{\rm E}[(1-2y)\dot g(\bthetabar\trans\bx)\bx|g(\bthetabar\trans\bx)=c]f_g(c)\\
=&\int_{0}^1\{\r(u)f_g(u)K_h(u-c)-\r(c)f_g(c)\}du\\
=&\int_{-c/h}^{(1-c)/h}\{\r(c+hv)f_g(c+hv)K(v)-\r(c)f_g(c)\}dv=O(h),
\end{align*}
where $\r(u)=\int_{\{\bx:g(\bthetabar\trans\bx)=u\}}\{1-2{\rm P}(y=1|\bx)\}\dot g(\bthetabar\trans\bx)\bx f_{x|g}(\bx|u)d\bx$, and $f_{x|g}(\cdot|u)$ represent the density of $\bx$ given that $g(\bthetabar\trans\bx)=u$. By Condition \ref{cond:1}, there exists $C>0$ such that $\|\r(a)-\r(b)\|_2\leq C|a-b|$ for any $a,b\in\mathbb{R}$. Thus, we have $\|\bDelta_2\|_2=O_p\{(nh)^{-\frac{1}{2}}+h\}$ and with $h\asymp n^{-\frac{1}{4}}$, we obtain $\|\widehat{\dot\bD}_2-\bar{\dot\bD}_2\|_2=O_p(n^{-\frac{1}{4}})$. It then follows that for both Brier score and OMR,  $\|\widehat{\dot\bD}-\bar{\dot\bD}\|_2=O_p(n^{-\frac{1}{4}})=o_p(1)$.

Leveraging these results, we establish the asymptotic normality of $\bgammatilde\supone$ and $\bgammatilde\suptwo_{\bthetatilde}$. Similar to Appendices \ref{sec: asym-thetaSL} and \ref{sec: asym-thetaSSL}, we apply the ULLN \citep{pollard1990empirical}, together with Conditions \ref{cond:1}, \ref{cond:a1}, and \ref{cond:a2}, and the facts that $\rhohat_s\overset{p}\rightarrow\rho_s$, $ \rhohat_{1s}\overset{p}\rightarrow\rho_{1s}$, and that $\bthetatilde$, $\widehat\bA^{-1}$ and $\widehat{\dot\bD}$ are consistent for their respective limits, to obtain
\begin{align*}
\sup_{\bgamma\in\Gamma\supone}\left|\frac{1}{n}\sum_{i=1}^n\widehat\zeta_i(\e\trans\widehat\bA^{-1}\bx_i)^2\{y_i - g(\bgamma \trans \bPhi_i) \}^2-{\rm E}[R(\e\trans\bA^{-1}\bx)^2\{y - g(\bgamma \trans\bPhi) \}^2]\right|=&o_p(1);\\
\sup_{\bgamma\in\Gamma\supone}\left\|\frac{1}{N}\sum_{i=1}^N\what_i\bx_i\{y_i-g(  \bgamma\trans\bPhi_i)\}-{\rm E}[\bx\{y-g(  \bgamma\trans\bPhi)\}]\right\|_2=&o_p(1);\\
\sup_{\bgamma\in\Gamma\suptwo}\Bigg|\frac{1}{n}\sum_{i=1}^n\widehat\zeta_i\{1- 2\yscrl(\bthetatilde \trans \bx_i) +\widehat{\dot\bD}\trans\widehat\bA^{-1}\bx_i\}^2\{y_i - g(\bgamma\trans \bPsi_{\bthetatilde i}) \}^2\quad\quad\quad\quad\quad\quad\quad\quad\quad~&\\
-{\rm E}[R\{1- 2\bar\Ysc+\dot\bD(\bthetabar)\trans\bA^{-1}\bx\}^2\{y - g(\bgamma\trans \bar\bPsi) \}^2]\Bigg|=&o_p(1);\\
\sup_{\bgamma\in\Gamma\suptwo}\left\|\frac{1}{N}\sum_{i=1}^N\what_i[\bx_i\trans,\yscrl(\bthetatilde \trans \bx_i)]\trans\{y_i-g(  \bgamma\trans\bPsi_{\bthetatilde i})\}-{\rm E}[\bx\trans,\yscrl(\bthetabar \trans \bx)]\trans\{y-g(  \bgamma\trans\bar\bPsi)\}\right\|_2=&o_p(1),
\end{align*}
where $\Gamma\supone$ and $\Gamma\suptwo$ are two compact sets containing $\bgammabar\supone$ and $\bgammabar\suptwo$, respectively. This implies that $\|\bgammatilde\supone-\bgammabar\supone\|_2=o_p(1)$ and $\|\bgammatilde\suptwo_{\bthetatilde}-\bgammabar\suptwo\|_2=o_p(1)$. We then expand (\ref{equ:min:theta}) and (\ref{equ:min:D}) to derive that
\begin{align*}
\widetilde\bgamma\supone=\argmin\bgamma&(\bgamma-\bgammabar\supone)\trans\left[\bA_1(\bgamma-\bgammabar\supone)+2\{1+o_p(1)\}\bXi_{11}+o_p\left(\|\widetilde\bgamma\supone-\bgammabar\supone\|_2+n^{-\frac{1}{2}}\right)\right],\\
\mbox{ s.t. }&\bB_1\trans(\bgamma-\bgammabar\supone)-\{1+o_p(1)\}\bXi_{12}+o_p\left(\|\widetilde\bgamma\supone-\bgammabar\supone\|_2+n^{-\frac{1}{2}}\right)=\bzero;\\
\bgammatilde\suptwo_{\bthetatilde}=\argmin\bgamma&(\bgamma-\bgammabar\suptwo)\trans\left[\bA_2(\bgamma-\bgammabar\suptwo)+2\{1+o_p(1)\}\bXi_{21}+o_p\left(\|\widetilde\bgamma\suptwo-\bgammabar\suptwo\|_2+n^{-\frac{1}{2}}\right)\right],\\
\mbox{ s.t. }&\bB_2\trans(\bgamma-\bgammabar\suptwo)-\{1+o_p(1)\}\bXi_{22}+o_p\left(\|\widetilde\bgamma\suptwo-\bgammabar\suptwo\|_2+n^{-\frac{1}{2}}\right),\\
\mbox{where}\quad\bXi_{11}=\frac{1}{n}\sum_{i=1}^n\zeta_i&(\e\trans\bA^{-1}\bx_i)^2\dot g(\bgammabar\suponetrans\bPhi_i)\bPhi_i\{y_i - g(\bgammabar\suponetrans\bPhi_i) \};\\
\bXi_{12}=\frac{1}{N}\sum_{i=1}^Nw_i&\bx_i\{y_i-g(\bgammabar\suponetrans\bPhi_i)\};\\
\bXi_{21}=\frac{1}{n}\sum_{i=1}^n\zeta_i&\{1- 2\yscrl(\bthetabar \trans \bx_i) +{\dot\bD}\trans\bA^{-1}\bx_i\}^2\dot g(\bgammabar\suptwotrans \bar\bPsi_i)\bar\bPsi_i\{y_i - g(\bgammabar\suptwotrans \bar\bPsi_i) \}+\dot\bXi_{\btheta,21}\trans(\bthetatilde-\bthetabar);\\
\bXi_{22}=\frac{1}{N}\sum_{i=1}^Nw_i&[\bx_i\trans,\yscrl(\bthetabar \trans \bx_i)]\trans\{y_i-g(  \bgammabar\suptwotrans\bar\bPsi_i)\}+\dot\bXi_{\btheta,22}\trans(\bthetatilde-\bthetabar),
\end{align*}
and $\dot\bXi_{\btheta,21}$, $\dot\bXi_{\btheta,22}$ are two fixed loading matrices of the order $O(1)$. By Condition \ref{cond:1} and the classical Central Limit Theorem, $n^{\frac{1}{2}}(\bXi_{11}\trans,\bXi_{12}\trans,\bXi_{21}\trans,\bXi_{22}\trans)\trans$ converges to a Gaussian distribution with mean $\bzero$. By Theorem \ref{thm:1}, $\nhalf(\bthetatilde-\bthetabar)$ also converges to a mean-zero Gaussian distribution.  Analogous to the proof of Theorem 5.21 of \cite{van2000asymptotic}, we then obtain
\begin{equation}
\begin{split}
&\widetilde\bgamma\supone-\bgammabar\supone=[\bA_1^{-1}-\bA_1^{-1}\bB_1(\bB_1\trans\bA_1^{-1}\bB_1)^{-1}\bB_1\trans\bA_1^{-1}]\bXi_{11}+\bA_1^{-1}\bB_1(\bB_1\trans\bA_1^{-1}\bB_1)^{-1}\bXi_{12}=O_p(n^{-\frac{1}{2}});\\
&\widetilde\bgamma\suptwo_{\bthetatilde}-\bgammabar\suptwo=[\bA_2^{-1}-\bA_2^{-1}\bB_2(\bB_2\trans\bA_2^{-1}\bB_2)^{-1}\bB_2\trans\bA_2^{-1}]\bXi_{21}+\bA_2^{-1}\bB_2(\bB_2\trans\bA_2^{-1}\bB_2)^{-1}\bXi_{22}=O_p(n^{-\frac{1}{2}}).
\end{split}    
\label{equ:app:G:1}
\end{equation}

 By Conditions \ref{cond:1}, \ref{cond:a1} and \ref{cond:a2}, the consistency of $\widehat\rho_1$ for its limit, and the asymptotic expansion of $\widetilde\bgamma\supone-\bgammabar\supone$ derived above, we can use the argument of Appendix \ref{sec: asym-thetaSSL} to show that $\widehat\btheta\subintri\overset{p}{\rightarrow}\bthetabar$ and obtain the expansion
\begin{align*}
n^{\frac{1}{2}}(\widehat\btheta\subintri-\bthetabar)=& \nhalf \bA^{-1} \left[ N^{-1} \sum_{i = 1}^{N} \bx_i \{ g(\bgammabar \suponetrans \bPhi_i) - g(\bthetabar \trans \bx_i)  \} +\bB_1\trans( \widetilde\bgamma\supone- \bgammabar\supone) \right] +o_p(1),\\
=&\nhalf \sum_{s= 1}^S \rho_s   \left[\ninv_s\sum_{i = 1}^n  I(\Ssc_i = s) \bA^{-1}\bx_i \{y_i - g(\bgammabar \suponetrans \bPhi_i) \}  \right]+ o_p(1)=\widehat\Wsc(\bgammabar\supone)+o_p(1).
\end{align*}
The second equality follows from the fact that
\[
\bB_1\trans(\widetilde\bgamma\supone-\bgammabar\supone)=\bzero+\bXi_{12}.
\]
Thus, the asymptotic variance of $n^{\frac{1}{2}}(\e\trans\widehat\btheta\subintri-\e\trans\bthetabar)$ is ${\rm E}[R(\e\trans\bA^{-1}\bx)^2\{y - g(\bgammabar \suponetrans \bPhi) \}^2]$, which is minimized among those of $\{\e\trans\widehat\Wsc(\bgamma):{\rm E}[\bx\{y - g(\bgamma \trans \bPhi) \}]=\bzero\}$. From Theorem \ref{thm:1}, $n^{\frac{1}{2}}(\widehat\btheta\subplain-\bthetabar)$ is asymptotically equivalent with $\Wsc(\bgammabar)$. Therefore, when the imputation model is correctly specified, that is,  there exists $\bgamma_0$ such that ${\rm P}(y=1\mid \bu)=g(\bPhi\trans\bgamma_0)$, $\bgammabar=\bgammabar\supone=\bgamma_0$, it follows that $n^{\frac{1}{2}}(\widehat\btheta\subintri-\bthetabar)$ is asymptotically equivalent to $n^{\frac{1}{2}}(\widehat\btheta\subplain-\bthetabar)$. This completes the proof of Theorem \ref{thm:3}.

Using our previous arguments, we next establish Theorem \ref{thm:a1}. Similar to (\ref{equ:app:G:1}), we expand $n^{\frac{1}{2}}(\widehat\btheta\subintri^D-\bthetabar)$ as 
\begin{align*}
n^{\frac{1}{2}}(\widehat\btheta\subintri^D-\bthetabar)=&\nhalf \bA^{-1} \left[ N^{-1} \sum_{i = 1}^{N} \bx_i \{ g(\bgammabar \suptwotrans \bPsi_{\bthetatilde i}) - g(\bthetabar \trans \bx_i)  \} +\bB_1\trans(\widetilde\bgamma\suptwo_{\bthetatilde}-\bgammabar\suptwo) \right] +o_p(1),\\
=&\nhalf \sum_{s= 1}^S \rho_s   \left[\ninv_s\sum_{i = 1}^n  I(\Ssc_i = s) \bA^{-1}\bx_i \{y_i - g(\bgammabar \suptwotrans\bar\bPsi_i) \}  \right]\\
&+\nhalf \bA^{-1}\left[N^{-1} \sum_{i = 1}^{N} \bx_i \{g(\bgammabar \suptwotrans \bPsi_{\bthetatilde i})-g(\bgammabar \suptwotrans\bar\bPsi_i)\}+ \dot\bXi_{\btheta,22}\trans(\bthetatilde-\bthetabar)\right]+o_p(1)\\
=&\nhalf \sum_{s= 1}^S \rho_s   \left[\ninv_s\sum_{i = 1}^n  I(\Ssc_i = s) \bA^{-1}\bx_i \{y_i - g(\bgammabar \suptwotrans\bar\bPsi_i) \}  \right]+o_p(1),
\end{align*}
The third equality follows from the fact that $\nhalf(\bthetatilde-\bthetabar)=O_p(1)$ and
\[
\partial\left(N^{-1} \sum_{i = 1}^{N} \bx_i \{g(\bgammabar \suptwotrans \bPsi_{\bthetatilde i})-g(\bgammabar \suptwotrans\bar\bPsi_i)\}\right)/\partial{\btheta}+ \dot\bXi_{\btheta,22}=o_p(1).
\]
Using this result, and applying similar arguments as those used for $\widetilde\bgamma\suptwo_{\bthetatilde}$, we have that
\begin{align*}
\widetilde\bgamma\suptwo_{\bthetahat\subintri^D}-\bgammabar\suptwo=&[\bA_2^{-1}-\bA_2^{-1}\bB_2(\bB_2\trans\bA_2^{-1}\bB_2)^{-1}\bB_2\trans\bA_2^{-1}]\bXi_{21}'+\bA_2^{-1}\bB_2(\bB_2\trans\bA_2^{-1}\bB_2)^{-1}\bXi_{22}'=O_p(n^{-\frac{1}{2}}),\\
\mbox{where}\quad\bXi_{21}'=&\frac{1}{n}\sum_{i=1}^n\zeta_i\{1- 2\yscrl(\bthetabar \trans \bx_i) +{\dot\bD}\trans\bA^{-1}\bx_i\}^2\dot g(\bgammabar\suptwotrans \bar\bPsi_i)\bar\bPsi_i\{y_i - g(\bgammabar\suptwotrans \bar\bPsi_i) \}+\dot\bXi_{\btheta,21}\trans(\bthetahat\subintri^D-\bthetabar);\\
\bXi_{22}'=&\frac{1}{N}\sum_{i=1}^Nw_i[\bx_i\trans,\yscrl(\bthetabar \trans \bx_i)]\trans\{y_i-g(  \bgammabar\suptwotrans\bar\bPsi_i)\}+\dot\bXi_{\btheta,22}\trans(\bthetahat\subintri^D-\bthetabar).
\end{align*}
We then follow the same procedure as in Appendix \ref{sec: asym-DSSL} (specifically, noting that $\widetilde\bgamma\suptwo_{\bthetahat\subintri^D}$ corresponds to the $\bthetahat^D\subintri$ plugged into $\Dhat\subintri(\btheta)$, the derivation for the augmentation approach in Section \ref{sec:aug} can be used directly) to derive that $\Dhat\subintri\overset{p}{\rightarrow}\Dbar$ and 
\[
\nhalf(\Dhat\subintri-\Dbar) =  \nhalf  \left[  N^{-1} \sum_{i = 1}^{N} \{1- 2\Yscbar_i +\bar{\dot\bD}\trans \bA^{-1}\bx_i \} \{y_i - g(\bgammabar\suptwotrans\bar\bPsi_{i})\}\right]+o_p(1)=\widehat\Tsc(\bgammabar\suptwo_{\bthetabar})+o_p(1).
\]
By the definition of $\bgammabar\suptwo_{\bthetabar}$, the asymptotic variance of  $\widehat\Tsc(\bgammabar\suptwo_{\bthetabar})$ is minimized among those of $\{\widehat\Tsc(\bgamma):{\rm E}[\bx\trans,\yscrl(\bthetabar \trans \bx)]\trans\{y-g(  \bgamma\trans\bar\bPsi)\}=\bzero\}$. Additionally, we may use a similar procedure as that in Appendix \ref{sec: asym-DSSL} to derive that 
\[
\nhalf(\Dhat\subplain-\Dbar)=\widehat\Tsc(\bgammabar_{\bthetabar})+o_p(1).
\]
Thus, when the imputation model for estimating $D$, i.e. ${\rm P}(y = 1 \mid \bu) = g(\bgamma \trans \bar\bPsi)$ is correct, we have $\bgammabar_{\bthetabar}=\bgammabar\suptwo_{\bthetabar}$ and that $\nhalf(\Dhat\subintri-\Dbar) $ is asymptotically equivalent to $\nhalf(\Dhat\subplain-\Dbar)$. These arguments establish Theorem \ref{thm:a1}.
}

\section{Justification for Weighted CV Procedure}\label{sec: asym-cvW}
To provide a heuristic justification for the weights for our ensemble CV method, consider an arbitrary smooth loss function $d(\cdot, \cdot)$ and let $\Dsc(\btheta) = {\rm E}[d\{y_0, \Ysc(\btheta\trans\bx_0)\}]$. Let $\Dschat(\btheta)$ denote the empirical unbiased estimate of $\Dsc(\btheta)$ and suppose that $\bthetahat$ minimizes $\Dschat(\btheta)$ (i.e. $\dot \Dschat(\bthetahat)=\bzero$). Suppose that $\nhalf(\bthetahat - \bthetabar) \to N(0, \bSigma)$ in distribution. Then by a Taylor series expansion {of $\Dschat(\bthetabar)$ at $\bthetahat$}, 
$$\Dschat(\bthetahat)=\Dschat(\bthetabar)-\frac{1}{2} (\bthetahat - \bthetabar)\trans  {\ddot \Dschat(\bthetahat)}(\bthetahat-\bthetabar) +o_p(n^{-1}) \quad \mbox{and}$$
\begin{align*}
{\rm E}\{\Dschat(\bthetahat)\} =\Dsc(\bthetabar)  - \frac{1}{2} n^{-1} \text{Tr} \{ \ddot \Dsc(\bthetabar) \bSigma \} +o_p(n^{-1})
\end{align*}
where $\ddot \Dsc(\bthetabar) = \partial \Dsc(\btheta)/\partial\btheta \partial \btheta \trans$. For the $K$-fold CV estimator, $\Dschat_{cv}= K^{-1} \sum_{k = 1}^K \Dschat_{k}(\bthetahat_{(\text{-}k)})$, we note that since $\Dschat_{k}(\btheta)$ is independent of $\bthetahat_{(\text{-}k)}$
 \begin{align*}
{\rm E}(\Dschat_{cv}) &= \Dsc(\bthetabar) +  K^{-1} \sum_{k = 1}^K {\rm E}\{\dot\Dschat_{k}(\bthetabar)\}{\rm E}(\bthetahat_{(\text{-}k)}-\bthetabar) + \frac{1}{2} \frac{K}{K-1}n^{-1} \text{Tr} \{ \ddot \bD(\bthetabar) \bSigma \} + o_p(n^{-1})\\
&=\Dsc(\bthetabar)  + \frac{1}{2} \frac{K}{K-1}n^{-1} \text{Tr} \{ \ddot \bD(\bthetabar) \bSigma \} + o_p(n^{-1}),
\end{align*}
where the second equality follows from the fact that ${\rm E}\{\dot\Dschat_{k}(\bthetabar)\}=\dot\Dsc(\bthetabar)=\bzero$ when $\bthetabar$ minimizes $\Dsc(\btheta)$. Letting $\Dschat_{\omega} =  \omega \Dschat(\bthetahat) +  (1-\omega)\Dschat_{cv}$ with $\omega = K/(2K-1)$, it follows that $\omega n^{-1}  - (1 - \omega)  Kn^{-1}/(K-1) =  0$ and thus
$$
{\rm E}(\Dschat_{\omega}) = \Dsc(\bthetabar)  + o_p(n^{-1}).
$$

\newpage

\begin{center}
    \Large{Supplementary Materials}
\end{center}
%%%%% Supplement %%%%
\renewcommand \thesection{S\arabic{section}}
\renewcommand \thetable{S\arabic{table}}
\renewcommand \thefigure{S\arabic{figure}}
\renewcommand{\thelemma}{S\arabic{lemma}}
\renewcommand{\thetheorem}{S\arabic{theorem}}
\renewcommand{\thecond}{S\arabic{cond}}
\renewcommand{\theremark}{S\arabic{remark}}
\renewcommand{\theprop}{S\arabic{prop}}

\setcounter{section}{0}

\newpage
\section{Notation List}\label{sec:note-list}
{We present the list of notations used in the paper in Table \ref{tab:notation}.}
\begin{table}[!htb]
\centering
\caption{Main notations in the paper.}
\label{tab:notation}
{\footnotesize
\begin{tabular}{|ll|ll|}
\hline
       Notation & Description & Notation & Description \\
\hline
$\Lscr/\Uscr/\Dscr$ & Set of labelled/unlabelled/all data. & $\btheta$ & Outcome model coefficients. \\
$n/N$ & Size of labelled/all data. & $\bthetabar$ & Limiting parameter for $\btheta$. \\
$S$ & Number of strata. & $\bthetahat\subSL$ & Supervised estimation of $\bthetabar$. \\
$n_s/N_s$ & Size of labelled/all data from strata $s$. & $\bthetahat\subDR$ & Density ratio (DR) estimator. \\
$(y,\bx)$ & Response and regressors. & $\bgamma$ & Imputation model coefficients. \\
$p$ & Dimension of $\bx$. & $\bgammabar/\bgammatilde$ & Limitation/Estimation of $\bgamma$. \\
$\Ssc$ & Index for the strata. & $\bWhat/\bWhat_j$ & Weight for the SSL estimation. \\
$\bu$ & $\bu=(\bx\trans,\Ssc)\trans$. & $\widecheck\btheta\subSSL/\bthetahat\subplain$ & (Plain) SSL estimation for $\bthetabar$.\\
& & $\widehat\btheta\subintri$ & Intrinsic efficient SSL estimation for $\bthetabar$.\\
$\bF$ & $\bF=(y,\bu\trans)\trans$. & $\bthetahat\subSSL$ & $\bthetahat\subSSL=\bWhat\bthetahat\subSL+(\bI-\bWhat)\widecheck\btheta\subSSL$.  \\
$V_i$ & Indicator: $V_i=1$ if $y_i$ is observed. & $D(\btheta)$ & Accuracy measure. \\
$\pi_s$ & Proportion of labelled data in strata $s$. & $\Dbar/\Dbar_1/\Dbar_2$ & $\Dbar=\Dbar(\bthetabar)$, limitation of $D(\btheta)$. \\
$\rho_{1s}$, $\rho_{s}$ & Proportion of $s$ in labelled/all data. & $\bnu_{\btheta}$ & The augmented term coefficients. \\
$\pihat_s$, $\rhohat_{1s}$, $\rhohat_{s}$ & $\pihat_s=n_s/N_s$, $\rhohat_{1s}=n_s/n$, $\rhohat_{s}=N_s/N$. & $\bnubar_{\btheta}$, $\bnutilde_{\btheta}$ & Limitation/Estimation of $\bnu_{\btheta}$. \\
$\widehat{\omega}_i$ & Inverse probability weights $\widehat{\omega}_i=V_i/\pihat_{\Ssc_i}$. & $\Dhat\subSL(\bthetahat\subSL)$ & SL estimation of $\Dbar(\bthetabar)$. \\
$g(\cdot)$ & Link function of the model. & $\Dhat\subSSL(\bthetahat\subSSL)$ & SSL estimation of $\Dbar(\bthetabar)$. \\
& & $\Dhat\subplain$ & Plain SSL estimation of $\Dbar$. \\
& & $\Dhat\subintri$ & Intrinsic efficient SSL estimation of $\Dbar$. \\
$m(\cdot)$ & Conditional mean: $m(\bu)={\rm P}(y=1|\bu)$. & $\Dhat\subSL\supcv$/$\Dhat\subSSL\supcv$ & SL/SSL CV estimation of $\Dbar$. \\
$d(\cdot,\cdot)$ & Function: $d(y,z)=y(1-2z)+z^2$. & $\Dhat\subSL^w$/$\Dhat\subSSL^w$ & SL/SSL ensemble estimation of $\Dbar$. \\
$\bPhi(\bu)$,$\bPhi$ & Basis for the imputation model. & $\Dhat\subDR^w$ & DR (ensemble) estimation of $\Dbar$. \\
$\Ysc$ $(\Ysc_1/\Ysc_2)$ & Prediction (mean/label) for $y$. & $\Dhat\subSL^w$/$\Dhat\subSSL^w$ & SL/SSL ensemble estimation of $\Dbar$. \\
$\bz_{\btheta}$ & Augmented variable $\bz_{\btheta}=[1,\yscrl(\btheta \trans \bx)]\trans$. & $\bthetahat_{\tiny (-k)}/\bgammatilde_{\tiny (-k)}$ & Estimated with data except $\Lscr_k$. \\
$\mtilde/\mtilde_I$/$\mtilde_{II}$ & Imputation/(augmented) of $m(\cdot)$. & $\bthetahat^*/\bgammatilde^*/\Dhat^*$ & Perturbation estimators. \\
$\Lscr_k$ & The $k$-th fold labelled data. & $\Wschat\subSL$ & $\Wschat\subSL=\nhalf(\bthetahat\subSL - \bthetabar)$.\\
$K$ & Number of CV folds. & $\Wschat\subSSL$ & $\Wschat\subSSL=\nhalf(\widecheck\btheta\subSSL - \bthetabar)$.\\
$w$ & The ensemble weight $w=K/(2K-1)$ & $\Tschat\subSL$ & $\Tschat\subSL =  \nhalf\{\Dhat\subSL( \bthetahat \subSL) - D( \bthetabar)\}$. \\
$\bA$ & $\bA = {\rm E} \{ \bx_i^{\otimes 2} \dot{g}(\bthetabar \trans  \bx_i) \}$. & $\widecheck\Tsc\subSSL$ & $\widecheck\Tsc\subSSL = \nhalf\{\Dhat\subSSL( \widecheck\btheta \subSSL) - D( \bthetabar)\} $. \\
$\bSigma\subSL,\bSigma\subSSL$ & Asymptotic variance of $\Wschat\subSL$ and $\Wschat\subSSL$. & $\Tschat\subSSL$ & $\Tschat\subSSL = \nhalf\{\Dhat\subSSL( \bthetahat \subSSL) - D( \bthetabar)\} $. \\
\hline
\end{tabular}
}
\end{table}

\newpage

\def\subori{_{\scriptscriptstyle \sf ori}}
\def\suborik{_{{\scriptscriptstyle \sf ori}, k}}
\def\subtra{_{\scriptscriptstyle \sf tra}}
\def\bzero{\mathbf{0}}
\def\bPsi{\boldsymbol{\Psi}}

\section{Simulation Results: Regression Parameter}\label{sec:regr-sim}

In Table \ref{table: betas}, we summarize the results for $\bthetahat\subSL$, $\bthetahat\subSSL$ and the DR estimator of $\bthetabar$, $\bthetahat\subDR$, when $S =2$ and $n_s=100$. Results for $(S=2,n_s=200)$, $(S=4,n_s=100)$, and $(S=4,n_s=200)$ present similar patterns and are shown in Table \ref{tab:app:11}-\ref{tab:app:13}.  All estimators have negligible biases of comparable magnitudes for estimating $\bthetabar$.  The proposed variance estimation procedure performs well in finite sample with empirical coverage of the 95\% CIs close to the nominal level in all settings. Consistent with our theoretical findings, $\bthetahat\subSL$ and $\bthetahat\subSSL$ are equally efficient under setting $(\Msc_{\mbox{\tiny correct}}$, $\Isc_{\mbox{\tiny correct}})$, and $\bthetahat\subSSL$ is substantially more efficient under settings  $(\Msc_{\mbox{\tiny incorrect}}, \Isc_{\mbox{\tiny correct}})$ and $(\Msc_{\mbox{\tiny incorrect}}, \Isc_{\mbox{\tiny incorrect}})$. In contrast, the DR estimator $\bthetahat\subDR$ has efficiency similar to that of $\bthetahat\subSL$ across all settings and hence performs worse than $\bthetahat\subSSL$ under $(\Msc_{\mbox{\tiny incorrect}}, \Isc_{\mbox{\tiny correct}})$ and $(\Msc_{\mbox{\tiny incorrect}}, \Isc_{\mbox{\tiny incorrect}})$.  This behavior is most likely due to the difficulty in obtaining a good approximation to the density ratio.

\begin{table}[!htbp]
    \caption{Bias, empirical SE (ESE), average of the estimated standard errors (ASE) and coverage probabilities (CP) of the 95\% CIs under (i) ($\Msc_{\mbox{\tiny correct}}$, $\Isc_{\mbox{\tiny correct}}$),  (ii)  ($\Msc_{\mbox{\tiny incorrect}}$, $\Isc_{\mbox{\tiny correct}}$), and (iii) ($\Msc_{\mbox{\tiny incorrect}}$, $\Isc_{\mbox{\tiny incorrect}}$) when $S = 2$ and $n_s=100$.  Shown also is the relative efficiency (RE) of the SS estimators $\bthetahat\subSSL$ and $\bthetahat\subDR$ to the supervised estimator $\bthetahat\subSL$ to with respect to mean square error.}
    \label{table: betas}
      \centering
       \begin{tabular}{crrr|rrrr|rrr}
       \hline
     & \multicolumn{1}{r}{} & \multicolumn{2}{c|}{$\bthetahat\subSL$}& \multicolumn{4}{c|}{$\bthetahat\subSSL$} & \multicolumn{3}{c}{$\bthetahat\subDR$} \\ 
  \hline
 && Bias & $\text{ESE}$ & Bias & $\text{ESE}_{\text{ASE}}$ & CP & RE & Bias & $\text{ESE}$ & RE\\  \hline  \hline
(i) & $\theta_0$ & $0.31$ & $0.58$ & $0.30$ & $0.58_{0.48}$ & $0.96$ & $1.01$ & $0.34$ & $0.62$ & $0.87$  \\
& $\theta_1$ & $0.15$ & $0.32$ & $0.14$ & $0.32_{0.27}$ & $0.94$ & $0.98$ & $0.16$ & $0.34$ & $0.88$  \\
& $\theta_2$ & $0.15$ & $0.32$ & $0.14$ & $0.32_{0.28}$ & $0.96$ & $1.04$ & $0.17$ & $0.34$ & $0.89$  \\
& $\theta_3$ & $0.08$ & $0.24$ & $0.07$ & $0.24_{0.22}$ & $0.94$ & $1.02$ & $0.09$ & $0.26$ & $0.91$  \\
& $\theta_4$ & $0.08$ & $0.25$ & $0.07$ & $0.25_{0.23}$ & $0.94$ & $1.01$ & $0.08$ & $0.26$ & $0.91$  \\
& $\theta_5$ & $0.00$ & $0.22$ & $0.01$ & $0.22_{0.20}$ & $0.94$ & $0.98$ & $0.01$ & $0.23$ & $0.92$  \\
& $\theta_6$ & $0.02$ & $0.20$ & $0.01$ & $0.19_{0.20}$ & $0.96$ & $1.02$ & $0.01$ & $0.21$ & $0.89$  \\
& $\theta_7$ & $0.01$ & $0.19$ & $0.01$ & $0.19_{0.20}$ & $0.96$ & $0.98$ & $0.01$ & $0.21$ & $0.85$  \\
& $\theta_8$ & $0.01$ & $0.21$ & $0.01$ & $0.21_{0.20}$ & $0.95$ & $1.01$ & $0.01$ & $0.23$ & $0.87$  \\
& $\theta_9$ & $0.01$ & $0.21$ & $0.01$ & $0.21_{0.20}$ & $0.94$ & $0.97$ & $0.01$ & $0.22$ & $0.91$  \\
& $\theta_{10}$ & $0.00$ & $0.19$ & $0.00$ & $0.19_{0.19}$ & $0.95$ & $0.96$ & $0.00$ & $0.20$ & $0.91$  \\
  \hline\hline
(ii)& $\theta_0$ & $0.01$ & $0.21$ & $0.01$ & $0.19_{0.19}$ & $0.96$ & $1.27$ & $0.01$ & $0.21$ & $1.02$  \\
& $\theta_1$ & $0.05$ & $0.18$ & $0.02$ & $0.14_{0.14}$ & $0.95$ & $1.67$ & $0.07$ & $0.19$ & $0.89$  \\
& $\theta_2$ & $0.06$ & $0.19$ & $0.03$ & $0.15_{0.15}$ & $0.94$ & $1.62$ & $0.08$ & $0.21$ & $0.81$  \\
& $\theta_3$ & $0.03$ & $0.15$ & $0.02$ & $0.13_{0.13}$ & $0.95$ & $1.38$ & $0.04$ & $0.17$ & $0.81$  \\
& $\theta_4$ & $0.05$ & $0.16$ & $0.03$ & $0.13_{0.13}$ & $0.95$ & $1.42$ & $0.06$ & $0.18$ & $0.81$  \\
& $\theta_5$ & $0.00$ & $0.15$ & $0.00$ & $0.12_{0.13}$ & $0.96$ & $1.54$ & $0.01$ & $0.17$ & $0.81$  \\
& $\theta_6$ & $0.00$ & $0.15$ & $0.01$ & $0.13_{0.13}$ & $0.96$ & $1.50$ & $0.00$ & $0.16$ & $0.90$  \\
& $\theta_7$ & $0.00$ & $0.14$ & $0.00$ & $0.12_{0.13}$ & $0.96$ & $1.38$ & $0.01$ & $0.15$ & $0.80$  \\
& $\theta_8$ & $0.00$ & $0.14$ & $0.00$ & $0.13_{0.13}$ & $0.95$ & $1.28$ & $0.00$ & $0.16$ & $0.79$  \\
& $\theta_9$ & $0.01$ & $0.14$ & $0.01$ & $0.12_{0.13}$ & $0.94$ & $1.27$ & $0.01$ & $0.16$ & $0.82$  \\
& $\theta_{10}$ & $0.00$ & $0.13$ & $0.00$ & $0.11_{0.12}$ & $0.94$ & $1.27$ & $0.00$ & $0.14$ & $0.81$  \\
\hline  \hline
(iii)& $\theta_0$ & $0.16$ & $0.35$ & $0.09$ & $0.29_{0.26}$ & $0.95$ & $1.45$ & $0.17$ & $0.35$ & $0.99$  \\
& $\theta_1$ & $0.13$ & $0.29$ & $0.10$ & $0.24_{0.20}$ & $0.94$ & $1.45$ & $0.14$ & $0.29$ & $0.94$  \\
& $\theta_2$ & $0.05$ & $0.20$ & $0.01$ & $0.16_{0.17}$ & $0.95$ & $1.54$ & $0.06$ & $0.22$ & $0.86$  \\
& $\theta_3$ & $0.00$ & $0.18$ & $0.05$ & $0.17_{0.17}$ & $0.93$ & $1.21$ & $0.01$ & $0.19$ & $0.98$  \\
& $\theta_4$ & $0.02$ & $0.18$ & $0.01$ & $0.16_{0.16}$ & $0.94$ & $1.30$ & $0.03$ & $0.19$ & $0.87$  \\
& $\theta_5$ & $0.10$ & $0.28$ & $0.05$ & $0.24_{0.21}$ & $0.93$ & $1.41$ & $0.11$ & $0.30$ & $0.89$  \\
& $\theta_6$ & $0.08$ & $0.27$ & $0.02$ & $0.23_{0.20}$ & $0.90$ & $1.35$ & $0.10$ & $0.29$ & $0.88$  \\
& $\theta_7$ & $0.01$ & $0.18$ & $0.02$ & $0.15_{0.16}$ & $0.96$ & $1.47$ & $0.01$ & $0.20$ & $0.87$  \\
& $\theta_8$ & $0.00$ & $0.19$ & $0.01$ & $0.15_{0.16}$ & $0.95$ & $1.51$ & $0.00$ & $0.20$ & $0.89$  \\
& $\theta_9$ & $0.00$ & $0.18$ & $0.00$ & $0.15_{0.16}$ & $0.94$ & $1.35$ & $0.00$ & $0.19$ & $0.87$  \\
& $\theta_{10}$ & $0.01$ & $0.17$ & $0.00$ & $0.15_{0.15}$ & $0.94$ & $1.30$ & $0.01$ & $0.18$ & $0.91$  \\
\hline
\end{tabular}
\end{table}

\newpage

\begin{table}[!htbp]
\caption{Bias, empirical SE (ESE), average of the estimated standard errors (ASE) and coverage probabilities (CP) of the 95\% CIs under (i) ($\Msc_{\mbox{\tiny correct}}$, $\Isc_{\mbox{\tiny correct}}$), (ii)  ($\Msc_{\mbox{\tiny incorrect}}$, $\Isc_{\mbox{\tiny correct}}$), and (iii) ($\Msc_{\mbox{\tiny incorrect}}$, $\Isc_{\mbox{\tiny incorrect}}$) when $S = 2$ and $n_s=200$. Shown also is the relative efficiency (RE) of the SS estimators $\bthetahat\subSSL$ and $\bthetahat\subDR$ to the supervised estimator $\bthetahat\subSL$ to with respect to mean square error.}
\label{tab:app:11}
      \centering
       \begin{tabular}{crrr|rrrr|rrr}
       
       \hline
      & \multicolumn{1}{r}{} & \multicolumn{2}{c|}{$\bthetahat\subSL$}& \multicolumn{4}{c|}{$\bthetahat\subSSL$} & \multicolumn{3}{c}{$\bthetahat\subDR$} \\ 
  \hline
 && Bias & ESE & Bias & $\text{ESE}_{\text{ASE}}$ & CP & RE & Bias & ESE & RE\\ 
  \hline
  \hline
(i) & $\theta_0$ & $0.14$ & $0.31$ & $0.14$ & $0.32_{0.28}$ & $0.95$ & $0.97$ & $0.14$ & $0.32$ & $0.97$  \\
& $\theta_1$ & $0.07$ & $0.18$ & $0.07$ & $0.18_{0.16}$ & $0.95$ & $0.97$ & $0.07$ & $0.18$ & $0.96$  \\
& $\theta_2$ & $0.07$ & $0.18$ & $0.07$ & $0.18_{0.17}$ & $0.95$ & $0.98$ & $0.08$ & $0.19$ & $0.95$  \\
& $\theta_3$ & $0.03$ & $0.14$ & $0.03$ & $0.14_{0.14}$ & $0.95$ & $1.00$ & $0.03$ & $0.14$ & $0.95$  \\
& $\theta_4$ & $0.04$ & $0.15$ & $0.04$ & $0.15_{0.14}$ & $0.95$ & $0.97$ & $0.04$ & $0.15$ & $0.98$  \\
& $\theta_5$ & $0.00$ & $0.13$ & $0.00$ & $0.13_{0.13}$ & $0.93$ & $0.96$ & $0.00$ & $0.13$ & $0.95$  \\
& $\theta_6$ & $0.00$ & $0.13$ & $0.00$ & $0.13_{0.13}$ & $0.93$ & $0.98$ & $0.00$ & $0.13$ & $0.94$  \\
& $\theta_7$ & $0.00$ & $0.13$ & $0.00$ & $0.13_{0.13}$ & $0.95$ & $0.98$ & $0.00$ & $0.13$ & $0.93$  \\
& $\theta_8$ & $0.00$ & $0.12$ & $0.00$ & $0.12_{0.13}$ & $0.97$ & $0.98$ & $0.00$ & $0.12$ & $0.93$  \\
& $\theta_9$ & $0.00$ & $0.13$ & $0.00$ & $0.13_{0.13}$ & $0.95$ & $0.99$ & $0.00$ & $0.13$ & $0.97$  \\
& $\theta_{10}$ & $0.00$ & $0.12$ & $0.00$ & $0.12_{0.12}$ & $0.95$ & $0.99$ & $0.00$ & $0.12$ & $0.96$  \\
  \hline
  \hline
(ii)& $\theta_0$ & $0.01$ & $0.14$ & $0.00$ & $0.12_{0.12}$ & $0.95$ & $1.43$ & $0.00$ & $0.13$ & $1.20$  \\
& $\theta_1$ & $0.03$ & $0.11$ & $0.01$ & $0.09_{0.09}$ & $0.95$ & $1.63$ & $0.03$ & $0.11$ & $1.03$  \\
& $\theta_2$ & $0.03$ & $0.12$ & $0.01$ & $0.09_{0.09}$ & $0.97$ & $1.77$ & $0.04$ & $0.12$ & $0.97$  \\
& $\theta_3$ & $0.02$ & $0.10$ & $0.01$ & $0.09_{0.09}$ & $0.95$ & $1.37$ & $0.02$ & $0.11$ & $0.89$  \\
& $\theta_4$ & $0.02$ & $0.10$ & $0.01$ & $0.09_{0.09}$ & $0.95$ & $1.35$ & $0.02$ & $0.11$ & $0.93$  \\
& $\theta_5$ & $0.00$ & $0.10$ & $0.00$ & $0.08_{0.08}$ & $0.95$ & $1.43$ & $0.00$ & $0.10$ & $0.94$  \\
& $\theta_6$ & $0.01$ & $0.10$ & $0.01$ & $0.08_{0.08}$ & $0.94$ & $1.58$ & $0.01$ & $0.10$ & $1.03$  \\
& $\theta_7$ & $0.00$ & $0.09$ & $0.00$ & $0.08_{0.08}$ & $0.94$ & $1.29$ & $0.00$ & $0.10$ & $0.96$  \\
& $\theta_8$ & $0.00$ & $0.09$ & $0.01$ & $0.08_{0.08}$ & $0.95$ & $1.35$ & $0.01$ & $0.10$ & $0.91$  \\
& $\theta_9$ & $0.00$ & $0.09$ & $0.00$ & $0.08_{0.08}$ & $0.95$ & $1.31$ & $0.00$ & $0.10$ & $0.91$  \\
& $\theta_{10}$ & $0.00$ & $0.09$ & $0.00$ & $0.08_{0.08}$ & $0.93$ & $1.31$ & $0.01$ & $0.09$ & $0.91$  \\
  \hline
  \hline
(iii)& $\theta_0$ & $0.09$ & $0.22$ & $0.05$ & $0.17_{0.16}$ & $0.96$ & $1.63$ & $0.08$ & $0.20$ & $1.19$  \\
& $\theta_1$ & $0.06$ & $0.15$ & $0.04$ & $0.12_{0.12}$ & $0.95$ & $1.56$ & $0.06$ & $0.14$ & $1.16$  \\
& $\theta_2$ & $0.03$ & $0.13$ & $0.01$ & $0.10_{0.10}$ & $0.95$ & $1.54$ & $0.03$ & $0.13$ & $0.95$  \\
& $\theta_3$ & $0.01$ & $0.13$ & $0.02$ & $0.10_{0.10}$ & $0.95$ & $1.49$ & $0.01$ & $0.12$ & $1.10$  \\
& $\theta_4$ & $0.01$ & $0.12$ & $0.00$ & $0.10_{0.10}$ & $0.95$ & $1.52$ & $0.01$ & $0.13$ & $0.96$  \\
& $\theta_5$ & $0.05$ & $0.15$ & $0.02$ & $0.13_{0.13}$ & $0.94$ & $1.32$ & $0.05$ & $0.15$ & $1.01$  \\
& $\theta_6$ & $0.05$ & $0.16$ & $0.02$ & $0.13_{0.13}$ & $0.93$ & $1.50$ & $0.05$ & $0.16$ & $1.00$  \\
& $\theta_7$ & $0.01$ & $0.11$ & $0.01$ & $0.10_{0.10}$ & $0.95$ & $1.33$ & $0.01$ & $0.11$ & $0.94$  \\
& $\theta_8$ & $0.01$ & $0.11$ & $0.01$ & $0.09_{0.10}$ & $0.96$ & $1.47$ & $0.01$ & $0.11$ & $0.95$  \\
& $\theta_9$ & $0.00$ & $0.11$ & $0.00$ & $0.09_{0.10}$ & $0.96$ & $1.56$ & $0.01$ & $0.12$ & $0.91$  \\
& $\theta_{10}$ & $0.01$ & $0.10$ & $0.01$ & $0.08_{0.09}$ & $0.97$ & $1.54$ & $0.01$ & $0.11$ & $0.91$  \\
  \hline
\end{tabular}
\end{table}

\newpage

\begin{table}[!htbp]
    \caption{Bias, empirical SE (ESE), average of the estimated standard errors (ASE) and coverage probabilities (CP) of the 95\% CIs under (i) ($\Msc_{\mbox{\tiny correct}}$, $\Isc_{\mbox{\tiny correct}}$), (ii)  ($\Msc_{\mbox{\tiny incorrect}}$, $\Isc_{\mbox{\tiny correct}}$), and (iii) ($\Msc_{\mbox{\tiny incorrect}}$, $\Isc_{\mbox{\tiny incorrect}}$) when $S = 4$ and $n_s=100$.  Shown also is the relative efficiency (RE) of the SS estimators $\bthetahat\subSSL$ and $\bthetahat\subDR$ to the supervised estimator $\bthetahat\subSL$ to with respect to mean square error.}
    \label{tab:app:12}
      \centering
       \begin{tabular}{crrr|rrrr|rrr}
       
       \hline
      & \multicolumn{1}{r}{} & \multicolumn{2}{c|}{$\bthetahat\subSL$}& \multicolumn{4}{c|}{$\bthetahat\subSSL$} & \multicolumn{3}{c}{$\bthetahat\subDR$} \\ 
  \hline
 && Bias & ESE & Bias & $\text{ESE}_{\text{ASE}}$ & CP & RE & Bias & ESE & RE\\ 
  \hline
  \hline
(i) & $\theta_0$ & $0.14$ & $0.33$ & $0.14$ & $0.33_{0.28}$ & $0.92$ & $1.00$ & $0.15$ & $0.34$ & $0.97$  \\
& $\theta_1$ & $0.07$ & $0.18$ & $0.07$ & $0.18_{0.16}$ & $0.94$ & $0.99$ & $0.07$ & $0.18$ & $0.97$  \\
& $\theta_2$ & $0.06$ & $0.18$ & $0.06$ & $0.18_{0.17}$ & $0.94$ & $0.99$ & $0.06$ & $0.19$ & $0.93$  \\
& $\theta_3$ & $0.03$ & $0.15$ & $0.03$ & $0.15_{0.14}$ & $0.93$ & $0.99$ & $0.04$ & $0.15$ & $0.97$  \\
& $\theta_4$ & $0.04$ & $0.15$ & $0.04$ & $0.15_{0.14}$ & $0.93$ & $0.98$ & $0.04$ & $0.15$ & $0.96$  \\
& $\theta_5$ & $0.01$ & $0.13$ & $0.01$ & $0.13_{0.13}$ & $0.93$ & $0.97$ & $0.01$ & $0.13$ & $0.96$  \\
& $\theta_6$ & $0.00$ & $0.12$ & $0.00$ & $0.12_{0.13}$ & $0.95$ & $1.01$ & $0.00$ & $0.13$ & $0.97$  \\
& $\theta_7$ & $0.00$ & $0.12$ & $0.00$ & $0.12_{0.13}$ & $0.94$ & $0.99$ & $0.00$ & $0.13$ & $0.94$  \\
& $\theta_8$ & $0.01$ & $0.12$ & $0.01$ & $0.12_{0.13}$ & $0.96$ & $0.96$ & $0.01$ & $0.12$ & $0.97$  \\
& $\theta_9$ & $0.01$ & $0.12$ & $0.01$ & $0.12_{0.13}$ & $0.95$ & $1.00$ & $0.01$ & $0.12$ & $0.93$  \\
& $\theta_{10}$ & $0.01$ & $0.12$ & $0.01$ & $0.12_{0.12}$ & $0.93$ & $0.99$ & $0.01$ & $0.12$ & $0.93$  \\
  \hline
  \hline
(ii) & $\theta_0$ & $0.07$ & $0.15$ & $0.07$ & $0.13_{0.12}$ & $0.93$ & $1.27$ & $0.07$ & $0.14$ & $1.09$  \\
& $\theta_1$ & $0.05$ & $0.13$ & $0.04$ & $0.10_{0.09}$ & $0.93$ & $1.59$ & $0.06$ & $0.13$ & $0.96$  \\
& $\theta_2$ & $0.03$ & $0.12$ & $0.02$ & $0.10_{0.10}$ & $0.95$ & $1.57$ & $0.03$ & $0.12$ & $1.04$  \\
& $\theta_3$ & $0.00$ & $0.10$ & $0.01$ & $0.09_{0.09}$ & $0.94$ & $1.27$ & $0.00$ & $0.10$ & $0.92$  \\
& $\theta_4$ & $0.02$ & $0.10$ & $0.01$ & $0.09_{0.09}$ & $0.95$ & $1.26$ & $0.02$ & $0.11$ & $0.89$  \\
& $\theta_5$ & $0.01$ & $0.10$ & $0.00$ & $0.08_{0.08}$ & $0.95$ & $1.50$ & $0.01$ & $0.10$ & $0.93$  \\
& $\theta_6$ & $0.01$ & $0.10$ & $0.01$ & $0.08_{0.09}$ & $0.96$ & $1.58$ & $0.02$ & $0.11$ & $0.93$  \\
& $\theta_7$ & $0.00$ & $0.10$ & $0.00$ & $0.09_{0.09}$ & $0.94$ & $1.36$ & $0.00$ & $0.11$ & $0.89$  \\
& $\theta_8$ & $0.00$ & $0.09$ & $0.00$ & $0.08_{0.09}$ & $0.96$ & $1.32$ & $0.00$ & $0.10$ & $0.90$  \\
& $\theta_9$ & $0.00$ & $0.10$ & $0.00$ & $0.08_{0.09}$ & $0.95$ & $1.28$ & $0.00$ & $0.10$ & $0.96$  \\
& $\theta_{10}$ & $0.00$ & $0.08$ & $0.00$ & $0.07_{0.08}$ & $0.97$ & $1.29$ & $0.00$ & $0.09$ & $0.92$  \\
  \hline
  \hline
(iii) & $\theta_0$ & $0.08$ & $0.20$ & $0.04$ & $0.16_{0.17}$ & $0.96$ & $1.55$ & $0.07$ & $0.19$ & $1.18$  \\
& $\theta_1$ & $0.06$ & $0.16$ & $0.03$ & $0.13_{0.13}$ & $0.94$ & $1.48$ & $0.05$ & $0.15$ & $1.10$  \\
& $\theta_2$ & $0.03$ & $0.13$ & $0.01$ & $0.10_{0.11}$ & $0.96$ & $1.52$ & $0.03$ & $0.12$ & $1.01$  \\
& $\theta_3$ & $0.01$ & $0.12$ & $0.01$ & $0.10_{0.10}$ & $0.94$ & $1.45$ & $0.00$ & $0.12$ & $1.04$  \\
& $\theta_4$ & $0.01$ & $0.12$ & $0.00$ & $0.09_{0.10}$ & $0.96$ & $1.53$ & $0.01$ & $0.12$ & $0.94$  \\
& $\theta_5$ & $0.04$ & $0.16$ & $0.01$ & $0.13_{0.13}$ & $0.95$ & $1.56$ & $0.04$ & $0.15$ & $1.06$  \\
& $\theta_6$ & $0.05$ & $0.16$ & $0.02$ & $0.13_{0.13}$ & $0.95$ & $1.51$ & $0.05$ & $0.15$ & $1.09$  \\
& $\theta_7$ & $0.00$ & $0.11$ & $0.01$ & $0.09_{0.10}$ & $0.96$ & $1.49$ & $0.00$ & $0.12$ & $0.90$  \\
& $\theta_8$ & $0.00$ & $0.11$ & $0.00$ & $0.09_{0.10}$ & $0.97$ & $1.58$ & $0.00$ & $0.12$ & $0.95$  \\
& $\theta_9$ & $0.00$ & $0.12$ & $0.00$ & $0.09_{0.10}$ & $0.97$ & $1.56$ & $0.01$ & $0.12$ & $0.94$  \\
& $\theta_{10}$ & $0.00$ & $0.10$ & $0.00$ & $0.09_{0.09}$ & $0.96$ & $1.39$ & $0.00$ & $0.11$ & $0.93$  \\
\hline
\end{tabular}
\end{table}

\newpage
\begin{table}[!htbp]
\centering
 \caption{Bias, empirical SE (ESE), average of the estimated standard errors (ASE) and coverage probabilities (CP) of the 95\% CIs under (i) ($\Msc_{\mbox{\tiny correct}}$, $\Isc_{\mbox{\tiny correct}}$), (ii)  ($\Msc_{\mbox{\tiny incorrect}}$, $\Isc_{\mbox{\tiny correct}}$), and (iii) ($\Msc_{\mbox{\tiny incorrect}}$, $\Isc_{\mbox{\tiny incorrect}}$) when $S = 4$ and $n_s=200$. Shown also is the relative efficiency (RE) of the SS estimators $\bthetahat\subSSL$ and $\bthetahat\subDR$ to the supervised estimator $\bthetahat\subSL$ to with respect to mean square error.}
    \label{tab:app:13}
      \centering
       \begin{tabular}{crrr|rrrr|rrr}
       
       \hline
     & \multicolumn{1}{r}{} & \multicolumn{2}{c|}{$\bthetahat\subSL$}& \multicolumn{4}{c|}{$\bthetahat\subSSL$} & \multicolumn{3}{c}{$\bthetahat\subDR$} \\ 
  \hline
 && Bias & ESE & Bias & $\text{ESE}_{\text{ASE}}$ & CP & RE & Bias & ESE & RE\\ 
  \hline
  \hline
(i)& $\theta_0$ & $0.05$ & $0.19$ & $0.05$ & $0.20_{0.19}$ & $0.95$ & $0.99$ & $0.06$ & $0.20$ & $0.94$  \\
& $\theta_1$ & $0.03$ & $0.11$ & $0.03$ & $0.11_{0.11}$ & $0.95$ & $0.99$ & $0.03$ & $0.11$ & $0.96$  \\
& $\theta_2$ & $0.03$ & $0.12$ & $0.03$ & $0.12_{0.11}$ & $0.95$ & $0.97$ & $0.03$ & $0.12$ & $0.97$  \\
& $\theta_3$ & $0.01$ & $0.09$ & $0.01$ & $0.09_{0.09}$ & $0.95$ & $1.00$ & $0.01$ & $0.09$ & $0.97$  \\
& $\theta_4$ & $0.02$ & $0.10$ & $0.02$ & $0.10_{0.09}$ & $0.95$ & $0.99$ & $0.02$ & $0.10$ & $0.95$  \\
& $\theta_5$ & $0.00$ & $0.08$ & $0.00$ & $0.08_{0.08}$ & $0.96$ & $1.00$ & $0.00$ & $0.08$ & $0.97$  \\
& $\theta_6$ & $0.00$ & $0.08$ & $0.00$ & $0.08_{0.08}$ & $0.96$ & $0.99$ & $0.00$ & $0.08$ & $0.97$  \\
& $\theta_7$ & $0.00$ & $0.08$ & $0.00$ & $0.08_{0.08}$ & $0.95$ & $1.01$ & $0.00$ & $0.08$ & $0.99$  \\
& $\theta_8$ & $0.01$ & $0.08$ & $0.01$ & $0.08_{0.08}$ & $0.95$ & $1.00$ & $0.01$ & $0.08$ & $0.99$  \\
& $\theta_9$ & $0.00$ & $0.08$ & $0.00$ & $0.08_{0.08}$ & $0.95$ & $1.00$ & $0.00$ & $0.08$ & $0.97$  \\
& $\theta_{10}$ & $0.00$ & $0.08$ & $0.00$ & $0.08_{0.08}$ & $0.94$ & $1.01$ & $0.00$ & $0.08$ & $0.97$  \\
 \hline
  \hline
(ii) & $\theta_0$ & $0.07$ & $0.12$ & $0.07$ & $0.10_{0.08}$ & $0.91$ & $1.26$ & $0.08$ & $0.12$ & $1.04$  \\
& $\theta_1$ & $0.04$ & $0.09$ & $0.04$ & $0.07_{0.06}$ & $0.92$ & $1.80$ & $0.05$ & $0.09$ & $1.07$  \\
& $\theta_2$ & $0.01$ & $0.08$ & $0.00$ & $0.06_{0.06}$ & $0.94$ & $1.59$ & $0.01$ & $0.08$ & $1.08$  \\
& $\theta_3$ & $0.01$ & $0.07$ & $0.02$ & $0.06_{0.06}$ & $0.94$ & $1.36$ & $0.01$ & $0.07$ & $0.97$  \\
& $\theta_4$ & $0.01$ & $0.07$ & $0.00$ & $0.06_{0.06}$ & $0.93$ & $1.29$ & $0.01$ & $0.07$ & $0.95$  \\
& $\theta_5$ & $0.00$ & $0.07$ & $0.00$ & $0.06_{0.06}$ & $0.95$ & $1.56$ & $0.01$ & $0.07$ & $1.00$  \\
& $\theta_6$ & $0.01$ & $0.07$ & $0.01$ & $0.06_{0.06}$ & $0.94$ & $1.53$ & $0.01$ & $0.07$ & $0.99$  \\
& $\theta_7$ & $0.00$ & $0.06$ & $0.00$ & $0.05_{0.06}$ & $0.96$ & $1.47$ & $0.00$ & $0.06$ & $0.92$  \\
& $\theta_8$ & $0.00$ & $0.07$ & $0.00$ & $0.05_{0.06}$ & $0.95$ & $1.45$ & $0.00$ & $0.07$ & $0.95$  \\
& $\theta_9$ & $0.00$ & $0.06$ & $0.00$ & $0.06_{0.06}$ & $0.94$ & $1.32$ & $0.00$ & $0.07$ & $0.96$  \\
& $\theta_{10}$ & $0.00$ & $0.06$ & $0.00$ & $0.05_{0.05}$ & $0.93$ & $1.39$ & $0.00$ & $0.06$ & $0.96$  \\
  \hline
  \hline
(iii) & $\theta_0$ & $0.04$ & $0.13$ & $0.02$ & $0.11_{0.11}$ & $0.96$ & $1.49$ & $0.03$ & $0.12$ & $1.21$  \\
& $\theta_1$ & $0.03$ & $0.11$ & $0.02$ & $0.09_{0.09}$ & $0.95$ & $1.55$ & $0.02$ & $0.10$ & $1.18$  \\
& $\theta_2$ & $0.02$ & $0.08$ & $0.01$ & $0.07_{0.07}$ & $0.96$ & $1.52$ & $0.02$ & $0.09$ & $0.98$  \\
& $\theta_3$ & $0.00$ & $0.08$ & $0.00$ & $0.07_{0.07}$ & $0.94$ & $1.49$ & $0.00$ & $0.08$ & $1.05$  \\
& $\theta_4$ & $0.01$ & $0.08$ & $0.00$ & $0.06_{0.07}$ & $0.97$ & $1.63$ & $0.00$ & $0.08$ & $0.98$  \\
& $\theta_5$ & $0.02$ & $0.10$ & $0.01$ & $0.09_{0.07}$ & $0.95$ & $1.39$ & $0.02$ & $0.10$ & $1.10$  \\
& $\theta_6$ & $0.02$ & $0.10$ & $0.00$ & $0.09_{0.09}$ & $0.95$ & $1.38$ & $0.02$ & $0.10$ & $1.05$  \\
& $\theta_7$ & $0.00$ & $0.07$ & $0.00$ & $0.06_{0.07}$ & $0.96$ & $1.50$ & $0.00$ & $0.08$ & $0.94$  \\
& $\theta_8$ & $0.01$ & $0.08$ & $0.01$ & $0.06_{0.07}$ & $0.96$ & $1.56$ & $0.01$ & $0.08$ & $0.95$  \\
& $\theta_9$ & $0.00$ & $0.08$ & $0.00$ & $0.06_{0.07}$ & $0.97$ & $1.60$ & $0.00$ & $0.08$ & $0.96$  \\
& $\theta_{10}$ & $0.00$ & $0.08$ & $0.00$ & $0.06_{0.06}$ & $0.93$ & $1.50$ & $0.00$ & $0.08$ & $0.96$  \\
  \hline  
\end{tabular}
\end{table}

\newpage

\section{Simulation Results: Accuracy Parameters}\label{sec:add-sim}

We present additional simulation results to complement our numerical studies in Section \ref{sec-sim} in the main text.  Figures \ref{figure:bias:RE-add} and \ref{figure:RE-add} show the percent bias and relative efficiency of the SL, SSL, and DR estimators of the accuracy measures with and without CV for bias correction with $K=3$ and all combinations of $S$ and $n_s$. %Table \ref{tab:out} presents the out of sample prediction performance of the potentially misspecified logistic regression outcome model estimated via SSL compared with the imputation model and the random forest. 
%{\blue I removed the previous table about out-of-sample performance because we added a new one with more contents and the previous one becomes redundant.}

%We also include additional results for the studies in Section \ref{sim:strata}. %Table \ref{table:betas:bias:uSL} compares the bias of the unweighted SL estimator and the weighted SL and SSL estimators under the stratified sampling mechanism. 
%Table \ref{table:betas:strat1} presents the bias, empirical SE (ESE), coverage probabilities (CP) and relative efficiency (RE) of the SL, SSL and DR estimators for $\btheta$. Table \ref{table:resampling:strata} summarizes estimation performance of the methods for the model evaluation parameters. Figures \ref{figure:bias:RE:new} and \ref{figure:RE:new} show the percent bias and relative efficiency of the SL, SSL and DR estimation for $\bar{D}$.

\begin{figure}[htpb!]
\centering
\caption{Percent biases of the apparent (AP), CV, and ensemble cross validation (eCV) estimators of the Brier score and overall misclassification rate (OMR) for the SL, SSL and DR estimation methods under (i) ($\Msc_{\mbox{\tiny correct}}$, $\Isc_{\mbox{\tiny correct}}$), (ii)  ($\Msc_{\mbox{\tiny incorrect}}$, $\Isc_{\mbox{\tiny correct}}$), and (iii)  ($\Msc_{\mbox{\tiny incorrect}}$, $\Isc_{\mbox{\tiny incorrect}}$) with $K=3$ folds for CV.}
\label{figure:bias:RE-add}
\caption*{(i) ($\Msc_{\mbox{\tiny correct}}$, $\Isc_{\mbox{\tiny correct}}$).}
\begin{minipage}{1\textwidth}
  \centering
\mbox{
  \includegraphics[width=0.38\textwidth]{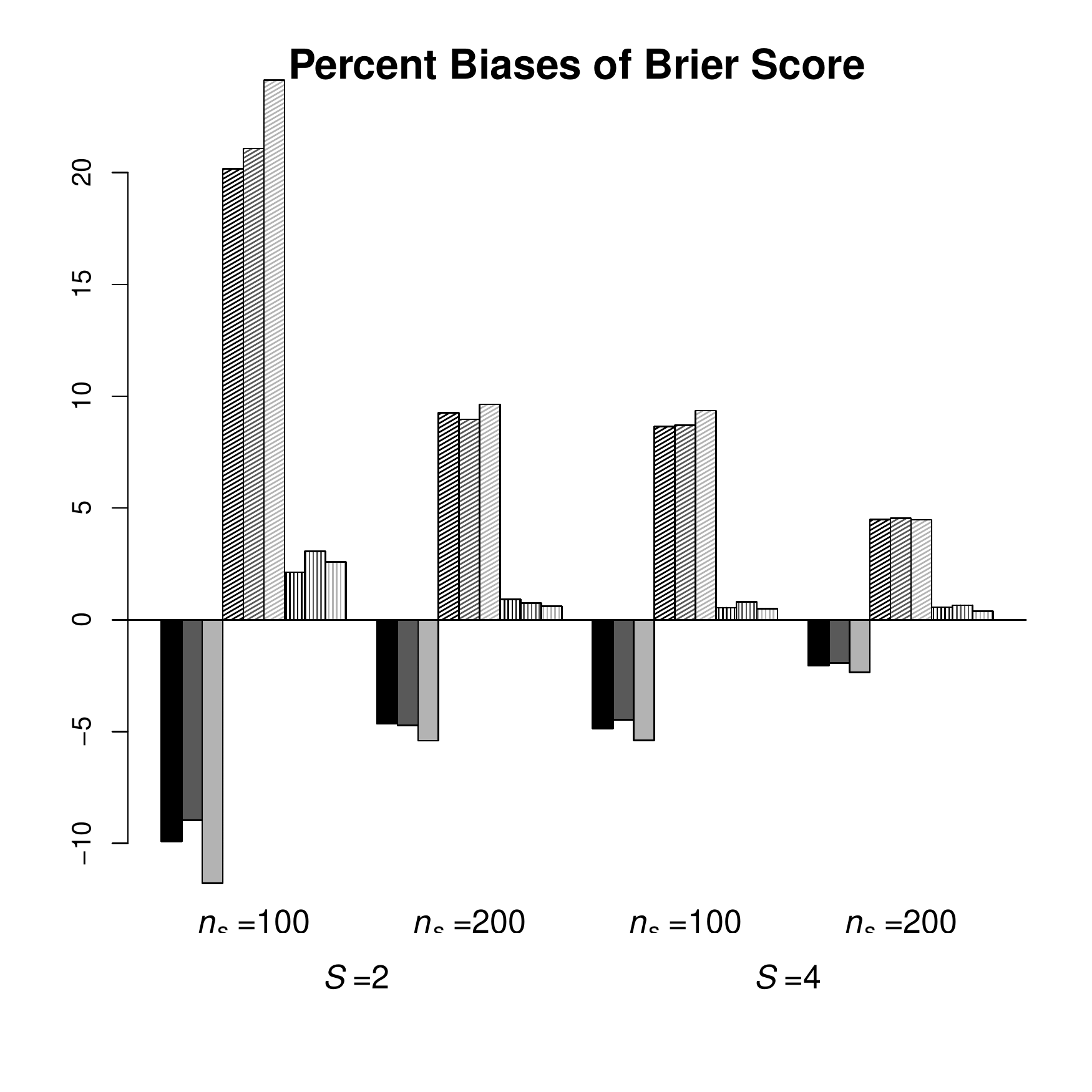}
  \includegraphics[width=0.38\textwidth]{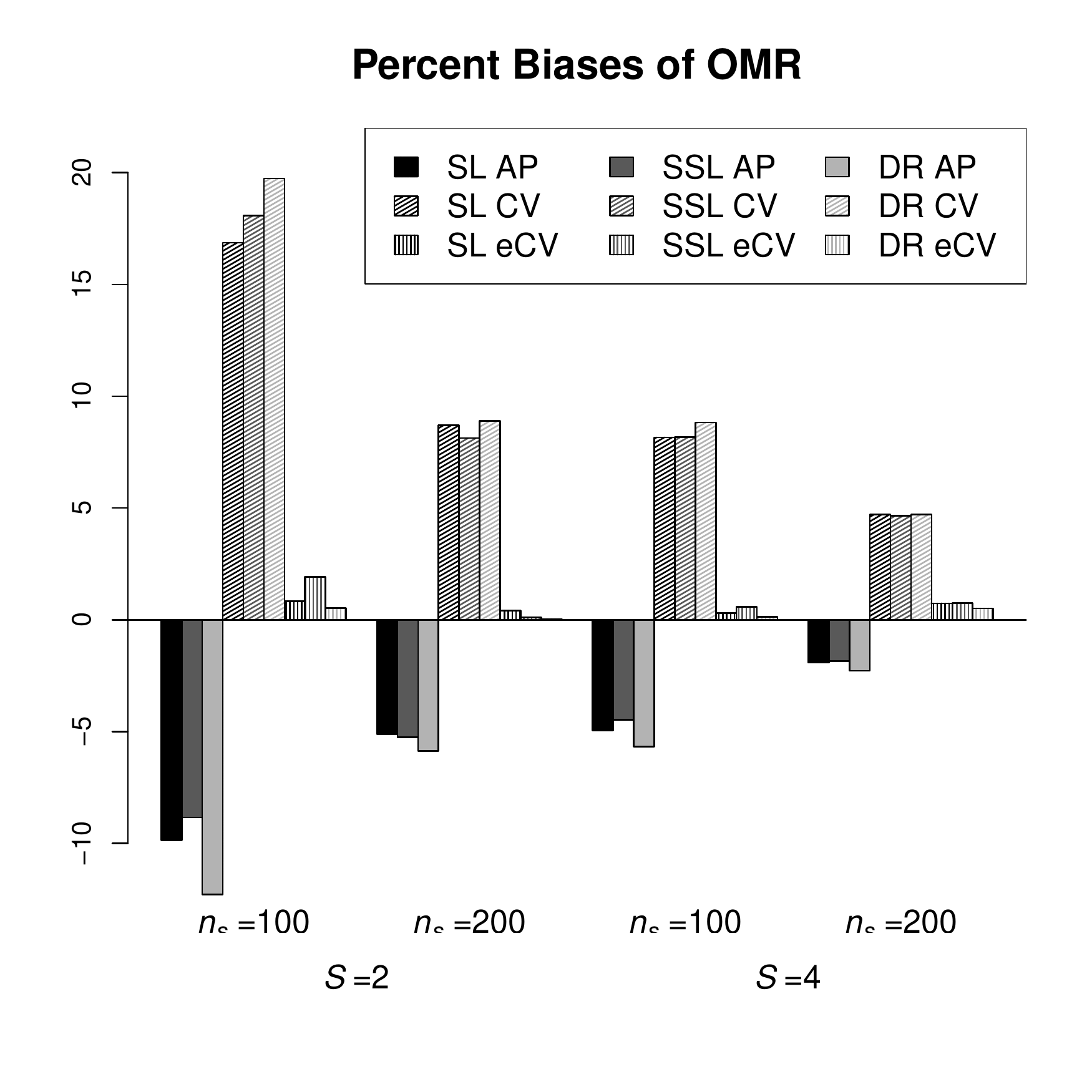}}
\end{minipage}
\caption*{(ii) ($\Msc_{\mbox{\tiny incorrect}}$, $\Isc_{\mbox{\tiny correct}}$).}
\begin{minipage}{1\textwidth}\centering
\mbox{
  \includegraphics[width=0.38\textwidth]{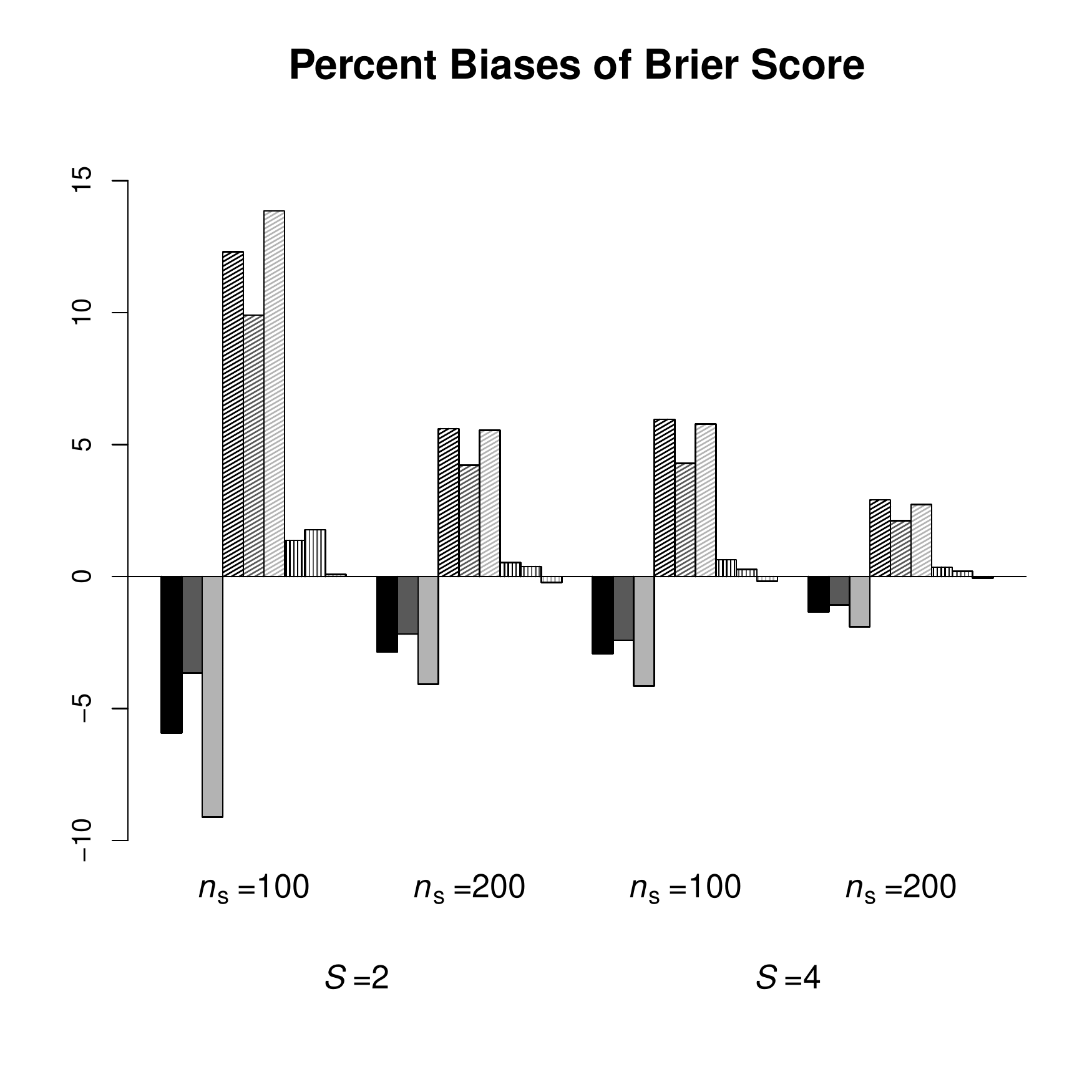}
  \includegraphics[width=0.38\textwidth]{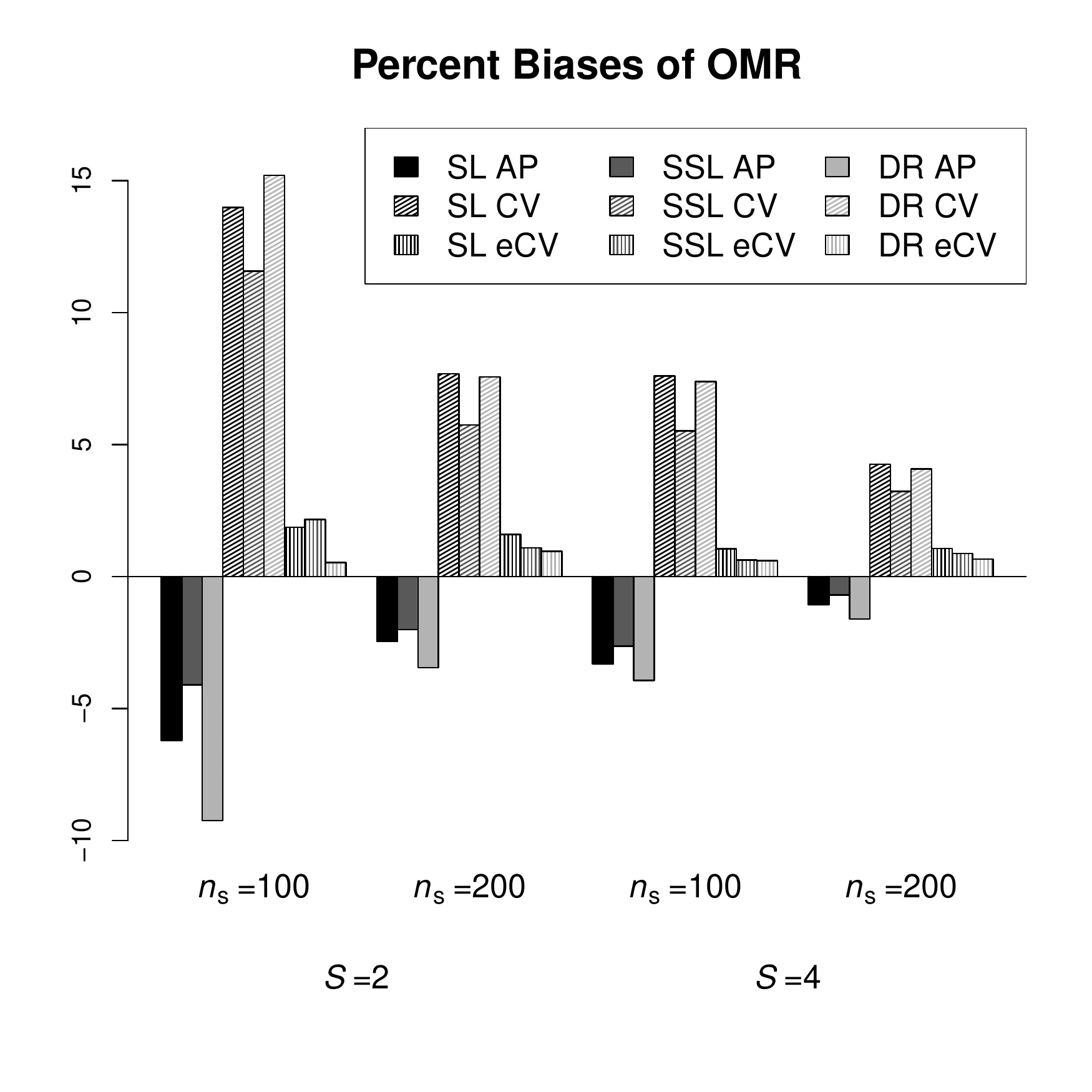}}
\end{minipage}
\caption*{(iii) ($\Msc_{\mbox{\tiny incorrect}}$, $\Isc_{\mbox{\tiny incorrect}}$).}
\begin{minipage}{1\textwidth}\centering
\mbox{
  \includegraphics[width=0.38\textwidth]{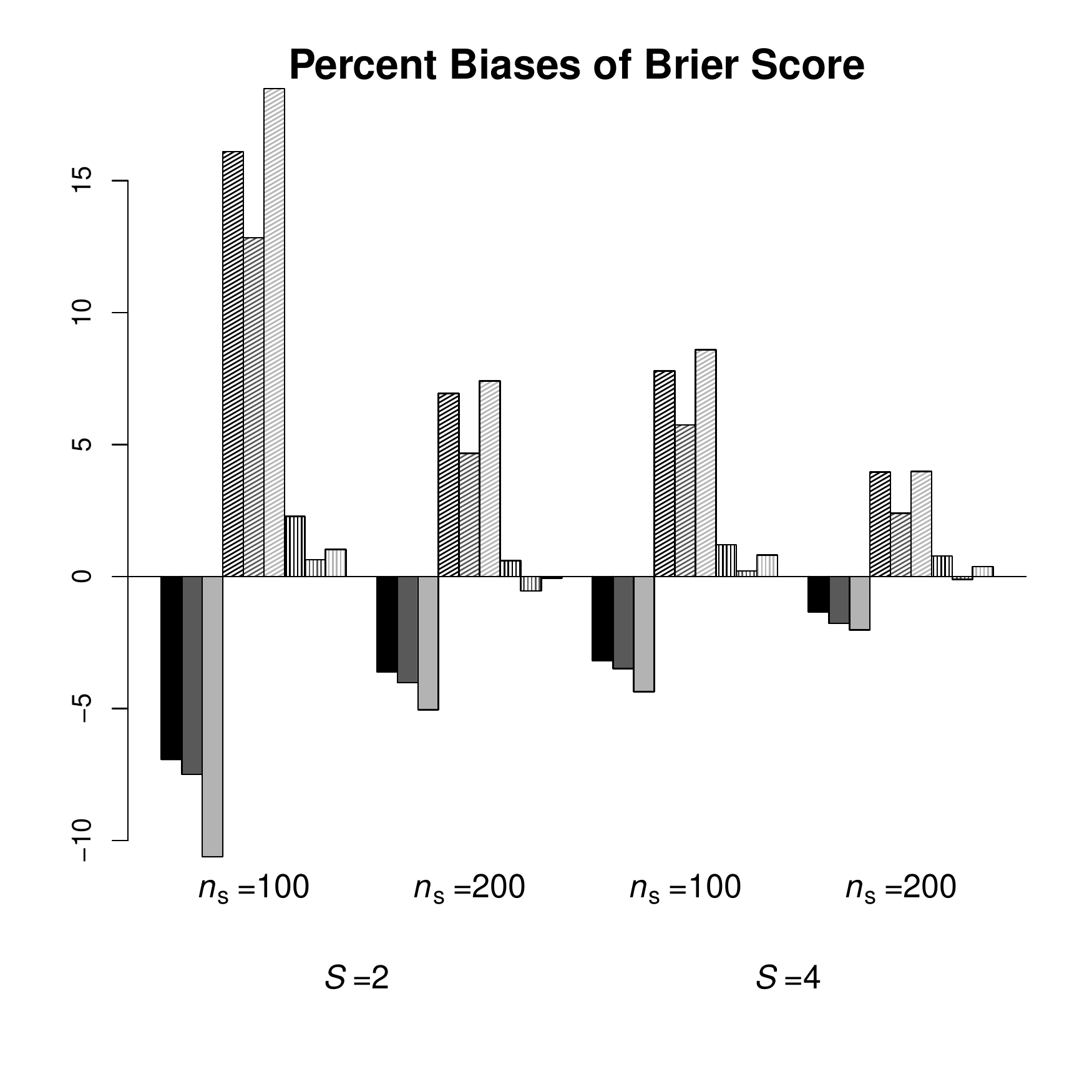}
  \includegraphics[width=0.38\textwidth]{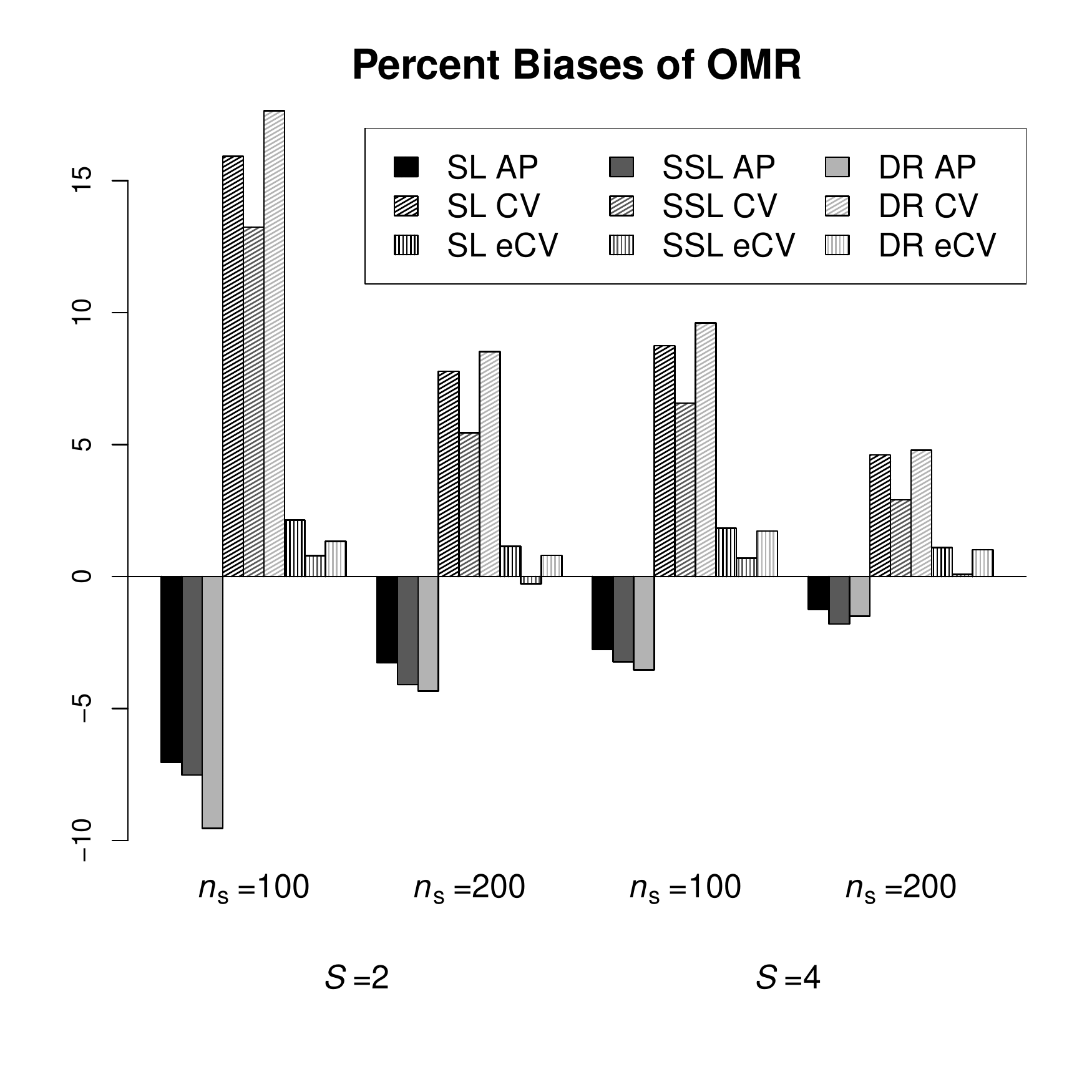}}
\end{minipage}
\end{figure}

\newpage

\begin{figure}[htpb!]
\centering
\caption{Relative efficiency (RE) of  $\Dhat\subSSL^{\omega}$ (SSL) and $\Dhat\subDR^{\omega}$ (DR) compared to $\Dhat\subSL^{\omega}$ for the Brier Score (BS) and OMR under (i) ($\Msc_{\mbox{\tiny correct}}$, $\Isc_{\mbox{\tiny correct}}$), (ii) ($\Msc_{\mbox{\tiny incorrect}}$, $\Isc_{\mbox{\tiny correct}}$); and (iii) ($\Msc_{\mbox{\tiny incorrect}}$, $\Isc_{\mbox{\tiny incorrect}}$) with $K=3$ folds for CV.}
\label{figure:RE-add}
  \centering
  \includegraphics[scale = 0.46]  {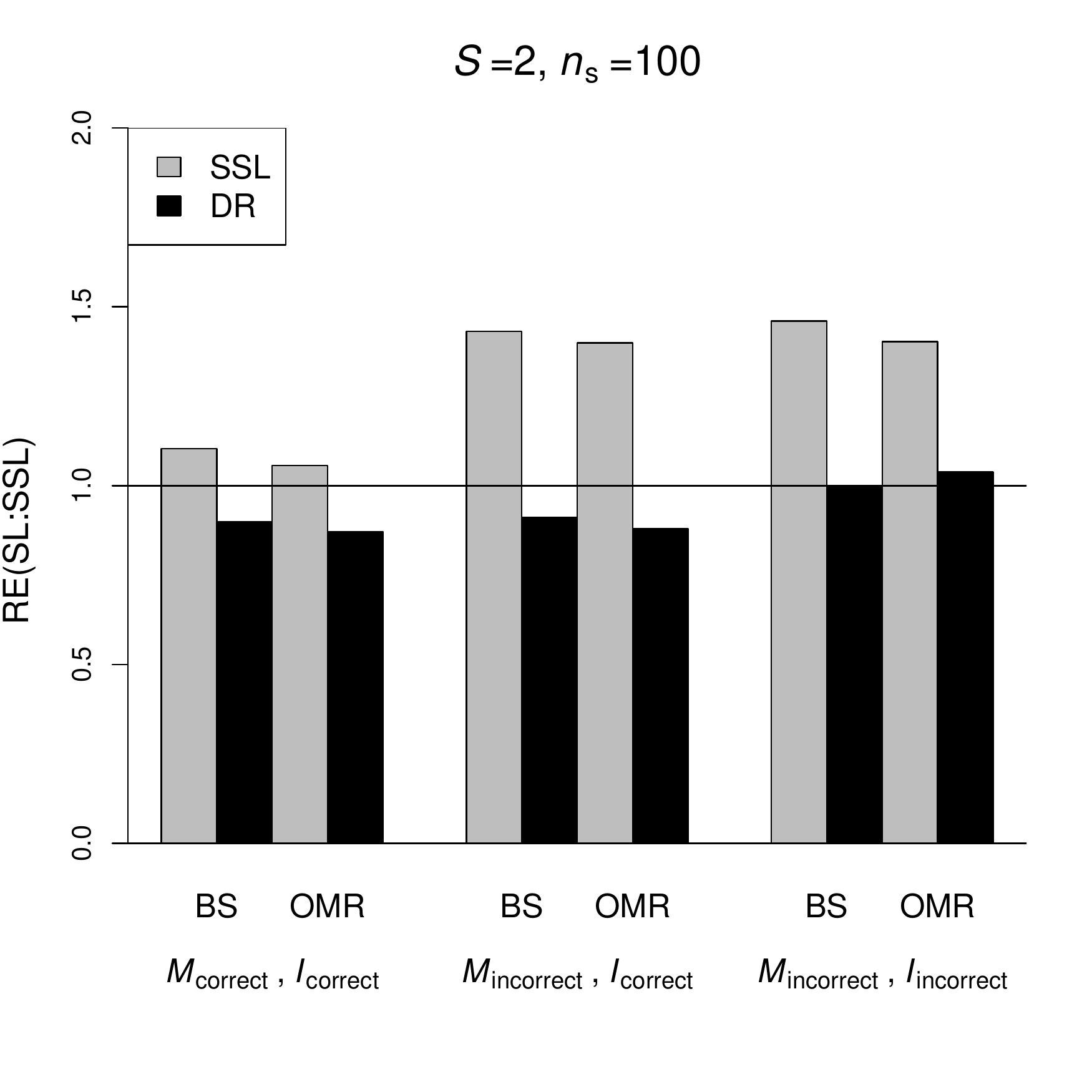}
  \includegraphics[scale = 0.46]  {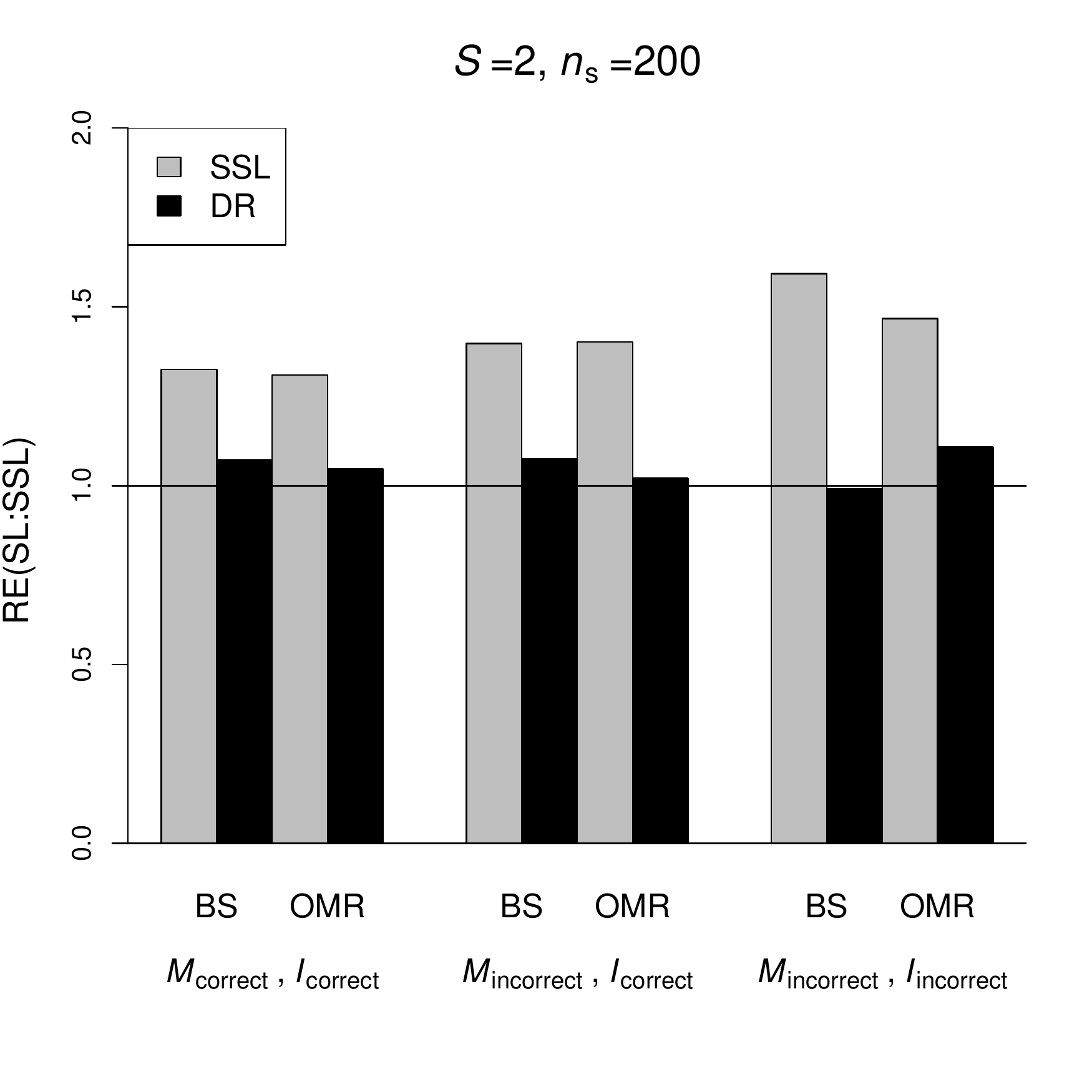}
    \includegraphics[scale = 0.46]  {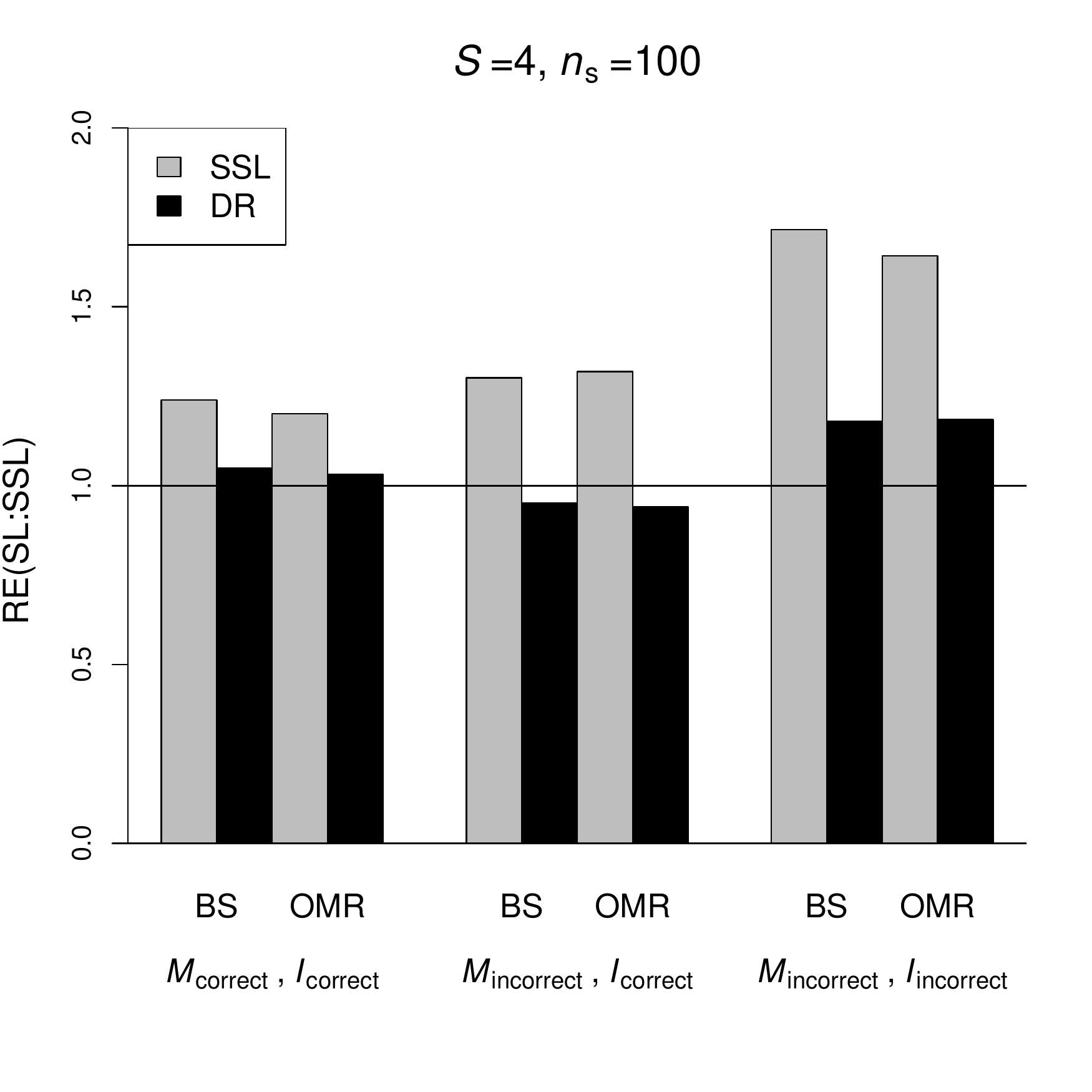}
  \includegraphics[scale = 0.46]  {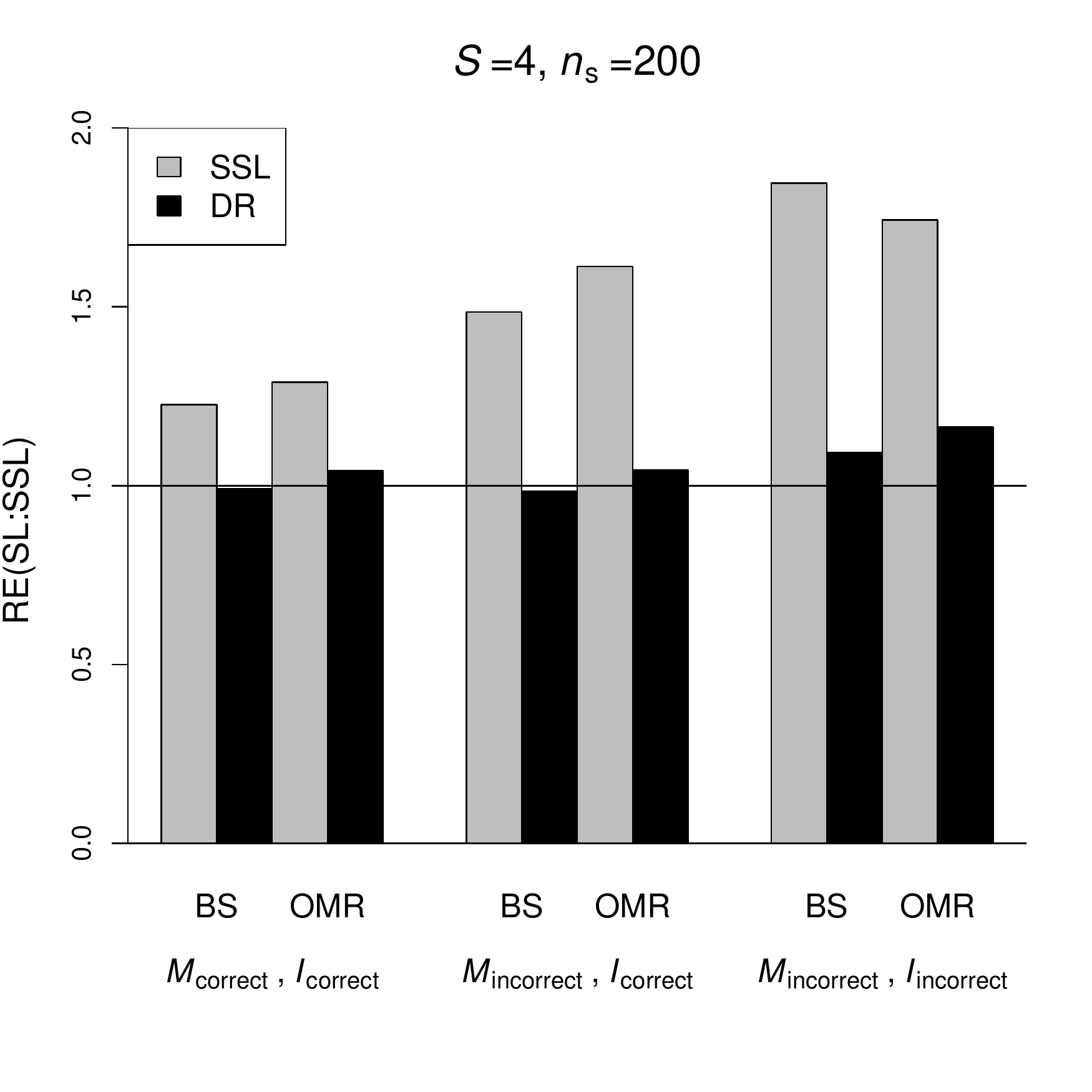}
 % \captionof*{figure}{ Relative Efficiency}
\end{figure}

\newpage

\section{Simulation Results: Intrinsic Efficient Estimators}\label{sec:sup:intri}
{\red 
The following numerical studies compare the intrinsic efficient SS estimators from \ref{sec:asym-intrinsic} and Appendix \ref{sec:app:intri:all} of the main paper with our primary SS proposals. Here, we consider two settings for data generation. In both settings, we let $S=2$, $p=3$, $n_s=200$, $(x_1,x_2,z)\trans\sim N(\bzero,\Cbb')$ where $\Cbb_{kl}' = (0.4)^{|k-l|}$, and $\Ssc \in \{1,2\}$ with $\Ssc = 1 + I(z\geq 1)$. We generate $y$ as 
\begin{enumerate}[(a)]
\item  $y=I\left(2x_1-2x_2+5\{I(\Ssc=1)-I(\Ssc=2)\}x_1x_2 +\epsilon\sublogit > 0\right).$

\item $y=I\left(\Ssc\{x_1-x_2\}+1.5x_1x_2 +\epsilon\sublogit > 0\right).$

\end{enumerate}
We choose the imputation basis $\bPhi$ as $(1,x_1,x_2,x_1x_2)\trans$ under both settings and set the tuning parameters for the ridge penalty to be $0$. For simplicity, we study and compare the apparent estimator of the accuracy measure from the two approaches. For the intrinsic efficient approach, we set the direction $\boldsymbol e$ as $(1,0,0)\trans$, $(0,1,0)\trans$ and $(0,0,1)\trans$ to estimate each element of $\btheta$ separately. Note that in the intrinsic efficient approach, the objective function for estimating $\bgamma$ is non-convex. We thus carefully design the optimization procedure for estimating $\bgamma$ to ensure training stability. Specifically, we use Newton's method initialized with the fitted imputation coefficients of our proposed SSL approach. In each iteration, our method monitors the descent of the squared loss and stops updating $\bgamma$ upon convergence up to a specified tolerance level. This approach performed well in the two settings considered as we found no evidence of instability due to the non-convexity across simulations.

Table \ref{tab:app:intri} presents the bias, ESE and RE (to the SL estimators) of the SL, proposed SSL and intrinsic efficient SSL approaches under Settings (a) and (b). %Under both settings, both the SSL approaches are more efficient than the SL approach, with all of their REs significantly larger than $1$. 
Under Setting (a), the intrinsic efficient estimators achieve improved efficiency when compared to our proposed estimators, with the average relative efficiency of the latter to the former being $1.35$ for $\btheta$, and $1.17$ for $D$. We observed that the imputation coefficients $\bgamma$ have different limiting values in the two approaches. As an example, $\bar\bgamma$ of the proposed SSL approach is $(0.1,0.6,-1.2,1.0)\trans$, while the limit of $\widehat\theta_{2,\intri}$ is $(-0.3,1.0,-2.7,4.3)\trans$. Consistent with the results of Section \ref{sec:asym-intrinsic}, these differences are expected to lead to improved efficiency of the intrinsic efficient approach. Under Setting (b), imputation parameters of the two approaches are similar, and $\bthetahat\subintri,D\subintri$ have very similar performance to the proposed SSL estimators. Here, the proposed estimators have slightly smaller ESE since the variances of its sample weights for estimating $\bgamma$ are smaller than those of the intrinsic efficient approach.

}

\begin{table}[!htbp]
\centering
 \caption{{\red Bias and empirical SE (ESE) of the SL, proposed SSL, and the intrinsic efficient (IE) SSL approaches in estimating $\btheta$ and $D$ under Settings (a) and (b) introduced in Section \ref{sec:sup:intri}. Shown also is the relative efficiency (RE) of the proposed and the intrinsic efficient SSL estimators to the SL estimator.}}
    \label{tab:app:intri}
      \centering
      \centerline{Setting (a)}
      \vspace{0.1in}
       \begin{tabular}{cc|rr|rrr|rrr}
       
       \hline
      & \multicolumn{1}{r}{} & \multicolumn{2}{|c|}{SL}& \multicolumn{3}{c|}{proposed SSL} & \multicolumn{3}{c}{IE SSL} \\ 
  \hline
 && Bias & ESE & Bias & ESE & RE & Bias & ESE & RE\\ 
  \hline
  \hline
& $\theta_0$ & 0.006 & 0.138 & 0.008 & 0.122 & 1.27 & 0.003 & 0.109 & 1.54 \\
& $\theta_1$ & 0.025 & 0.183 & 0.022 & 0.152 & 1.52 & 0.010 & 0.122 & 2.21 \\
& $\theta_2$ & 0.016 & 0.197 & 0.015 & 0.170 & 1.37 & 0.011 & 0.145 & 1.90 \\
\hline
  \hline
& Brier Score & 0.002 & 0.013 & 0.001 & 0.012 & 1.21 & 0.001 & 0.011  & 1.38 \\
& OMR & 0.002 & 0.028 & 0.001 & 0.025 & 1.29 & 0.001 & 0.022  & 1.55\\ 
  \hline
  \hline
\end{tabular}\vspace{0.3in}

      \centerline{Setting (b)}
      \vspace{0.1in}
       \begin{tabular}{cc|rr|rrr|rrr}
       
       \hline
      & \multicolumn{1}{r}{} & \multicolumn{2}{|c|}{SL}& \multicolumn{3}{c|}{proposed SSL} & \multicolumn{3}{c}{IE SSL} \\ 
  \hline
 && Bias & ESE & Bias & ESE & RE & Bias & ESE & RE\\ 
  \hline
  \hline
(iii) & $\theta_0$ & 0.000 & 0.148 & 0.001 & 0.134 & 1.20 & 0.001 & 0.134 & 1.20 \\
& $\theta_1$ & 0.023 & 0.187 & 0.018 & 0.161 & 1.34 & 0.015 & 0.164 & 1.31 \\
& $\theta_2$ & 0.022 & 0.187 & 0.018 & 0.167 & 1.28 & 0.015 & 0.169 & 1.24 \\
\hline
  \hline
& Brier Score & 0.001 & 0.013 & 0.002 & 0.012 & 1.08 & 0.001 & 0.012  & 1.07 \\
& OMR & 0.002 & 0.030 & 0.002 & 0.027 & 1.21 & 0.002 & 0.027  & 1.21\\ 
  \hline
  \hline
\end{tabular}

\end{table}

\newpage

\section{Simulation Results: Stratified Sampling}\label{sec:strata}

We conducted numerical studies to illustrate the advantage of stratified sampling relative to uniform random sampling as well as the importance of accounting for the sampling mechanism via weighted estimation. Mimicking our real example in Section \ref{sec-data}, we consider settings of outcome model misspecification where the risk of $y$ differs across the sampling groups. We again let $p=10$ and generate $\btheta$ and $\bx$ according to the mechanism described in Section \ref{sec-sim}. We let $S=2$ and simulate $\Ssc = 1 + I(x_1+x_2+\delta\geq 1.5)$ where $\delta \sim N(0,1)$. We consider two settings of outcome and imputation models:
\begin{enumerate}[(i)]
%\item[] \hspace{-.35in} ($\Msc_{\mbox{\tiny correct}}$, $\Isc_{\mbox{\tiny correct}}$): let $n_s=150$ or $n_s=300$ and $y = I( \btheta \trans\bx + \epsilon\sublogit > -3)$.
\item[] \hspace{-.35in} (I) ($\Msctilde_{\mbox{\tiny incorrect}}$, $\Isctilde_{\mbox{\tiny correct}}$):  $y = I[  \{0.8\mu_1(\bx)-5\}^{I(\Ssc=1)}\mu_1(\bx)^{I(\Ssc=2)} + \epsilon\sublogit > 1]$ with 
\[
\mu_1(\bx)=\btheta \trans  \bx+0.5(x_1x_2+x_1x_5-x_2x_6) \quad \mbox{and}\quad \bPhi=(1,\Sbb,\bx,\bv_2, \Sbb\text{:}\bx, \Sbb\text{:}\bv_2)\trans; %$n_s=150$ or $n_s=300$, 
\]
\item[] \hspace{-.35in} (II) ($\Msctilde_{\mbox{\tiny incorrect}}$, $\Isctilde_{\mbox{\tiny incorrect}}$): 
$y = I [ \{0.8\mu_2(\bx)-5\}^{I(\Ssc=1)}\mu_2(\bx)^{I(\Ssc=2)}+   e^{-2-3x_4-3x_6}\epsilon\subextreme > 1]$ with  %$n_s=100$ or $n_s=200$, 
$$\mu_2(\bx)=\btheta \trans  \bx  + x_1^2 + x_3^2 \quad \mbox{and}\quad \bPhi=(1,\Sbb,\bx,\bv_1)\trans.
$$
\end{enumerate}
{\red 
In both settings, we considered $n = 200$ or $400$, and $N = 20,000$. The stratification variable is generated so that $P(y = 1)$ is much lower in the stratum with $\Ssc=1$ compared to the stratum with $\Ssc=2$. To evaluate the efficiency gain offered by stratified sampling relative to uniform random sampling, we simulate two sampling strategies for the labeled data: (1) uniform random sampling of $n$ subjects, and (2) stratified sampling of $n/2$ subjects from each stratum.  Under (1), the labeled samples are from same population as the unlabeled samples, while under (2), the labeled samples consist of $n/2$ observations from each stratum and have a different distribution from the unlabeled samples, provided the proportion of $\Ssc=1$ is not $0.5$ in the original population, which is the case in both settings. 

With the data under stratified sampling, we obtain the SL and SSL estimators following the same procedures used in Section \ref{sec-sim}, which involves weighting the labeled samples by $\what_i$ according to the inverse of their sampling probability. We also include a naive unweighted SL estimator with $\what_i$ replaced by $1$ in the SL estimation procedures. With the data under the uniform random sampling scheme, we follow the same estimation procedures to obtain the SL and SSL estimators with the weights of labeled samples naturally set as $1$.  We compare (i) SSL and SL under uniform random and stratified random sampling to evaluate the value of the SSL approach and (ii) uniform random and stratified random sampling for both SSL and SL to evaluate the value of the sampling design.

%We compare the estimation efficiency of the SL or SSL estimators under the two sampling mechanisms as well as the estimation of the estimators 
%e focus our comparisons here on the impact of sampling strategies and weighting, but note that the relative performance of our SSL procedure compared to the SL and DR procedures under (I) and (II) has similar patterns as those under models (ii) and (iii) presented above (results not shown).

}

%Such stratified sampling mechanism is frequently used in the practical EHR research where the two stratum $\Ssc=1$ and $\Ssc=2$ represent the low risk population and high risk population respectively and are usually distinguished by some informative ICD code.
%Let $\bS=I(\Ssc=1)$ and $\bv_1$ and $\bv_2$ be defined as in Section \ref{sec-sim}. The imputation model for ($\Msctilde_{\mbox{\tiny incorrect}}$, $\Isctilde_{\mbox{\tiny incorrect}}$) is misspecified with $\bPhi=(1,\bx,\bv_1)\trans$.  The basis under ($\Msctilde_{\mbox{\tiny incorrect}}$, $\Isctilde_{\mbox{\tiny correct}}$) is $\bPhi=(1,\bS,\bx,\bv_2, \bS\text{:}\bx, \bS\text{:}\bv_2)\trans$ so that the imputation model is correctly specified. 
%We fix the number of folds for CV to $K=6$ for simplicity. 

{\red 
Table \ref{table:random:stratified} presents relative efficiency (RE) of the SSL versus the SL estimators (SSL vs SL) under either uniform random sampling ($\Ubb$) or stratified sampling ($\Sbb$); RE of the stratified sampling versus the uniform sampling ($\Sbb$ vs $\Ubb$) for either the SL or the SSL estimators. Under both random and stratified sample schemes, the SSL estimators are significantly more efficient than the SL estimators, with the corresponding RE larger than $2$ on nearly all parameters. The stratified sampling strategy is generally more efficient than uniform random sampling for both SL and SSL estimation under models (I) and (II), with the average of RE $>1.3$ under (I) and $>1.6$ under (II). These results coincide with our analysis presented in Section \ref{sec:thm:opt:all} regarding optimal allocation. Taking the supervised estimator of $D(\bthetabar)$ under ($\Msctilde_{\mbox{\tiny incorrect}}$, $\Isctilde_{\mbox{\tiny correct}}$) as an example, we have $S=2$ stratum with $\sigma_1^2 = 0.023$, $\sigma_2^2 = 0.142$, $\rho_1 = 0.69$, and $\rho_2 = 0.31$ (see Section \ref{sec:thm:opt:all} for definition of these quantities). Following the arguments of Section \ref{sec:thm:opt:all}, the optimal allocation of $n_s$ is proportional to $\rho_s\sigma_s$ which yields $n_1 = 0.47n$ and $n_2 =0.53n$.  This coincides with our choice of roughly equal allocation of $n$ across the two stratum. 

In Table \ref{tab:bias:uSL}, we report the bias associated with the unweighted naive estimator under stratified sampling and outcome model misspecification, and compare with the SL and SSL estimators that properly account for the sampling. Our results highlight that weighting is crucial to ensure the validity of the estimators for both the regression parameter and the accuracy parameters.

}

\begin{table}[!htpb]
    \caption{\red Relative efficiency with respect to mean squared error of the semi-supervised (SSL) estimators versus the SL estimators (SSL vs SL) under either uniform random sampling ($\Ubb$) or stratified sampling ($\Sbb$); relative efficiency of the stratified sampling versus the uniform sampling ($\Sbb$ vs $\Ubb$) for either the SL estimators or the SSL estimators under $(\Msctilde_{\mbox{\tiny incorrect}}$, $\Isctilde_{\mbox{\tiny correct}})$ and $(\Msctilde_{\mbox{\tiny incorrect}}$, $\Isctilde_{\mbox{\tiny incorrect}})$, with $n = 200$ or $400$.}
    \label{table:random:stratified}
    \small
      \centering
      \centerline{$n=200$.}
      \vspace{0.1in}
       \begin{tabular}{c|cc|cc|cc|cc}
\hline
 \multicolumn{1}{r}{} & \multicolumn{4}{|c|}{($\Msctilde_{\mbox{\tiny incorrect}}$, $\Isctilde_{\mbox{\tiny correct}}$)} & \multicolumn{4}{c}{($\Msctilde_{\mbox{\tiny incorrect}}$, $\Isctilde_{\mbox{\tiny incorrect}}$)} \\  \cline{2-9}
&  \multicolumn{2}{c|}{SSL vs SL} & \multicolumn{2}{c|}{$\Sbb$ vs $\Ubb$} & \multicolumn{2}{c|}{SSL vs SL} & \multicolumn{2}{c}{$\Sbb$ vs $\Ubb$} \\   \hline
& $\Ubb$ & $\Sbb$ & SL & SSL & $\Ubb$ & $\Sbb$ & SL & SSL\\  \hline  \hline
$\theta_0$ & 5.52 & 3.66 &1.64 &1.09 &9.18  &8.81 &1.71 &1.64 \\
$\theta_1$ & 4.70 & 4.43 &1.31 &1.23 &9.27 &11.05 &1.39 &1.66 \\
$\theta_2$ & 4.13 & 4.04 &1.32 &1.29 &7.45  &7.97 &1.68 &1.80 \\
$\theta_3$ & 5.49 & 4.32 &1.65 &1.30 &6.74  &6.55 &1.58 &1.54  \\
$\theta_4$ & 4.52 & 4.97 &1.20 &1.32 &4.78  &5.48 &1.32 &1.51 \\
$\theta_5$ & 3.94 & 5.07 &1.43 &1.84 &6.76  &9.02 &1.11 &1.48 \\
$\theta_6$ & 4.15 & 4.53 &1.43 &1.56 &6.24  &8.44 &1.08 &1.47 \\
$\theta_7$ & 3.99 & 4.00 &1.58 &1.59 &4.14  &5.65 &1.11 &1.52 \\
$\theta_8$ & 4.60 & 4.92 &1.33 &1.42 &5.01  &6.01 &1.46 &1.75 \\
$\theta_9$ & 3.99 & 4.61 &1.26 &1.45 &5.34  &6.57 &1.46 &1.80 \\
$\theta_{10}$ & 3.71 & 4.69 &1.30 &1.64 &4.44  &5.96 &1.41 &1.89 \\ \hline\hline
Brier score & 1.95 &2.35 &1.29 &1.56 &3.31 &3.51 &1.42 &1.51 \\
OMR & 1.79 &1.91 &1.34 &1.44 &3.24 &3.27 &1.46 &1.48 \\ \hline\hline
\end{tabular}\vspace{0.3in}

      \centerline{$n=400$.}
      \vspace{0.1in}
       \begin{tabular}{c|cc|cc|cc|cc}
\hline
 \multicolumn{1}{r}{} & \multicolumn{4}{|c|}{($\Msctilde_{\mbox{\tiny incorrect}}$, $\Isctilde_{\mbox{\tiny correct}}$)} & \multicolumn{4}{c}{($\Msctilde_{\mbox{\tiny incorrect}}$, $\Isctilde_{\mbox{\tiny incorrect}}$)} \\  \cline{2-9}
&  \multicolumn{2}{c|}{SSL vs SL} & \multicolumn{2}{c|}{$\Sbb$ vs $\Ubb$} & \multicolumn{2}{c|}{SSL vs SL} & \multicolumn{2}{c}{$\Sbb$ vs $\Ubb$} \\   \hline
& $\Ubb$ & $\Sbb$ & SL & SSL & $\Ubb$ & $\Sbb$ & SL & SSL\\  \hline  \hline
$\theta_0$ &2.29 &1.74 &1.55 &1.18 & 4.72 &4.69 &1.60 &1.59 \\
$\theta_1$ &2.20 &1.80 &1.35 &1.10 & 5.00 &5.64 &1.26 &1.42 \\
$\theta_2$ &1.83 &2.05 &1.21 &1.36 & 5.35 &5.91 &1.45 &1.61 \\
$\theta_3$ &2.41 &3.09 &1.16 &1.48 & 4.77 &6.88 &1.18 &1.70  \\
$\theta_4$ &2.07 &3.06 &1.02 &1.51 & 3.46 &5.29 &0.98 &1.50 \\
$\theta_5$ &2.32 &2.58 &1.17 &1.30 & 4.53 &6.29 &0.93 &1.29 \\
$\theta_6$ &2.71 &2.92 &1.37 &1.47 & 4.66 &6.38 &1.07 &1.47 \\
$\theta_7$ &2.44 &2.78 &1.49 &1.69 & 4.08 &6.11 &1.12 &1.67 \\
$\theta_8$ &2.43 &2.86 &1.59 &1.87 & 4.01 &5.82 &1.32 &1.91 \\
$\theta_9$ &2.49 &2.91 &1.22 &1.42 & 4.08 &6.49 &1.10 &1.75 \\
$\theta_{10}$ &2.36 &3.12 &1.22 &1.61 & 3.61 &5.48 &1.41 &2.14 \\ \hline\hline
Brier score &2.00 &2.13 &1.37 &1.45 & 4.05 &4.60 &1.39 &1.58 \\
OMR &1.56 &1.55 &1.39 &1.38 & 3.58 &3.99 &1.38 &1.54 \\ \hline\hline
\end{tabular}
%(u)SL/SSL: RE of SL to SSL under uniform random sampling 

%(s)SL/SSL: RE of SL to SSL under stratified sampling sampling 

%SL(u/s): RE of SL under uniform random sampling to SL under stratified sampling

%SSL(u/s): RE of SSL under uniform random sampling to SSL under stratified sampling
\end{table}

\newpage

\begin{table}[!htpb]
\caption{{\red Bias of the SL, SSL and unweighted SL estimators obtained under stratified sampling ($\Sbb$) for models (I) ($\Msctilde_{\mbox{\tiny incorrect}}$, $\Isctilde_{\mbox{\tiny correct}}$) and (II) ($\Msctilde_{\mbox{\tiny incorrect}}$, $\Isctilde_{\mbox{\tiny incorrect}}$) when $n=400$.}}
\label{tab:bias:uSL}
\centering
\begin{tabular}{c|rrr|rrr}
       \hline
 \multicolumn{1}{r}{} & \multicolumn{3}{|c|}{($\Msctilde_{\mbox{\tiny incorrect}}$, $\Isctilde_{\mbox{\tiny correct}}$)} & \multicolumn{3}{c}{($\Msctilde_{\mbox{\tiny incorrect}}$, $\Isctilde_{\mbox{\tiny incorrect}}$)} \\ 
  \cline{2-7}
& SL & SSL & uSL  & SL & SSL & uSL  \\  \hline   
$\theta_0$ &0.28 &0.23 &0.33 & 0.29 &0.04 &0.23 \\
$\theta_1$ &0.14 &0.12 &0.01 & 0.14 &0.03 &0.13 \\
$\theta_2$ &0.16 &0.12 &0.04 & 0.10 &0.01 &0.03 \\
$\theta_3$ &0.03 &0.00 &0.06 & 0.00 &0.01 &0.01 \\
$\theta_4$ &0.03 &0.01 &0.06 & 0.01 &0.01 &0.02 \\
$\theta_5$ &0.02 &0.02 &0.06 & 0.07 &0.01 &0.20  \\
$\theta_6$ &0.03 &0.02 &0.07 & 0.07 &0.01 &0.21  \\
$\theta_7$ &0.00 &0.00 &0.00 & 0.01 &0.01 &0.01 \\
$\theta_8$ &0.01 &0.00 &0.01 & 0.00 &0.00 &0.00 \\
$\theta_9$ &0.01 &0.00 &0.01 & 0.01 &0.01 &0.01   \\
$\theta_{10}$ &0.00 &0.00 &0.00 & 0.00 &0.00 &0.00\\ \hline\hline
Brier Score &0.002 &0.002 &0.025 & 0.002 &0.001 &0.025  \\
OMR &0.003 &0.002 &0.036 & 0.002 &0.002 &0.036 \\ \hline\hline
\end{tabular}

\end{table}

\newpage
\section{Theoretical Analysis: Density Ratio Estimator}\label{sec:method-DR}
{\red

In this section, we introduce and study density ratio (DR) estimator, an alternative SSL approach proposed by \cite{kawakita2013semi,kawakita2014safe} that conceptually achieves a similar ``safe and efficient" property as the proposed SSL estimator stated in Theorem \ref{thm:1} and Theorem \ref{thm:2}. We compare our method with the DR method from a theoretical perspective to complement our numerical results in Sections \ref{sec-sim} and \ref{sec-data} of the main text.

\subsection{Outline of the DR approach}

Let $\bvarphi=\bvarphi(\bu)$ be some basis functions of $\bu$. To efficiently estimate $\btheta$ using the unlabelled data, one can first solve the DR estimating equation
\[
\frac{1}{N}\sum_{i = 1}^N \what_i \bvarphi_i \exp(\balpha\trans\bvarphi_i)-\frac{1}{N}\sum_{i = 1}^N\bvarphi_i + \lambda_n\supthree \balpha = \bzero,
\]
to obtain $\balphatilde$, where $\lambda_n\supthree=o(n^{-\frac{1}{2}})$ is a tuning parameter for training stability. Next, one can solve the equation for $\btheta$ weighted by the estimated DR:
\[
\frac{1}{N}\sum_{i = 1}^N \exp(\balphatilde\trans\bvarphi_i) \what_i \bx_i \{ y_i -g(\btheta \trans\bx_i) \} = \bzero,
\]
to obtain the DR estimator of the regression paramere, denoted as $\bthetahat\subDR$. Similarly, the DR estimator for $\Dbar$ can be obtained by $\Dhat\subDR=\Dhat\subDR(\bthetahat\subDR)$ where 
$$\Dhat\subDR(\btheta)=\Ninv\sum_{i = 1}^N \exp(\balphatilde\trans\bvarphi_i)\what_i d\{y_i, \yscr(\btheta\trans \bx_i)\}.$$

Note that ${\rm E}[\omega_if(\bu_i)]={\rm E}[f(\bu_i)]$ for any measurable function $f(\cdot)$ so that $\balphatilde\overset{p}{\rightarrow}\bzero$ under mild regularity conditions (see our justification of Proposition \ref{prop:1} for details). At first glance, it seems counter-intuitive that weighting samples with $\exp(\balphatilde\trans\bvarphi_i)$, which converges to $1$, can improve the estimation efficiency. However, this approach can be viewed as a variance reduction procedure via projection \citep{kawakita2013semi,kawakita2014safe}.

Motivated by \cite{kawakita2014safe}, we  next present the asymptotic properties of the DR estimators in Proposition \ref{prop:1} and provide formal justification in Section \ref{sec:asym-DR}. For any random vectors $\bb,\ba$, let $\proj(\bb\mid \ba)$ represent the linear projector of $\bb$ onto the space spanned $\ba$ (on the population of sampling $\bF_i$), i.e. $\proj(\bb\mid \ba)={\rm E}(\bb\ba\trans)({\rm E}\ba^{\otimes 2})^{-1}\ba$, and $\proj\supvert(\bb\mid \ba)=\bb-\proj(\bb\mid \ba)$.
\begin{prop}
Under Conditions \ref{cond:1}, \ref{cond:3}, \ref{cond:2}(C) and the assumption that ${\rm E}(\bvarphi^{\otimes 2})\succ 0$, we have $\widehat \btheta\subDR \overset{p}{\to} \bthetabar$,  $\widehat D\subDR \overset{p}{\to} \Dbar$, and
\begin{align*}
\nhalf(\widehat \btheta\subDR - \bthetabar) =& \nhalf\sum_{s=1}^S \rho_s \left\{ n_s^{-1} \sum_{i=1}^N V_iI(\Ssc_i = s) \proj\supvert(\be_{\SL i}\mid\bvarphi_i) \right\} + o_p(1);\\
\nhalf(\Dhat\subDR-\Dbar)=&\nhalf  \sum_{s= 1}^S \rho_s \left[\ninv_s \sum_{i = 1}^N V_i I(\Ssc_i = s) \proj\supvert\left(d(y_i, \Yscbar_i) - D(\bthetabar) + \dot{\bD}(\bthetabar)\trans \be_{\SL i}  \mid\bvarphi_i\right)\right]+o_p(1).
\end{align*}
In addition, $\nhalf(\widehat \btheta\subDR - \bthetabar)$ converges weakly to $N(\bzero,\bSigma\subDR)$ where
\[
\bSigma\subDR=\sum_{s=1}^S \rho_s^2 \rho_{1s}^{-1} {\rm E}\left[\{\proj\supvert(\be_{\SL i}\mid\bvarphi_i) \}^{\otimes 2} \mid \Ssc_i = s\right],
\]
and $\nhalf(\Dhat\subDR-\Dbar)$ converges weakly to $N(0,\sigma^2\subDR)$ with $\sigma^2\subDR$ defined in Section \ref{sec:asym-DR}.

%In addition, the asymptotic variances of $\nhalf(\widehat \btheta\subDR - \bthetabar)$ and $\nhalf(\Dhat\subDR-\Dbar)$ are always less than or equal to those of $\nhalf(\widehat \btheta\subSL - \bthetabar)$ and $\nhalf\{\widehat D\subSL(\bthetahat\subSL) - \Dbar\}$ respectively.
\label{prop:1}
\end{prop}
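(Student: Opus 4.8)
The plan is to mirror the three-part structure of Appendices \ref{sec: asym-thetaSL}--\ref{sec: asym-DSSL}: first pin down the limiting behaviour of the density-ratio tuning parameter $\balphatilde$, then propagate it through the estimating equations for $\btheta$ and for $\Dbar$, and finally recognize that the net effect of the density-ratio weighting is to replace each supervised influence summand by its projection orthogonal to the span of $\bvarphi$. Throughout I would invoke the stratified inverse-probability identity (\ref{eq-IPW}), which lets me treat $N^{-1}\sum_i\what_i f(\bF_i)$ as $\sum_s\rho_s\{\ninv_s\sum_i V_iI(\Ssc_i=s)f(\bF_i)\}+o_p(n^{-1/2})$ despite $\what_i\to\infty$, together with the exact bookkeeping identity $\sum_i\what_i=N$. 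For $\balphatilde$, I would first note that its estimating function at $\balpha=\bzero$ equals $N^{-1}\sum_i\what_i\bvarphi_i-N^{-1}\sum_i\bvarphi_i$; by (\ref{eq-IPW}) and the law of large numbers both averages converge to ${\rm E}(\bvarphi)$, so the population equation ${\rm E}[\bvarphi\exp(\balpha\trans\bvarphi)]-{\rm E}(\bvarphi)=\bzero$ is solved uniquely at $\balpha=\bzero$, its Jacobian ${\rm E}[\bvarphi^{\otimes2}\exp(\balpha\trans\bvarphi)]$ being everywhere positive definite under ${\rm E}(\bvarphi^{\otimes2})\succ\bzero$. Standard Z-estimation arguments with $\lambda_n\supthree=o(n^{-1/2})$ then give $\balphatilde\overset{p}\to\bzero$ and $\nhalf\balphatilde=-[{\rm E}(\bvarphi^{\otimes2})]^{-1}\nhalf(N^{-1}\sum_i\what_i\bvarphi_i-{\rm E}\bvarphi)+o_p(1)$, where the full-data average $N^{-1}\sum_i\bvarphi_i$ contributes only $O_p(\sqrt{n/N})=o_p(1)$ after scaling.

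Next I would expand $\bthetahat\subDR$. A joint Taylor expansion of its weighted estimating equation about $(\bthetabar,\bzero)$ gives $\nhalf(\bthetahat\subDR-\bthetabar)=\bA^{-1}\nhalf N^{-1}\sum_i\what_i\bx_i\{y_i-g(\bthetabar\trans\bx_i)\}+\bA^{-1}{\rm E}[\bx\{y-g(\bthetabar\trans\bx)\}\bvarphi\trans]\,\nhalf\balphatilde+o_p(1)$, since the $\btheta$-derivative is $-\bA$ and the $\balpha$-derivative is ${\rm E}[\bx\{y-g\}\bvarphi\trans]$. Substituting the expansion of $\balphatilde$ and writing $\be_{\SL i}=\bA^{-1}\bx_i\{y_i-g(\bthetabar\trans\bx_i)\}$, the two pieces combine into $\nhalf N^{-1}\sum_i\what_i\{\be_{\SL i}-{\rm E}(\be_{\SL}\bvarphi\trans)({\rm E}\bvarphi^{\otimes2})^{-1}\bvarphi_i\}=\nhalf N^{-1}\sum_i\what_i\,\proj\supvert(\be_{\SL i}\mid\bvarphi_i)$. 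Because $\bvarphi$ contains a constant component and $\sum_i\what_i=N$, one has ${\rm E}[\proj\supvert(\be_{\SL}\mid\bvarphi)]=\bzero$, so the deterministic $\nhalf$-scaled mean terms cancel identically; applying (\ref{eq-IPW}) yields the stated stratified representation. Consistency $\bthetahat\subDR\overset{p}\to\bthetabar$ follows from uniform convergence and identifiability exactly as in Appendix \ref{sec: asym-thetaSSL}.

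For $\Dhat\subDR$ I would decompose $\nhalf(\Dhat\subDR(\bthetahat\subDR)-\Dbar)=\nhalf\{\Dhat\subDR(\bthetahat\subDR)-D(\bthetahat\subDR)\}+\nhalf\{D(\bthetahat\subDR)-D(\bthetabar)\}$. The second term is $\dot\bD(\bthetabar)\trans\nhalf(\bthetahat\subDR-\bthetabar)+o_p(1)$ by differentiability of $D$ at $\bthetabar$ (guaranteed by Condition \ref{cond:1}, even for the non-smooth OMR loss). For the first term, as in Appendices \ref{sec: asym-DSL} and \ref{sec: asym-DSSL} I would show that the classes $\{I(\Ssc=s)\,d(y,\yscr(\btheta\trans\bx)):\|\btheta-\bthetabar\|_2<\delta\}$ are Donsker -- using the Vapnik--Chervonenkis argument for the OMR indicator -- so that $\bthetahat\subDR$ may be replaced by $\bthetabar$ up to $o_p(1)$, reducing matters to $\nhalf\{\Dhat\subDR(\bthetabar)-D(\bthetabar)\}$. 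Expanding the latter in $\balphatilde$, substituting its expansion, and adding the $\dot\bD(\bthetabar)\trans$ contribution, I would collect all coefficients of the $\bvarphi$-fluctuation and recognize the supervised summand $\psi_i=d(y_i,\Yscbar_i)-D(\bthetabar)+\dot\bD(\bthetabar)\trans\be_{\SL i}$, i.e.\ the influence summand of $\Tschat\subSL$ from Appendix \ref{sec: asym-DSL}. The same constant-in-$\bvarphi$ cancellation then gives $\nhalf(\Dhat\subDR-\Dbar)=\nhalf N^{-1}\sum_i\what_i\,\proj\supvert(\psi_i\mid\bvarphi_i)+o_p(1)$, which is the claimed orthogonalization of the supervised influence function.

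Finally, with both expansions in the canonical stratified IPW form, the zero-mean Gaussian limits and the variances $\bSigma\subDR$ and $\sigma^2\subDR$ follow from the stratum-wise classical Central Limit Theorem exactly as in Appendix \ref{sec: asym-thetaSL}, each within-stratum average contributing a factor $\rho_s^2\rho_{1s}^{-1}$ times the conditional second moment of the corresponding orthogonalized summand. The main obstacle I anticipate is the simultaneous handling of the ill-behaved weights $\what_i\to\infty$ and the estimated nuisance $\balphatilde$ while correctly identifying the projection structure: in particular, verifying that the formally diverging deterministic $\nhalf$-scaled expectation terms cancel -- which hinges precisely on $\bvarphi$ containing a constant together with $\sum_i\what_i=N$ -- and carrying the non-smooth OMR loss through the nuisance-parameter expansion via empirical-process equicontinuity rather than pointwise differentiation.
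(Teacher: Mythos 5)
Your proposal is correct and follows essentially the same route as the paper's proof in Section S6.2: the expansion of $\balphatilde$ around $\bzero$ with Jacobian ${\rm E}(\bvarphi^{\otimes2})$, linearization of the weight $\exp(\balphatilde\trans\bvarphi_i)$ in the estimating equations, identification of the resulting correction as the projection coefficient ${\rm E}(\be_{\SL}\bvarphi\trans)({\rm E}\bvarphi^{\otimes2})^{-1}$, the zero-mean cancellation of the leftover term via the constant component of $\bvarphi$ and ${\rm E}(\be_{\SL})=\bzero$, and the Donsker/equicontinuity plus delta-method treatment of the non-smooth OMR loss followed by the stratified CLT. The only cosmetic difference is that you replace $N^{-1}\sum_i\bvarphi_i$ by ${\rm E}\bvarphi$ at the outset (the paper retains the full-data average and discards it as $O_p(N^{-1/2})=o_p(n^{-1/2})$ at the end), which changes nothing substantive.
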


Using these results, we comment on the DR estimators and compare them with our imputation-based SSL estimators in the following remarks.
\begin{remark}
Let the basis be formed as $\bvarphi_i=(I(\Ssc_i=1)\bvarphi_{1i}\trans,\ldots,I(\Ssc_i=S)\bvarphi_{Si}\trans)\trans$, where $\bvarphi_{si}$ is some basis function of $\bu$. Then for any measurable function $\bbf(\cdot)$,
\[
{\rm E}\left[\{\proj\supvert(\bbf(\bu_i)\mid\bvarphi_i) \}^{\otimes 2} \mid \Ssc_i = s\right]={\rm E}\left[\{\proj\supvert_s(\bbf(\bu_i)\mid\bvarphi_{si}) \}^{\otimes 2} \mid \Ssc_i = s\right],
\]
where $\proj\supvert_{s}(\ba\mid\bb)=\ba-\proj_{s}(\ba\mid\bb)$ and $\proj_{s}$ represents the linear projection operator on the population from strata $s$. This result, combined with Proposition \ref{prop:1}, implies that $\bSigma\subSL\succeq\bSigma\subDR$ and $\sigma\subSL^2\succeq\sigma\subDR^2$, i.e.\ the DR estimators have smaller or equal asymptotic variance than the SL estimators. A similar efficiency dominance property is achieved by our intrinsic efficient SSL estimators as discussed in Section \ref{sec:asym-intrinsic}, but is not readily achieved by the proposed SSL estimators when the imputation model is misspecified.
\label{rem:5.4.1}
\end{remark}

\begin{remark}
When the imputation model in our method is correctly specified, and $\be_{\SL i}$ and $d(y_i, \Yscbar_i) - D(\bthetabar)$ are in the linear space of $\bvarphi_i$, our SSL estimators and the DR estimators will be asymptotically equivalent, and both semiparametric efficient. Under model misspecification, one could not make a simple comparison on the asymptotic efficiency between our method and DR, and neither of them would always be able to dominate the other.
\label{rem:5.4.2}
\end{remark}

Though the DR estimators essentially achieve similar theoretical efficiency properties as our proposed estimators, we do find that with the same or similar choices on their basis function, our method performs much better than DR in all of our numerical studies (see Sections \ref{sec-sim}, \ref{sec-data} and the Supplement Material for details). In addition, we conducted simulation studies (results not shown here) to evaluate the DR estimator specified as in Remark \ref{rem:5.4.1}, with a larger basis than the DR estimator used in our main simulation studies. Although this specification guarantees variance reduction compared to SL theoretically, we find that its finite sample performance is slightly worse than the specification of the DR estimator used in our numerical studies.

The poor finite sample performance of DR is due to the fact that our imputation procedures involve $y$ and automatically select the desirable subset or direction of the basis $\bPhi$ for characterizing the conditional mean of $y$.  The DR method, on the other hand, projects the influence functions to the whole $\bvarphi$ directly without screening or distilling the basis with $y$, resulting in excessive overfitting in finite sample. This issue is pronounced for both methods when the basis functions $\bPhi$ and $\bvarphi$ are of high dimensionality. However, in this scenario, one could use sparse regression approaches such as lasso or adaptive lasso \citep{tibshirani1996regression,zou2006adaptive} to estimate the imputation model in our method and avoid overfitting. In contrast, one cannot impose sparse regularization when using DR since $y$ is not involved in constructing $\balphatilde$. As a consequence, the high dimensional $\bvarphi$ cannot be accommodated by the DR method in its standard form.

\subsection{Asymptotic Properties of $\bthetahat\subDR$ and $\Dhat\subDR$}\label{sec:asym-DR}
We first justify the consistency and provide the asymptotic expansion for the fitted density ratio coefficients $\balphatilde$. Note that
\[
{\rm E}\left[\frac{1}{N}\sum_{i = 1}^N w_i \bvarphi_i \exp(\bzero\trans\bvarphi_i)-\frac{1}{N}\sum_{i = 1}^N\bvarphi_i\right]= \bzero,
\]
and under the regularity Condition \ref{cond:1} and with ${\rm E}(\bvarphi^{\otimes2})\succ0$, we can follow a similar procedure as in Appendix \ref{sec: asym-thetaSSL} (i.e. using \cite{newey1994large,van2000asymptotic}) to show that $\balphatilde\overset{p}{\to}\bzero$ and 
\begin{equation}
\nhalf\balphatilde=\nhalf({\rm E}\bvarphi^{\otimes2})^{-1}\left(-\frac{1}{N}\sum_{i = 1}^N \what_i \bvarphi_i+\frac{1}{N}\sum_{i = 1}^N\bvarphi_i\right)+o_p(1)=O_p(1).
\label{equ:h:1}
\end{equation}
This result, combined with Condition \ref{cond:1}, implies that $\sup_{i}|\exp(\balphatilde\trans\bvarphi_i)-0|\overset{p}{\to}0$ and thus
\[
\frac{1}{N}\sum_{i = 1}^N \exp(\balphatilde\trans\bvarphi_i) \what_i \bx_i \{ y_i -g(\btheta \trans\bx_i) \}-\frac{1}{N}\sum_{i = 1}^N\what_i \bx_i \{ y_i -g(\btheta \trans\bx_i) \}=o_p(1),
\]
uniformly for $\btheta\in\bTheta$. Similar to Appendix \ref{sec: asym-thetaSL}, we can (again using \cite{newey1994large,van2000asymptotic}) show that $\bthetahat\subDR\overset{p}{\to}\bthetabar$ and then use equation (\ref{equ:h:1}) to derive the following expansion:
\begin{align*}
\nhalf(\widehat \btheta\subDR - \bthetabar) =&  \frac{\nhalf}{N}\sum_{i=1}^N  \what_i(1+\balphatilde\trans\bvarphi_i) \bA^{-1}\bx_i \{ y_i -g(\bthetabar \trans\bx_i) \} + o_p(1)\\
=&\nhalf\left[\frac{1}{N}\sum_{i=1}^N  \what_i\bz_i+\left(\frac{1}{N}\sum_{i = 1}^N\bvarphi_i-\frac{1}{N}\sum_{i = 1}^N \what_i \bvarphi_i\right)\trans({\rm E}\bvarphi^{\otimes2})^{-1}\left(\frac{1}{N}\sum_{i=1}^N  \what_i\bvarphi_i\bz_i\trans\right)\right]+ o_p(1),
\end{align*}
where $\bz_i=\bA^{-1}\bx_i \{ y_i -g(\btheta \trans\bx_i) \}$. Note that under Condition \ref{cond:1}, the term $({\rm E}\bvarphi^{\otimes2})^{-1}N^{-1}\sum_{i=1}^N\what_i\bvarphi_i\bz_i\trans$ converges to $\bar\bxi_1:=({\rm E}\bvarphi^{\otimes2})^{-1}{\rm E}\bvarphi\bz\trans$, i.e. the projection coefficients of $\bz_i$ onto $\bvarphi_i$, in probability. Also, by Condition \ref{cond:1} and the classical Central Limit Theorem, we have that 
\[
\frac{1}{N}\sum_{i = 1}^N\bvarphi_i-\frac{1}{N}\sum_{i = 1}^N \what_i \bvarphi_i=O_p(n^{-\frac{1}{2}}+N^{-\frac{1}{2}}).
\]
It then follows that
\[
\nhalf(\widehat \btheta\subDR - \bthetabar) =\nhalf\left\{\frac{1}{N}\sum_{i=1}^N \what_i(\bz_i-\bar\bxi_1\trans\bvarphi_i)+\frac{1}{N}\sum_{i = 1}^N\bar\bxi_1\trans\bvarphi_i \right\}+ o_p(1).
\]
By the definition of $\bthetabar$, ${\rm E}\bz_i=0$ and $\bvarphi_i$ contains $1$.  Then,
\[
{\rm E}\bar\bxi_1\trans\bvarphi_i={\rm E}[\proj(\bz_i\mid\bvarphi_i)]={\rm E}\bz_i-{\rm E}[\proj\supvert(\bz_i\mid\bvarphi_i)]=\bzero,
\]
where the last equality follows from the fact that the expectation of the residual from the linear regression model is zero if the regression basis contains $1$. By the classical Central Limit Theorem, we have $\Ninv\sum_{i = 1}^N\bar\bxi_1\trans\bvarphi_i=O_p(N^{-\frac{1}{2}})=o_p(n^{-\frac{1}{2}})$. Consequently, 
\[
\nhalf(\widehat \btheta\subDR - \bthetabar) =\frac{\nhalf}{N}\sum_{i=1}^N \what_i(\bz_i-\bar\bxi_1\trans\bvarphi_i)+ o_p(1)= \nhalf\sum_{s=1}^S \rho_s \left\{ n_s^{-1} \sum_{i=1}^N V_iI(\Ssc_i = s) \proj\supvert(\be_{\SL i}\mid\bvarphi_i) \right\} + o_p(1),
\]
which, again by the classical Central Limit Theorem, weakly converges to $N(\bzero,\bSigma\subDR)$ where
\[
\bSigma\subDR=\sum_{s=1}^S \rho_s^2 \rho_{1s}^{-1} {\rm E}\left[\{\proj\supvert(\be_{\SL i}\mid\bvarphi_i) \}^{\otimes 2} \mid \Ssc_i = s\right].
\]

For the accuracy measure estimator, $\Dhat\subDR$, we can again use (\ref{equ:h:1}) and follow a similar approach as in Appendix \ref{sec: asym-DSL} to show that $\sup_{\btheta\in\mathbf{\Theta}} | \Dhat\subDR( \btheta) - D( \btheta) | = o_p(1)$. Then we can derive the expansion (following the arguments of Appendix \ref{sec: asym-DSL}) to obtain
\[
\nhalf \{ \Dhat \subDR( \btheta)  - D(\btheta)\} =  \frac{n^{\frac{1}{2}}}{N}\sum_{i = 1}^N\what_i(1+\balphatilde\trans\bvarphi_i) [d\{y_i, \yscrl(\btheta \trans \bx_i)\} - D(\btheta)]  + o_p(1),
\]
for an arbitrary $\btheta\in\bTheta$. Following the arguments for $\nhalf(\widehat \btheta\subDR - \bthetabar)$, we have
\[
\nhalf \{ \Dhat \subDR( \btheta)  - D(\btheta)\}=\nhalf\sum_{s=1}^S \rho_s \left\{ n_s^{-1} \sum_{i=1}^N V_iI(\Ssc_i = s) \proj\supvert(d\{y_i, \yscrl(\btheta \trans \bx_i)\} - D(\btheta)\mid\bvarphi_i) \right\} + o_p(1),
\]
for an arbitrary $\btheta\in\bTheta$. As shown in Appendix \ref{sec: asym-DSL}, $\nhalf \{ \Dhat \subSL ( \btheta)  - D(\btheta)\}$ is stochastically equicontinuous at $\bthetabar$ and $D(\btheta)$ is continuously differentiable at $\bthetabar$. So, we have
\begin{align*}
&\nhalf \{ \Dhat \subDR(\bthetahat\subDR)  -\Dbar\}=\nhalf \{ \Dhat \subDR( \bthetahat \subDR)  - D(\bthetahat \subDR)\} +\nhalf \{ D(\bthetahat \subDR)  - D(\bthetabar)\} \\
=&\nhalf  \sum_{s= 1}^S \rho_s \left[\ninv_s \sum_{i = 1}^N V_i I(\Ssc_i = s) \proj\supvert\left(d(y_i, \Yscbar_i) - D(\bthetabar) + \dot{\bD}(\bthetabar)\trans \be_{\SL i}  \mid\bvarphi_i\right)\right]+o_p(1),
\end{align*}
where for the second equality follows from the delta method, the asymptotic expansion of $\bthetahat\subDR-\bthetabar$, and the fact that $\proj\supvert(\ba_1+\ba_2\mid\bb)=\proj\supvert(\ba_1\mid\bb)+\proj\supvert(\ba_2\mid\bb)$ holds for any linear projector. By Condition \ref{cond:1} and the classical Central Limit Theorem, $\nhalf \{ \Dhat \subDR(\bthetahat\subDR)  -\Dbar\}$ converges weakly to $N(0,\sigma^2\subDR)$ where
$$\sigma^2\subDR =  \sum_{s= 1}^S \rho_s^2\rho_{1s}^{-1}  {\rm E}\left[ \left\{ \proj\supvert\left(d(y_i, \Yscbar_i)   - D(\bthetabar) + \dot{\bD}(\bthetabar)\trans \be_{\SL i} \mid\bvarphi_i\right)\right\}^2  \mid \Ssc_i = s \right].$$
These arguments justify Proposition \ref{prop:1}.

}

\newpage

\newpage

\section{Simulation Results: Nonlinear Effects}\label{sec:sup:pc}

%{\blue Molei, I updated this to match the notation in the letter to the editor.  Can you confirm it accurately describes your simulation as written?}
%{\red Molei: I just checked and have not found problem.}

%\tcomm{first describe the PC strategy concisely}
%\tcomm{then introduce the additional simulation setting}

Here we analyze nonlinear transformations of the original covariates, denoted by $\bZ$, in the outcome regression model to achieve improved prediction performance relative to the simple linear model.  We consider the leading principal components (PCs) of $\bZ$ and $\bPsi(\bZ)$ where $\bPsi(\cdot)$ is a vector of nonlinear transformation functions.  We evaluate this approach under the three settings considered in the main text (i) ($\Msc_{\mbox{\tiny correct}}$, $\Isc_{\mbox{\tiny correct}}$), (ii) ($\Msc_{\mbox{\tiny incorrect}}$, $\Isc_{\mbox{\tiny correct}}$), (iii) ($\Msc_{\mbox{\tiny incorrect}}$, $\Isc_{\mbox{\tiny incorrect}}$), and (iv) an additional Gaussian mixture (GM) setting where the effect of  $\bZ$ is highly nonlinear in $y$.  Specifically, in the GM setting, 
$$y \sim \mbox{Bernoulli}(0.5), \quad  x\suborik = w_k + 0.12w_k^{3I(k \in \{3,4,7,8\})}, \quad \bw=(w_1,\ldots,w_{10})\trans \mid y  \sim N(y\bmu,\bSigma_y),$$
$$\bSigma_0 = [0.2^{|k-k'|}]_{k=1,...,10}^{k'=1,...,10}, \quad \bSigma_1 = \bSigma_0 + [0.3^{|k-k'|} -0.4I(k=k')+0.2I(k\ne k')]_{k=1,...,10}^{k'=1,...,10},$$
$\bmu=(0.2,-0.2,0.2,-0.2,0.2,-0.2,0.1,-0.1,0,0)\trans$, $\Sbb=I(w_3+\delta_1<0.5)$ where $\delta_1 \sim N(0,1)$. %Under this GM setting, the effects of $\bx\subori$ is highly non-linear on $y$ and hence fitting a logistic regression model with $\bx\subori$ yields sub-optimal prediction performance. We include this additional scenario to compare different strategies to incorporate non-linear effects when the true effects of $\bx\subori$ are highly non-linear. 

 For each setting, we compare the out of sample prediction performance of (1) the proposed GLM with $\bx=\bZ$ estimated via SL or SSL, (2) the proposed GLM based on the first 5 PCs of $\bZ$ and the first 5 PCs of $\bPsi(\bZ)$ where $\bPsi(\bZ)$ contains all quadratic and two-way interaction terms of $\bZ$, (3) the imputation model used in the SSL approach (IMP), and (4) a random forest (RF) model. The SSL approach uses the same sets of basis functions as those in main text under settings (i)--(iii) to augment the predictors for construction of the imputation model. For setting (iv), the basis used for augmentation is the same as that for (iii). For simplicity, we use the apparent estimators for $D$ and $\widecheck\btheta\subSSL$ for SSL estimation of $\btheta$. Table \ref{tab-PC} presents the out of sample performance of the prediction model estimated from the aforementioned methods when $n=200$ and $N=20000$. Table \ref{tab:D:perform:secs2} provides a comparison of the standard errors of various estimators (including the SL, SSL, and DR) of the accuracy parameters for the outcome model constructed with the leading PCs of  $\bZ$ and $\bPsi(\bZ)$.

\newpage

\begin{table}[h]\centering
\caption{{\label{tab:pred} Out of sample Brier score and OMR of the proposed SL and SSL prediction models with $\bX = \bZ$ and  $\bX = \bPsi(\bZ)$ compared with the imputation model used in the SSL approach (IMP) and the random forest (RF) model under settings (i) ($\Msc_{\mbox{\tiny correct}}$, $\Isc_{\mbox{\tiny correct}}$), (ii) ($\Msc_{\mbox{\tiny incorrect}}$, $\Isc_{\mbox{\tiny correct}}$), (iii) ($\Msc_{\mbox{\tiny incorrect}}$, $\Isc_{\mbox{\tiny incorrect}}$) and (iv) the GM setting. %{The number of simulations for each setting is $500$ and the standard errors of all the sample average estimation is smaller than $0.003$.}
}}
\begin{tabular}{c|c|cccc|cc}
\hline
Setting & & \multicolumn{4}{c|}{$\bX =\bZ$} & \multicolumn{2}{c}{$\bX = \bPsi(\bZ)$} \\
\cline{3-8}
& & SL & SSL & IMP & RF & SL & SSL \\%& IMP & RF \\
\hline
(i) & Brier score & 0.100 &0.100 &0.120 &0.115 &0.101 &0.099 \\%& 0.101 & 0.129\\
& OMR &  0.139 &0.140 &0.164 &0.156 &0.141 &0.140 \\ \hline %& 0.144 & 0.174 \\ \hline
(ii) &Brier score &0.162 &0.159 &0.139 &0.155 &0.160 &0.157 \\%& 0.086 & 0.142\\
& OMR & 0.230 &0.227 &0.190 &0.218 &0.226 &0.223 \\ \hline%& 0.119 & 0.196 \\ \hline
(iii) & Brier score & 0.120 &0.117 &0.131 &0.111 &0.115 &0.111 \\%& 0.147 & 0.124\\
& OMR & 0.167 &0.163 &0.180 &0.140 &0.163 &0.160 \\ \hline%& 0.204 & 0.172 \\ \hline
(iv) &Brier score & 0.241 &0.236 &0.459 &0.181 &0.173 &0.171 \\%& 0.200 & 0.160\\
& OMR  & 0.401 &0.394 &0.484 &0.265 &0.254 &0.254 \\ \hline%& 0.299 & 0.235 \\
\end{tabular}\label{tab-PC}
\end{table}

\begin{table}[!htbp]
    \caption{{\label{tab:D:perform:secs2} The 100$\times$ESE of $\Dhat\subSL^w$, $\Dhat\subSSL^w$ and $\Dhat\subDR^w$ of the PC basis outcome model under (i) ($\Msc_{\mbox{\tiny correct}}$, $\Isc_{\mbox{\tiny correct}}$), (ii)  ($\Msc_{\mbox{\tiny incorrect}}$, $\Isc_{\mbox{\tiny correct}}$), (iii) ($\Msc_{\mbox{\tiny incorrect}}$, $\Isc_{\mbox{\tiny incorrect}}$) and (iv) the GM setting with the outcome model constructed with  $\bX = \bPsi(\bZ)$. Also included are the relative efficiency (denoted by RE) of the SSL and DR approaches to SL.}}
      \centering
      \begin{tabular}{c |c|cc|cc|| c|cc|cc|}
  \hline
 Setting & \multicolumn{5}{c||}{OMR} & \multicolumn{5}{c|}{Brier score} \\ 
 \cline{2-11}
 & \multicolumn{1}{c|}{$\Dhat\subSL^w$}& \multicolumn{1}{c}{$\Dhat\subSSL^w$}& \multicolumn{1}{c|}{RE}& \multicolumn{1}{c}{$\Dhat\subDR^w$} & \multicolumn{1}{c||}{RE} 
 & \multicolumn{1}{c|}{$\Dhat\subSL^w$}& \multicolumn{1}{c}{$\Dhat\subSSL^w$} & \multicolumn{1}{c|}{RE} & \multicolumn{1}{c}{$\Dhat\subDR^w$} & \multicolumn{1}{c|}{RE} \\
  \hline
(i)   &2.58 &2.31 &1.25 &2.60 &0.99 &1.51 &1.41 &1.14 &1.57 &0.92 \\
(ii)  &3.08 &2.61 &1.38 &3.23 &0.91 &1.74 &1.54 &1.28 &1.90 &0.84 \\
(iii) &3.02 &2.78 &1.19 &3.27 &0.86 &1.74 &1.62 &1.16 &1.82 &0.92 \\
(iv)  &2.99 &2.69 &1.24 &3.51 &0.73 &1.26 &1.18 &1.15 &1.68 &0.57 \\
\hline
\end{tabular}
\end{table}

%Note that $\bz$ is actually not unique due to the non-uniqueness of SVD so the underlying true $\btheta$ is not unique if the $\bV_1$ and $\bV_2$ varies for each simulated dataset. So instead of calculating $\bV_1$ and $\bV_2$ at each time, we estimate them with $20000$ unlabelled samples generated beforehand and fix the $\bV_1$ and $\bV_2$ in following analysis. This is reasonable since $20000$ samples is of large enough size to give a nearly perfect estimation for the $\bV_1$ and $\bV_2$. 

%Similar to the Table 1 in our manuscript, Table \ref{table:new:betas} here summarizes the estimation performance of the methods. It demonstrates that for our newly constructed outcome model, the SSL method still has more efficient estimator than SL and DR.

\end{document}